\newcommand{\TODO}[1]{\textbf{\textsc{\textcolor{red}{(TODO: #1)}}}}
\newcommand{\commentout}[1]{}
\newcommand{\secref}[1]{Section~\ref{#1}}
\newcommand{\figref}[1]{Fig.~\ref{#1}}
\newcommand{\var}{x}
\newcommand{\Var}{X}
\newcommand{\varvec}{\mathbf{\var}}
\newcommand{\potfun}{\Theta}
\newcommand{\potval}{\theta}
\newcommand{\graph}{\mathcal{G}}
\newcommand{\nodes}{\mathcal{V}}
\newcommand{\edges}{\mathcal{E}}
\newcommand{\vx}{\mathbf{x}}
\newcommand{\edgesgc}{\mathcal{E}^{GC}}
\newcommand{\nodesgc}{\mathcal{V}^{GC}}
\newcommand{\graphgc}{\mathcal{GC}}
\newcommand{\true}[1]{\mathbf{1}[#1]}
\newcommand{\graphfg}{\mathcal{FG}}
\newcommand{\edgesfg}{\mathcal{E}^{FG}}
\newcommand{\gcvar}{v}
\newcommand{\msg}[2]{m_{{#1}\rightarrow{#2}}}
\newcommand{\neighbors}{\mathcal{N}}
\newcommand{\belief}{b}
\newcommand{\ouralg}{APMP}
\newcommand{\inbalance}{edge-calibrated}
\newcommand{\messages}{\mathbf{m}}
\newcommand{\augpath}{\mathcal{T}}
\newcommand{\path}{\augpath}
\newcommand{\paths}{\mathcal{P}}
\newcommand{\enU}{U}
\newcommand{\enR}{R}
\newcommand{\supu}{(U)}
\newcommand{\supr}{(R)}
\newcommand{\potfunq}{\potfun^{(U)}}
\newcommand{\potfunu}{\potfun^{(U)}}
\newcommand{\potfunr}{\potfun^{(R)}}
\newcommand{\potvalr}{\potval^{(R)}}
\newcommand{\potvalu}{\potval^{(U)}}
\newtheorem{lem}{Lemma}
\newtheorem{thm}{Theorem}
\newtheorem{cor}{Corollary}
\newtheorem{deff}{Definition}
\title{Interpreting Graph Cuts as a Max-Product Algorithm} 
\author{ {\bf Daniel Tarlow, Inmar E. Givoni, Richard S. Zemel, Brendan J. Frey} \\  
University of Toronto\\ 
Toronto, ON  M5S 3G4  \\ 
\url{ \{dtarlow@cs, inmar@psi, zemel@cs, frey@psi\}.toronto.edu}
} 
\begin{document} 
 
\maketitle

% If your paper is accepted and the title of your paper is very long,
% the style will print as headings an error message. Use the following
% command to supply a shorter title of your paper so that it can be
% used as headings.
%
%\runningtitle{I use this title instead because the last one was very long}

% If your paper is accepted and the number of authors is large, the
% style will print as headings an error message. Use the following
% command to supply a shorter version of the authors names so that
% they can be used as headings (for example, use only the surnames)
%
%\runningauthor{Surname 1, Surname 2, Surname 3, ...., Surname n}

\begin{abstract}
The maximum a posteriori (MAP) configuration of binary variable models with submodular
graph-structured energy functions can be found efficiently and exactly by graph cuts. 
Max-product belief propagation (MP) has been shown to be suboptimal on this class 
of energy functions by a canonical counterexample where MP converges to a suboptimal 
fixed point \shortcite{Kulesza08}.

In this work, we show that under a particular scheduling and damping scheme, MP 
is equivalent to graph cuts, and thus optimal.  We explain the apparent contradiction 
by  showing that with proper scheduling and damping, MP 
always converges to an optimal fixed point.  Thus, 
the canonical counterexample only shows 
the suboptimality of MP with a particular suboptimal choice of schedule and damping.  With 
proper choices, MP is optimal.
\end{abstract}

\section{Introduction}
Maximum a posteriori (MAP) inference in probabilistic graphical models is a fundamental machine learning
task with applications to fields such as computer vision and computational biology.
There are various algorithms designed to solve MAP problems, each
providing different problem-dependent theoretical guarantees and empirical performance.
  It is often difficult to choose which
 algorithm to use in a particular application.  In some cases, however, there is
 a ``gold-standard'' algorithm that clearly outperforms competing algorithms, such as
 the case of graph cuts for binary submodular problems.\footnote{From here on, we drop
 ``graph-structured'' and refer to the energy functions just as binary submodular.  Unless
 explicitly specified otherwise, though, we
 always assume that energies are defined on a simple graph.} 
A popular and more general, but also occasionally erratic, algorithm is
max-product belief propagation (MP).

Our aim in this work is to 
establish the precise relationship between MP and graph cuts, namely that graph cuts is a special case of MP.
To do so, we map analogous aspects of the algorithms to each other:\
message scheduling in MP to selecting augmenting paths in graph cuts; passing messages on a chain to pushing flow through an augmenting path; message damping to limiting flow to be the bottleneck capacity of an augmenting path;
and letting messages reinforce themselves on a loopy graph to the graph cuts 
connected components decoding scheme.

This equivalence implies strong statements regarding the optimality of MP
on binary submodular energies defined on graphs with arbitrary topology, which
 may appear to contradict much of what is known about MP---all empirical 
results showing MP to be suboptimal on binary submodular problems, and the theoretical results 
of \shortciteA{Kulesza08, WaiJor08} which show analytically that MP converges to 
the wrong solution.  
We analyze this issue in depth and show there is no contradiction, but implicit in the previous analysis
and experiments is a suboptimal choice of scheduling and damping, leading the algorithms to converge
to bad fixed points.
Our results give a more complete characterization of these issues, 
showing (a) there always exists an optimal fixed point for binary submodular energy functions,
 and (b) with proper scheduling and damping 
MP can always be made to converge to an optimal fixed point.

%Note that (a) can alternatively be shown to be a consequence of the analysis
%of the zero temperature limit of convexified sum-product
%in \shortciteA{Weiss07} along with the well-known fact that the standard linear
%program relaxation is tight for binary submodular energies, so our contribution with respect to (a) is 
%minor:\ an alternative, more direct existence proof.  We believe the constructive nature of (b) to be
% novel and significant, particularly due to the fact that the construction comes from running ordinary max-product.

The \emph{existence} of the optimal MP fixed point can alternatively be derived as a 
consequence of the analysis of the zero temperature limit of convexified 
sum-product in \shortciteA{Weiss07} along with the well-known fact 
that the standard linear program relaxation is tight for binary submodular 
energies. Our proof of the existence of the fixed point, then,  is an alternative, 
more direct proof.  However, we believe our \emph{construction} of the fixed point to 
be novel and significant, particularly due to the fact that the construction 
comes from simply running ordinary max-product within the standard algorithmic degrees 
of freedom, namely damping and scheduling.

Our analysis is significant for many reasons.  Two of the most important are as follows.
First, it shows that previous constructions of MP fixed points for binary submodular
energy functions critically depend on the particular schedule, damping, and initialization.  
Though there exist suboptimal fixed points, there also
always exist optimal fixed points, and with proper care, the bad fixed points can always
be avoided.  Second, it simplifies the space of MAP inference algorithms, 
making explicit the connection between two popular and seemingly 
distinct algorithms.  The mapping improves our understanding of message scheduling and
gives insight into how graph cut-like algorithms might be developed for more general
settings.

\section{Background and Notation}\label{sec:bg_notation}

We are interested in finding maximizing assignments of distributions $P(\varvec) \propto e^{-E(\varvec)}$ where $\varvec = \{\var_1, \ldots, \var_M\} \in \{0, 1\}^M$.
We can equivalently seek to minimize the energy  $E$, and for the sake of exposition we choose to present the analysis in terms of energies\footnote{This makes ``max-product'' a bit of a misnomer, since in reality, we will be analyzing min-sum belief propagation.  The two are equivalent, however, so we will use ``max-product'' (MP) throughout, and it should be clear from context when we mean ``min-sum''.}.

{\bf Binary Submodular Energies: }
We restrict our attention to submodular energy functions over binary variables.
\emph{Graph-structured} energy functions are defined on a simple graph, $\graph = (\nodes, \edges)$, where each node is associated with a variable $\var$. Potential functions $\potfun_{i}$ and $\potfun_{ij}$ map configurations of individual variables and pairs of variables whose corresponding nodes share an edge, respectively, to real values.
We write this energy function as
\begin{eqnarray}
E(\vx;\potfun) & = &\sum_{i \in \nodes} \potfun_{i}(x_i) + \sum_{ij \in \edges} \potfun_{ij}(x_i, x_j) \hbox{.} \label{eq:main_energy}
\end{eqnarray}
%
%where we define $ [\potfun_i(0), \potfun_i(1)] = [\potval_i^0, \potval_i^1]$, and		
%\begin{eqnarray*}
%\left[ \begin{array}{c}
%					\potfun_i(0)  \\
%					\potfun_i(1)
%				  \end{array} \right]
%				  & = &  \left[ \begin{array}{c}
%					\potval_i^0  \\
%					\potval_i^1
%				  \end{array} \right]
%\end{eqnarray*}
%\begin{eqnarray*}
%\left[ \begin{array}{cc}
%					\potfun_{ij}(0, 0) & \potfun_{ij}(0, 1) \\
%					\potfun_{ij}(1, 0) & \potfun_{ij}(1, 1)
%				  \end{array} \right]
%				   & = &  \left[ \begin{array}{cc}
%					\potval_{ij}^{00} & \potval_{ij}^{01}  \\
%					\potval_{ij}^{10} & \potval_{ij}^{11}
%				  \end{array} \right] \hbox{.}	
%\end{eqnarray*}
%
%We refer to $\potfun_{i}(x_i)$ as singleton, or unary potentials, and to $\potfun_{ij}(x_i,x_j)$ as pairwise potentials, and we use $\potfun$ to denote the set of potentials $\{\potfun_i, \potfun_{ij}\}$.
$E$ is said to be \emph{submodular} if and only if for all $ij \in \edges $, 
$\potfun_{ij}(0,0) + \potfun_{ij}(1,1) \le \potfun_{ij}(0,1) + \potfun_{ij}(1,0)$. %, i.e. iff each pairwise potential is submodular.
We use the shorthand notation $[\potval_i^0, \potval_i^1]=[\potfun_i(0), \potfun_i(1)]$.

When $E$ is submodular, it is always possible to represent all pairwise potentials in the canonical form
\begin{eqnarray*}
\left[ \begin{array}{cc}
					\potfun_{ij}(0, 0) & \potfun_{ij}(0, 1) \\
					\potfun_{ij}(1, 0) & \potfun_{ij}(1, 1)
				  \end{array} \right]
				   & = &  \left[ \begin{array}{cc}
					0 & \potval_{ij}^{01}  \\
					\potval_{ij}^{10} & 0
				  \end{array} \right]
\end{eqnarray*}
with $\potval_{ij}^{01}, \potval_{ij}^{10} \ge 0$ without changing the energy of any assignment.
%This is achieved by appropriately modifying the unary potentials $\potfun_{i}(\var_i),\potfun_{j}(\var_j)$. 
We assume that energies are expressed in this form throughout.\footnote{See \shortciteA{Kolmogorov02} for a more thorough discussion of representational matters.}  In our notation, $\potval_{ij}^{01}$ and $\potval_{ji}^{10}$ refer to the same quantity.

\subsection{Graph Cuts}

Graph cuts is a well-known algorithm for minimizing graph-structured binary submodular energy
functions, which is known to converge to the optimal solution in low-order polynomial time by
transformation into a maximum network flow problem.
%The graph cuts algorithm is an energy minimization procedure whose chief advantage is that it reduces to the well known min-cut/max-flow graph-theoretical problem.
%Given two special vertices (the \emph{source} and the \emph{sink}) in a directed weighted graph, the problem of finding a cut in the graph (a set of edges that must be removed so that there is no path between the source and the sink) with minimal total edge weights, is equivalent to finding the maximum flow from source to sink supported by a network whose flow capacities correspond to edge weights.
%This is the well-known max-flow min-cut theorem \shortcite{Cormen01}.
%Furthermore the solution to these equivalent problems can be found exactly in polynomial time.
\commentout{
\shortcite{Grieg89}, were the first to apply graph cuts in computer vision, showing the reduction from the binary image denoising problem to the maximum network flow problem in an appropriately constructed graph. This initiated the use of known exact polynomial time combinatorial algorithms for finding MAP assignments.  Since then, graph cuts have been popular in machine learning and computer vision. (See \shortcite{Boykov04,Boykov06} and references therein.)

\textbf{Graph Construction:}
}
The energy function is converted into 
a weighted directed graph $\graphgc = (\nodesgc, \edgesgc, C)$,
where $C$ is an edge function that maps each directed edge $(i,j) \in \edgesgc$ to a non-negative real number
representing the initial capacity of the edge.
One \emph{non-terminal} node $\gcvar_i \in \nodesgc$ is constructed for each
variable $\var_i \in \nodes$, and two \emph{terminal} nodes, a source $s$, and a sink $t$, are added to $\nodesgc$.
Edges in $\edges$ are mapped to two edges in $\edgesgc$, one per direction.  The initial capacity
of the directed edge $(i,j) \in \edgesgc$ is set to $\potval_{ij}^{01}$, and
the initial capacity of the directed edge $(j, i) \in \edgesgc$ is set to $\potval_{ij}^{10}$.
In addition, directed edges are created from the source node to every non-terminal node, and from every non-terminal node to the sink node. The initial capacity of the terminal edge from $s$ to $\gcvar_i$ is set to be $\potval_{i}^1$, and the initial capacity of the terminal edge from $\gcvar_i$ to $t$ is set to be $\potval_{i}^0$.  
We assume that the energy function has been normalized so that one of the initial terminal edge capacities is 0 for every non-terminal node.

\commentout{
\textbf{Max-Flow Algorithms:}
After constructing the appropriate graph, the core of graph cuts is a maximum network flow problem, which is a well-studied
combinatorial optimization problem with history dating back to 1956 \shortcite{Ford56}.
%The goal of the problem is to transport as much {flow} from a special {source}
%node to a special {sink} node, while never violating capacity constraints of any directed edge.
The earliest algorithm for solving max-flow is due to \shortcite{Ford56}, which works by repeatedly finding any flow-bearing path from the {source} node to the {sink} node, pushing as much flow as the current path will allow.
% and modifying a residual graph to allow for pushing flow backward through used edges if needed.
For this reason, the algorithm is referred to as an \emph{augmenting paths} algorithm.
}

\textbf{Residual Graph:}
Throughout the course of an augmenting paths-based max-flow algorithm, \emph{residual capacities} 
(or equivalently hereafter, \emph{capacities}) are maintained
for each directed edge.  The residual capacity is the amount of flow that can
be pushed through an edge either by using unused capacity or by reversing flow
that has been pushed in the opposite direction.  Given a flow of $f_{ij}$ from $\gcvar_i$ to $\gcvar_j$ via
edge $(i, j)$ and a flow of $f_{ji}$ from $\gcvar_j$ to $\gcvar_i$ via
edge $(j,i)$, the residual capacity is  $r_{ij} = \potval_{ij}^{10} - f_{ij} + f_{ji}$.
An \emph{augmenting path} is a path from $s$ to $t$ through the residual graph
that has positive capacity.  We call the minimum residual capacity of any edge along an augmenting
path the \emph{bottleneck capacity} for the augmenting path. %(note that an augmenting path exists only if the bottleneck flow is greater than 0).

%\textbf{Runtime and Other Max-Flow Algorithms: }
%For integer edge capacities, the Ford-Fulkerson algorithm runs in time O($| \edges | F$) where $F$ is the maximum flow in the graph.  However, the algorithm is not guaranteed to terminate if there are irrational edge capacities. \shortciteA{Edmonds72} addresses this shortcoming, showing that if each augmenting path is chosen as the shortest flow-bearing path from source to sink, then the algorithm always terminates and runs in time O($|\nodes| | \edges |^2$), which is notably independent of $F$.  There have been many other variants
%proposed for augmenting paths algorithms, including, for example, the bidirectional search algorithm of \shortciteA{Boykov04}, which has been shown empirically to work well on the types of graphs typically encountered in computer vision problems.
%In addition, there are \emph{push-relabel} algorithms and linear programming-based algorithms for solving max-flow problems.  We only use augmenting paths algorithms in the sequel, so we refer the reader to \shortciteA{Schrijver03} for more details on the other classes of max-flow algorithms.

{\bf Two Phases of Graph Cuts}:
Augmenting path algorithms for graph cuts proceed in two phases.  
In Phase 1, flow is pushed through augmenting paths until
all source-connected nodes (i.e., those with an edge from source to node with positive capacity)
 are separated from all sink-connected nodes (i.e., those with an edge to the sink with positive capacity).
   In Phase 2, to determine
assignments, a
connected components algorithm is run to find all nodes that are reachable from the source
and sink, respectively.

{\bf Phase 1 -- Reparametrization}:
\commentout{
\begin{figure}[tb]
\centering
\subfigure[]{\label{fig:gc_rep1}\includegraphics[width=.2\columnwidth]{gc_reparametrization.pdf}}
\hspace{35mm}
\subfigure[]{\label{fig:fg_rep1}\includegraphics[width=.2\columnwidth]{fg_reparametrization1.pdf}}\\
\subfigure[]{\label{fig:gc_rep2}\includegraphics[width=.2\columnwidth]{gc_reparametrization2.pdf}}
\hspace{35mm}
\subfigure[]{\label{fig:fg_rep2}\includegraphics[width=.2\columnwidth]{fg_reparametrization2.pdf}}
\caption{
Pushing flow through (reparameterizing) an augmenting path.  Here, $f = 1$.
\subref{fig:gc_rep1} and \subref{fig:gc_rep2} show a representation of $\graphgc$, while \subref{fig:fg_rep1} and \subref{fig:fg_rep2} show unary
and pairwise potentials in a factor graph representation of each set
of edge capacities.
\subref{fig:gc_rep1},\subref{fig:fg_rep1} Before, there is a capacity of 1 on each edge along the path, and no capacity
in the backward direction.  \subref{fig:gc_rep2},\subref{fig:fg_rep2} After, capacity has been used up
in all forward direction edges, and the backward direction capacity has
been incremented by 1, which effectively lets us undo the decision to push
flow across the edge $(1, 2)$ if we later find a path that could use $(2, 1)$.
}
\label{fig:gc_reparametrization}
\end{figure}
}
The first phase %in solving a graph cut problem using an augmenting paths-based algorithm 
can be
viewed as reparameterizing the energy function, moving mass from
unary and pairwise potentials to other pairwise potentials and from
unary potentials to a constant potential~\shortcite{Kohli07b}. %,Kolmogorov07}.
%Coefficients of energy terms are all positive, so
The constant potential is a lower bound on the optimum. % optimal energy.

We begin by rewriting \eqref{eq:main_energy} as
\begin{align}
&E(\vx;\potfun)  =    \sum_{i \in \nodes} \theta^0_{i} (1 - x_i) + \sum_{i \in \nodes} \theta^1_{i} x_i 
+  \sum_{ij \in \edges} \theta^{01}_{ij} (1 - x_i) x_j \nonumber \\
 &   \;\;\; + \sum_{ij \in \edges} \theta^{10}_{ij} x_i (1 - x_j ) + \potval_{const} \label{eq:gc_energy_split}
\hbox{,}
\end{align}
%
%where $\theta_{const} = 0$, $ \theta^0_{i} = \theta_i(0)$, $ \theta^1_{i} = \theta_i(1)$, and
%$\theta^{10}_{ij} = \theta^{01}_{ij} = \theta_{ij}$.
where we added a constant term $\potval_{const}$, initially set to 0, to $E(\varvec;\potfun)$ without
changing the energy. % of any assignment.

%Instead of describing graph-cuts in terms of pushing flow through a network, we can view
%augmenting paths as reparameterizing the energy function.
A reparametrization
is a change in potentials from $\potfun$ to $\tilde \potfun$ such
that $E(\varvec; \potfun) = E(\varvec; \tilde \potfun)$
for all assignments $\varvec$.
%See \figref{fig:gc_reparametrization} for example reparametrizations
%in graph cuts and factor graph terms.
%\TODO{Brendan - to motivate this, perhaps say something about paths with zeros and ones there? Inmar - I tried but it requires a lot of writing the way I'm thinking about it. so maybe you can give it a shot}
%The reparametrization used by graph cuts can be understood in
%algebraic terms by noting that
%%\begin{align*}
%%x_1 + (1 - x_1) x_2  + (1 - x_{2})
%%& =  x_1 + x_2 - x_1 x_2 + 1 - x_2 \\
%%& =  x_1 - x_1 x_2 + 1\\
%%& =  x_1 (1 - x_2)+ 1 \hbox{.}
%%\end{align*}
%%%\begin{eqnarray*}
%%%\underbrace{x_1}_{\potval_1^1} + \underbrace{(1 - x_1) \cdot x_2}_{\potval_{12}^{01}}  + \underbrace{(1 - x_{2}}_{\potval_2^0})
%%%& = & x_1 + x_2 - x_1 x_2 + 1 - x_2 \\
%%%& = & x_1 - x_1 x_2 + 1\\
%%%& = & \underbrace{x_1 (1 - x_2)}_{\potval_{12}^{10}} + \underbrace{1}_{\potval_{const}}
%%%\end{eqnarray*}
%%%
%%or more generally for a path of length N,
%\begin{align*}
%& \!\! x_1 + (1 - x_1) x_2  + \ldots + (1 - x_{N-1}) x_N + (1 - x_N) \\
%% = &\; x_1 + x_2 - x_1 x_2 + \ldots + x_N - x_{N-1} x_N + 1 - x_N \\
% = &\; x_1 (1 - x_2) + \ldots + x_{N-1} (1  - x_N) + 1 \hbox{.}
%\end{align*}
%
Pushing flow corresponds to factoring out a constant, $f$, from some subset
of terms and applying the following algebraic identity to terms from \eqref{eq:gc_energy_split}:
\begin{align*}
& f \cdot \left[ x_1 + (1 - x_1) x_2  + \ldots + (1 - x_{N-1}) x_N + (1 - x_N) \right] \\
 =&\; f \cdot \left[ x_1 (1 - x_2) + \ldots + x_{N-1} (1  - x_N) + 1 \right] \hbox{.}
\end{align*}
By ensuring that $f$ is positive (choosing paths that can sustain flow),
the constant potential can be made to grow at each iteration.
When no paths exist with
nonzero $f$, $\theta_{const}$ is the optimal energy value \shortcite{Ford56}.

In terms of the individual coefficients, pushing flow through a path
corresponds to reparameterizing entries of the potentials %of edges
 on an augmenting path:
\begin{eqnarray}
\theta^1_1 & := & \theta^1_1 - f  \label{eq:gc_reparametrization_start}\\
\theta^0_N & := & \theta^0_N - f \\
\theta^{01}_{ij} & := & \theta^{01}_{ij} - f \qquad \hbox{ for all $ij$ on path} \\
\theta^{10}_{ij} & := & \theta^{10}_{ij} + f \qquad \hbox{ for all $ij$ on path} \\
\theta_{const} & := & \theta_{const} + f \hbox{.} \label{eq:gc_reparametrization_end}
\end{eqnarray}

{\bf Phase 2 -- Connected Components}:
After no more paths can be found, most nodes will not be directly connected
to the source or the sink by an edge that has positive capacity in the residual graph.
In order to determine assignments, information must be propagated
from nodes that are directly connected to a terminal via positive capacity edges via non-terminal
nodes.  A connected components procedure is run, and any node that is (possibly indirectly)
connected to the sink is assigned label 0, and any node that is (possibly indirectly)
connected to the source is given label 1.  Nodes that are not connected to either terminal can
be given an arbitrary label without changing the energy of the configuration, so long as within a 
connected component the labels are consistent.
In practice, terminal-disconnected nodes are typically given label 0.

\subsection{Strict Max-Product Belief Propagation}\label{sec:mpbp}

Strict max-product belief propagation (Strict MP) is an iterative, local, message passing algorithm that can be used to find the MAP configuration of a distribution specified by a tree-structured graphical model.  The algorithm can equally be applied to loopy graphs. Employing the energy function notation, the algorithm is usually referred to as  min-sum. Using the factor-graph representation \shortcite{Kschischang01}, the iterative updates on simple graph-structured energies involves sending messages from factors to variables
\begin{align}
\msg{\potfun_{i}}{\var_i}(\var_i) & =
		\potfun_{i}(\var_i) \label{eq:strict_mp_singletons}\\
	\msg{\potfun_{ij}}{\var_j}(\var_j) &= \! \min_{\var_i}\!
	\left[
		\potfun_{ij}(\var_i, \var_j) +  \msg{\var_{i}}{\potfun_{ij}}(\var_{i})
	\right] \label{eq:strict_mp}
\end{align}
%\begin{eqnarray}
%\msg{\potfun_c}{\var_i}(\var_i) = \!\! \min_{\varvec_c \backslash \{\var_i\}}\!
%	\left[
%		\potfun_c(\varvec_c) + \!\!\!\!\!\!\!\!\!\! \sum_{\var_{i'} \in \neighbors(\potfun_c) \backslash \{\var_i\}} \!\!\!\!\!\!\!\! \msg{\var_{i'}}{\potfun_c}(\var_{i'})
%	\right]
%\end{eqnarray}
%
and from variables to factors,
$\msg{\var_i}{\potfun_{ij}}(\var_i) = \sum_{i' \in \neighbors(i) \backslash \{j\}} \msg{\potfun_{i'i}}{\var_i}(\var_{i})$,
%
%and from variables to factors,
%$\msg{\var_i}{\potfun_c}(\var_i) = \sum_{\potfun_{c'} \in \neighbors(\var_i) \backslash \potfun_c} \msg{\potfun_{c'}}{i}(\var_{i})$,
%
%\begin{eqnarray*}
%\msg{\var_i}{\potfun_c}(\var_i) = \sum_{\potfun_{c'} \in \neighbors(\var_i) \backslash \potfun_c} \msg{\potfun_{c'}}{i}(\var_{i}) \hbox{,}
%\end{eqnarray*}
%
%\begin{eqnarray*}
%\msg{\potfun_c}{\var_i}(\var_i) & = & \lambda \msg{\potfun_c}{\var_i}^{\hbox{old}}(\var_i)
%	+ (1- \lambda) \msg{\potfun_c}{\var_i}^{\hbox{new}}(\var_i)
%	\end{eqnarray*}
%
where $\neighbors(i)$ is the set of neighbor variables of
$i$ in $\graph$.
In Strict MP, we require that all messages are updated in parallel in each iteration.
%We use the shorthand notation $c$ to denote both unary and pairwise indices.
% index $i$ of unary potentials (variables) %and associated variable
%and the indices $ij$ of pairwise potentials (variables).
%and associated variables.
Assignments are typically decoded from beliefs as
$\hat \var_i = \arg \min_{\var_i} \belief_i(\var_i) \hbox{,}$
where
$\belief_i(\var_i) = \potfun_i(\var_i) + \sum_{j \in \neighbors(i)} \msg{\potfun_{ji}}{\var_i}(\var_{i})$.
Pairwise beliefs are defined as
$\belief_{ij}(\var_i,\var_j) = \potfun_{ij}(\var_i,\var_j) + \msg{\var_i}{\potfun_{ij}}(\var_{i})+ \msg{\var_j}{\potfun_{ij}}(\var_{j}) \hbox{.}$%
\footnote{Note that we only need message values to be correct up to a constant,
so 
%we can normalize by adding or subtracting constants to all messages
%without changing
%the execution of the algorithm. % or the value of the decoded assignment.
%Indeed, 
it is
 common practice to normalize messages and beliefs so that
the minimum entry in a message or belief vector is 0.}

\commentout{
The graph cut graph, $\graphgc$, and the factor graph representation, $\graphfg$,
of the energy function are defined over similar variables and potentials,
so it is not surprising that there is a direct mapping between quantities
in the factor graph representation of the energy function and the graph cut representation
of the energy function.
}

\subsection{Max-Product Belief Propagation}\label{sec:mpbp_practical}
In practice, Strict MP does not converge well, so a combination of damping
and asynchronous message passing schemes is typically used.  Thus,
MP is actually a family of algorithms.  
We formally define the family as follows:
\begin{deff}[Max-Product Belief Propagation]
MP is a message passing algorithm that computes messages as in \eqref{eq:strict_mp}.
Messages may be initialized arbitrarily, scheduled in any (possibly dynamic) ordering,
and damped in any (possibly dynamic) manner, so long as the fixed points of
the algorithm are the same as the fixed points of Strict MP.
\end{deff}
We believe this definition to be broad enough to contain most algorithms that
are considered to be max-product, yet restrictive enough to exclude e.g., fundamentally
 different linear program-based algorithms like tree-reweighted max-product.  

%
%First, despite much recent progress in linear program-based
%message passing, MP---while often unpredictable---remains one of the best performing algorithm in many important
%problem settings.  See e.g., \shortcite{Kolmogorov06TRWS, Frey07, Rother09}.
%The most commonly presented version of MP is what we term \emph{Strict MP}, which uses
%synchronous message updates and no message damping.  However, in practice, Strict MP
%is rarely used.
There has been much work on scheduling messages, including a recent string of 
work on dynamic asynchronous scheduling \shortcite{Elidan06, Sutton07}, which shows that adaptive
schedules can lead to improved convergence. % and better performance for belief propagation.
An equally important practical concern %that can affect convergence %when running belief propagation
is message damping.  \shortciteA{Dueck10}, for example, discusses the importance
of damping in detail with respect to using MP for exemplar-based clustering (affinity propagation).
Our definition of MP includes these variants.

%Despite their practical import, there is little comprehensive theoretical understanding
%about how to choose a scheduling and damping scheme.
%Up to this point, both damping and scheduling have mostly been viewed as useful but heuristic
%adjustments needed to improve performance of belief propagation.
%In this work, we give a more rigorous treatment, showing that one particular choice of 
%scheduling and damping leads to optimal inference on problems where strict MP does not.

\subsection{Augmenting Path = Chain Subgraph}\label{sec:map_aug_path_to_chain}
Our scheduling makes use of dynamically chosen chains, which are
analogous to augmenting paths.
%The particular chains that we use in our schedules will be analogous to augmenting
%paths.
%The connection between an augmenting path in the graph cut formulation
%and a chain in the factor graph formulation is intuitive, but it needs to be made precise.
%Since the graph cut graph $\graphgc$ of the energy function is, by construction, a representation of the energy function, there is a simple mapping between an augmenting path in the graph cut graph, and a set of variables and potential values in the energy graph.
%Recall that an augmenting path is a path in the residual graph from source $s$ to sink $t$, traversing a set of directed edges. % $\{(i,j)\}$.
Formally, an augmenting path is a sequence of nodes
%\begin{equation*}
%\mathcal{T} = s \rightarrow x_{\mathcal{T}_1} \rightarrow x_{\mathcal{T}_2} \rightarrow \ldots \rightarrow x_{\mathcal{T}_{n-1}}\rightarrow x_{\mathcal{T}_n} \rightarrow t
%\end{equation*}
\begin{equation}
\mathcal{T} = (s, \gcvar_{\mathcal{T}_1}, \gcvar_{\mathcal{T}_2}, \ldots ,  \gcvar_{\mathcal{T}_{n-1}}, \gcvar_{\mathcal{T}_n},  t)
\label{eq:aug_path}
\end{equation}
where a nonzero amount of flow can be pushed through the path.  It will be useful to refer to
 $\edgesgc(\augpath)$ as the set of edges encountered along $\augpath$.

\commentout{
In the factor graph representation, we can define the variables  $\varvec_\mathcal{T} \subseteq \varvec$ corresponding to non-terminal nodes in $\augpath$.
In addition, we can define potentials corresponding to the edges $\edgesgc(\augpath)$ and entries of these
potentials that will be useful in the sequel.
% $\potfun_\mathcal{T}$ and  a subset of potential values $\potval_\mathcal{T}$.
Formally,
\begin{align}
\varvec_\mathcal{T}  = & \{x_{\mathcal{T}_1},\ldots,x_{\mathcal{T}_n}\} \nonumber \\
\potfun_\mathcal{T}  = & \potfun_{\mathcal{T}_1} \cup \{\potfun_{\mathcal{T}_i,\mathcal{T}_{i+1}}\}_{i=1}^{n-1} \cup \potfun_{\mathcal{T}_n} \nonumber \\
\potval_\mathcal{T}  = & \potval_{\mathcal{T}_1}^1 \cup \{\potval_{\mathcal{T}_i,\mathcal{T}_{i+1}}^{01}\}_{i=1}^{n-1} \cup \potval_{\mathcal{T}_n}^0 \hbox{.} \label{eq:augpath_quants}
\end{align}
}

%We define useful mappings between the graph cuts and the factor graph representation.
Let $\varvec_\mathcal{T} \subseteq \varvec$ be the variables corresponding to non-terminal nodes in $\augpath$. The potentials corresponding to the edges $\edgesgc(\augpath)$ and the entries of these
potentials are denoted by
$\potfun_\mathcal{T}$ and  a subset of potential values $\potval_\mathcal{T}$.
Formally,
\begin{align}
\varvec_\mathcal{T}  = & \{x_{\mathcal{T}_1},\ldots,x_{\mathcal{T}_n}\} \nonumber \\
\potfun_\mathcal{T}  = & \potfun_{\mathcal{T}_1} \cup \{\potfun_{\mathcal{T}_i,\mathcal{T}_{i+1}}\}_{i=1}^{n-1} \cup \potfun_{\mathcal{T}_n} \nonumber \\
\potval_\mathcal{T}  = & \potval_{\mathcal{T}_1}^1 \cup \{\potval_{\mathcal{T}_i,\mathcal{T}_{i+1}}^{01}\}_{i=1}^{n-1} \cup \potval_{\mathcal{T}_n}^0 \hbox{.} \label{eq:augpath_quants}
\end{align}
Note that there are only two unary potentials on a chain corresponding to an augmenting path, which %are the potentials corresponding 
correspond to terminal edges in $\edgesgc(\augpath)$.  
%Also note that only the $\potval_{ij}^{01}$
%potential entries are included in $\potval_\augpath$.  
It will be useful %in the sequel
to map edges in $\edgesgc(\augpath)$ to edges in the equivalent factor graph representation.  We
use $\edgesfg(\augpath)$ to denote all edges in $\graph$ between potentials in $\potfun_\augpath$ and
variables in $\varvec_\mathcal{T}$.  
%$\edgesfg_{term}(\augpath)$ denotes the subset of edges in $\edgesfg(\augpath)$
%that neighbor a unary variable i.e., those associated with terminal edges, and $\edgesfg_{nonterm}(\augpath)$
%includes all other edges in $\edgesfg(\augpath)$ i.e., those associated with non-terminal edges.
%This completes the structural decomposition associated with an augmenting path.
%We refer to this procedure as $MAP$ in Alg. \ref{alg:main} \TODO{Not really a procedure}.

As an example, an augmenting path $\mathcal{T}= (s, \gcvar_i, \gcvar_j , \gcvar_k , t)$ in the graph cut formulation would be mapped to $\varvec_\mathcal{T}=\{\var_i,\var_j,\var_k\}$, $\potfun_\mathcal{T} =\{ \potfun_i,\potfun_{ij},\potfun_{jk},\potfun_k\},$ and $\potval_\mathcal{T} =\{ \potval_i^1,\potval_{ij}^{01},\potval_{jk}^{01},\potval_k^0\}$.
%As a convention, we can assume that we have arbitrarily numbered the variables, and always define the pairwise potential index to be increasing, so if the path contains $(\ldots, \gcvar_j , \gcvar_i, \ldots)$ and $i<j$, the corresponding potential is $\potval_{ij}^{10}$.

\commentout{
\section{Balanced Energy Functions}

In this section, we introduce \emph{balanced} energy functions along with
their key properties, which we believe
to be crucial to understanding the connection between MP and graph cuts.
We begin by defining \emph{path-structured} energy functions, which will then 
be used to define balanced energy functions.

\begin{deff}[Path-structured Energy Function]
A path-structured energy function is an energy function defined on an augmenting
path $\augpath$ (as in \eqref{eq:aug_path}), where $\theta^1_{\augpath_1} = \theta^0_{\augpath_n} = \alpha$
and $\theta_{ij}^{01} \ge \alpha$ for all $ij \in \augpath$.
\end{deff}

\begin{deff}[Balanced Energy Function]
A balanced energy function is an energy function that can be decomposed into a sum
of path-structured energy functions: $E_B(\varvec) = \sum_{\path \in \paths} E_\path(\varvec_\path)$,
where $\paths$ is some set of paths.
\end{deff}

We now state the most important properties of balanced energy functions,
which we will prove in subsequent sections.

%\begin{lem}[Fixed Point Characterization] \label{lem:fixed_points_of_balanced_energies}
%Given a path decomposition $\paths$ along with associated $\alpha_\path$ for each path,
%setting messages as 
%\begin{align}
%\msg{i}{ij}(\var_i) & = \msg{ij}{j}(\var_j) \!\!&\!\! = \!\!\!\!\! \!\!\!\!\!\! \sum_{\alpha_\path : ij \in \path, \beta_\path : ji \in \path} \left( \begin{array}{c} \alpha_\path
%\\ \beta_\path \end{array} \right)
%\end{align}
%gives a MP fixed point.
%\end{lem}

\begin{lem}[Convergence of Max-Product] \label{lem:max_prod_opt_on_balanced_energies}
Given a balanced energy function defined on graph $\graph=(\nodes, \edges)$, a suitably
scheduled and damped version of MP will converge to a fixed point in worst
case $O(|\nodes| |\edges|^2)$ time.
\end{lem}

\begin{lem}[Optimal Decodability] \label{lem:decodability}
Given any (possibly non-balanced) graph-structured binary submodular energy function $E$,
we can efficiently find a ``closest'' balanced energy function $E_B$ and a fixed point of
$E_B$, such that the MAP solution of $E$ can be decoded from the fixed point of $E_B$ 
using a linear-time decoding algorithm.
\end{lem}
The proofs rely on analysis of the algorithm and the equivalence of the algorithm to graph cuts, 
which we present in the algorithm analysis section.

\figref{fig:kulesza} \subref{fig:kulesza2} shows a balanced energy function composed of
one path-structured energy function along with messages that define its fixed point, and
\figref{fig:kulesza} \subref{fig:kulesza3} shows a balanced energy function composed of
two path-structured energy functions along with messages that define its fixed point.
}
%
%Generally, we are not given the path decomposition for a balanced energy function; however,
%in the next section, we give a scheduling and damping scheme for MP that can be viewed
%as iteratively constructing a path decomposition and associated fixed point.

%%%%%%%%%%%%%%%%%%%%%%%%%%%%%%%%%%%%%%%%%%%%%%%%%%%%%%%%%%%%

\section{Augmenting Paths Max-Product} \label{sec:mp_apmp}

%$\varvec_\mathcal{T}$ is the set of variables associated with non-zero potentials in q, $\potfun_\mathcal{T}$ is a set of potentials that will be \partially~decomposed between q and r, and $\potval_\mathcal{T}$ is required to preserve knowledge about the directionality of the augmenting path when performing the decomposition\footnote{while clearly $\potfun_{\mathcal{T}}$ can be obtained from $\potval_{\mathcal{T}}$, we define it for notational convenience in the sequel.}.

%the factor graph representation $\graphfg$  similar variables and potentials (see \ref{tab:gc_fg}), there is a simple mapping GC and FG as well...to its equivalent chain in the factor graph representation. Since an augmenting path is a path from source to sink, traversing a set of directed edges, its factor-graph equivalent is a chain composed of the corresponding variables nodes and edge potentials (where there is no mapping for the source and sink nodes, and the first and last edges in the augmenting path map to singleton, rather than pairwise, potentials)

In this section, we present Augmenting Paths Max-Product (\ouralg), a particular
scheduling and damping scheme for MP, that---like graph cuts---has
two phases.  At each iteration of the first phase, the scheduler returns a chain on which
to pass messages. Holding all other messages fixed,
messages are passed forward and backward on the chain, with
standard message normalization applied,
to complete the iteration.  Adaptive
message damping applied to messages leaving unary
factors (described below) ensures that messages
propagate across the chain in a particularly structured way.
Messages leaving pairwise
factors and messages from variables to factors are not damped.
Phase 1 terminates when the scheduler
indicates there are no more messages to send, then in Phase 2, Strict MP is run until
convergence (we guarantee it will converge).
The full \ouralg~is given in Algorithm \ref{alg:mp_apmp}.

%See the supplementary materials for an example of \ouralg~messages on a simple chain.

\subsection{Phase 1: Path Scheduling and Damping}
For convenience, we use the convention that chains go from ``left'' to ``right,'' where the left-most variable on
a chain corresponding to an augmenting path is $\var_{\augpath_1}$, and the right-most variable is $\var_{\augpath_N}$.
In these terms, a \emph{forward} pass is from left to right, and a \emph{backward} pass is from right to left.

Suppose that at the end of iteration $t-1$, the outgoing message from the unary factor at the start of the chain
used in iteration $t$, $\augpath = \augpath(t)$,
is $\msg{\potfun_{\augpath_1}}{\var_{\augpath_1}}^{(t-1)}(\var_{\augpath_1}) = (0, b)^T$.
If the factor increments its outgoing message in such a way as to guarantee that
$b + f \le \potval_\beta^{01}$ for all steps along $\augpath$, the messages as shown
in \figref{fig:msgs_flow} will be computed (see Corollary \ref{lem:sat_msgs} below).  
Later analysis will explain why this is desirable.
Accounting for message normalization, this
can be accomplished by limiting the change $\Delta m(.) = m^{(t)}(.) - m^{(t-1)}(.)$ in outgoing message from the first unary variable
on a path to be
$\Delta \msg{\potfun_{\augpath_1}}{\var_{\augpath_1}}(\var_{\augpath_1}) = (0, f)^T$.
We also constrain the increment in the backward direction to equal the increment in the forward direction.

\begin{figure}[t]
\centering
\subfigure[]{\label{fig:msgs_flow1}\includegraphics[width=.48\columnwidth]{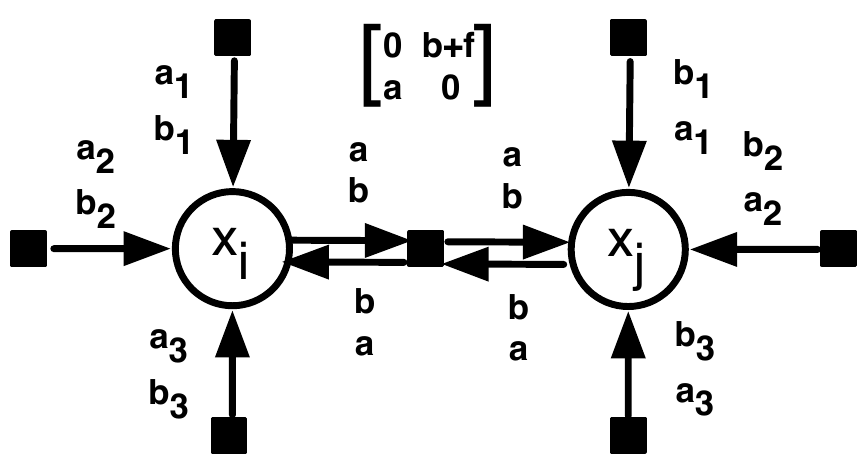}}
\hspace{1mm}
\subfigure[]{\label{fig:msgs_flow2}\includegraphics[width=.48\columnwidth]{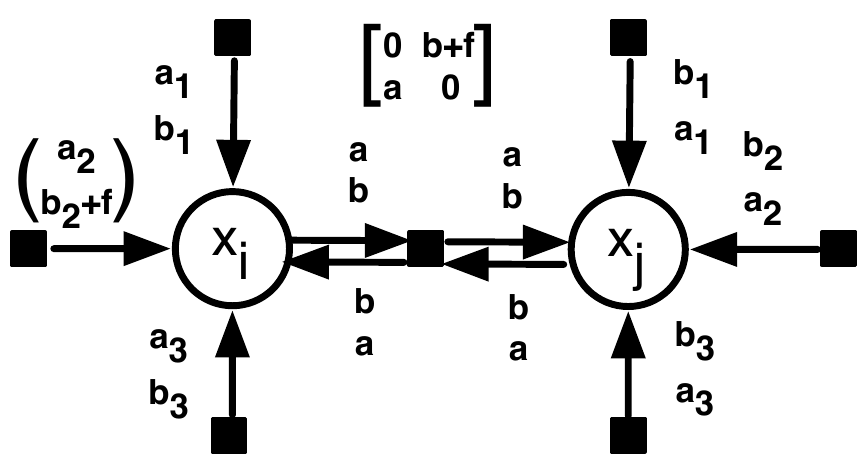}}
\subfigure[]{\label{fig:msgs_flow3}\includegraphics[width=.48\columnwidth]{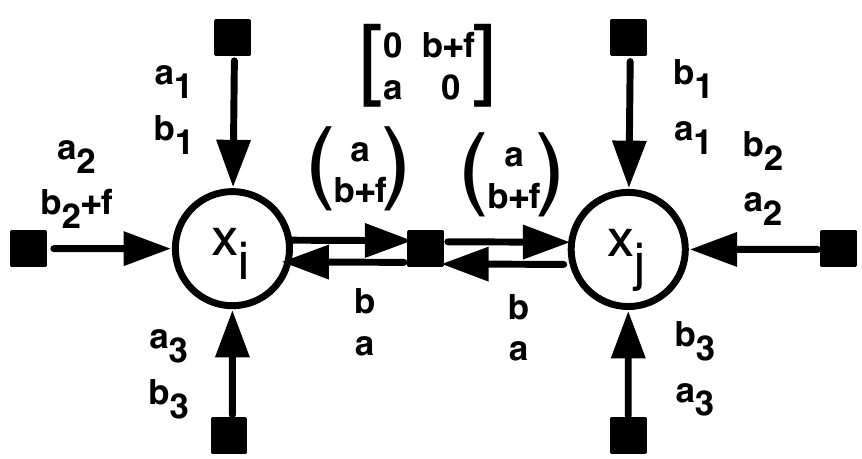}}
\hspace{1mm}
\subfigure[]{\label{fig:msgs_flow4}\includegraphics[width=.48\columnwidth]{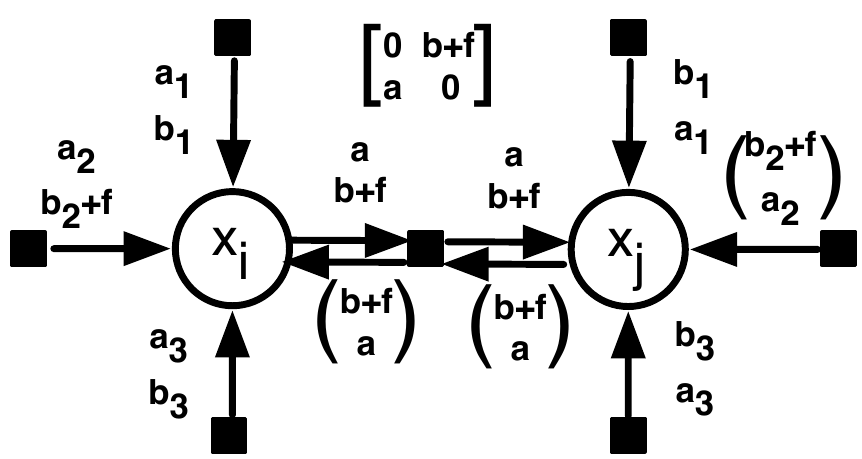}}
\caption{
The pairwise potential is in square brackets.  Only messages changed relative to
previous subfigure are shown in parentheses.  Let $a = a_1 + a_2 + a_3$ and $b = b_1 + b_2 + b_3$.  
\subref{fig:msgs_flow1} Start of iteration. The capacity of the edge $ij$ is $f$.
\subref{fig:msgs_flow2} Inductive assumption that each node on the augmenting
path will receive a message increment of $(0, f)$ from the left-neighbor.
\subref{fig:msgs_flow3} Passing messages completes the inductive step where
$\var_j$ receives an incremented message.
\subref{fig:msgs_flow4} Similarly, receiving an incremented message in the backwards
direction then updating messages from $j$ to $i$ completes the iteration.}
\label{fig:msgs_flow}
\end{figure}

\commentout{
where
\begin{align} \label{eq:bottleneck_in_msgs}
f \le \min_{ij \mid \potfun_{ij} \in \potfun_\augpath} \left[ \potval_{ij}^{01} - \msg{\var_i}{\potfun_{ij}}(1) +  \msg{\var_i}{\potfun_{ij}}(0) \right] \hbox{.}
\end{align}
A similar argument can be applied to messages in the opposite
direction, where we limit the change in message at $\Delta \msg{\potfun_{\augpath_N}}{\var_{\augpath_N}}(\var_{\augpath_N})$.

To ensure that Corollary \ref{cor:backwards_inc} holds, the $\Delta m$ in the
backward direction must equal the $\Delta m$ in the forward direction.  So in
addition to satisfying \eqref{eq:bottleneck_in_msgs}, we must have
$f \le \potval_{\augpath_N}^0 - \msg{\potfun_{\augpath_N}}{\var_{\augpath_{N}}}(0)$
or else it is impossible for the backward message to fully cancel out the forward
message when calculating each variable's belief.
Put together, we get
\begin{align*}
f & \le \min \left( \potval_{\augpath_N}^0 - \msg{\potfun_{\augpath_N}}{\var_{\augpath_{N}}}(0), \right. \\
& \left. \;\;\;\;\; \min_{ij \mid \potfun_{ij} \in \potfun_\augpath} \left[ \potval_{ij}^{01} - \msg{\var_i}{\potfun_{ij}}(1) +  \msg{\var_i}{\potfun_{ij}}(0) \right] \right) \hbox{.}
%f & \le \min \left( \potval_{\augpath_1}^1 - \msg{\var_{\augpath_1}}{\var_{\augpath_{1}}}(1), \;\; \min_{ij \mid \potfun_{ij} \in \potfun_\augpath} \left[ \potval_{ij}^{01} - \msg{\var_j}{\potfun_{ij}}(0) \right] \right) \hbox{.}
\end{align*}
Finally, we never want the total message outgoing from a unary factor
exceed its corresponding potential value, so $f \le \potval_{\augpath_1}^1 - \msg{\potfun_{\augpath_1}}{\var_{\augpath_{1}}}(1)$.
}
\vspace{-.02in}
Under the constraints, the largest $f$ we can choose is \vspace{-.05in}
\begin{align} \label{eq:residual_capacity_in_msgs}
f  & =  \min \left(
	 \potval_{\augpath_N}^0 - \msg{\potfun_{\augpath_N}}{\var_{\augpath_{N}}}^{(t-1)}(0), \;\;
	\potval_{\augpath_1}^1 - \msg{\potfun_{\augpath_1}}{\var_{\augpath_{1}}}^{(t-1)}(1), \right. \nonumber \\
	& \left.  \min_{ij \mid \potfun_{ij} \in \potfun_\augpath} \left[ \potval_{ij}^{01} - \msg{\var_i}{\potfun_{ij}}^{(t-1)}(1)
	+ \msg{\var_i}{\potfun_{ij}}^{(t-1)}(0) \right] \right)
\end{align}
which is exactly the bottleneck capacity of the corresponding augmenting
path.  In other words, limiting the change in outgoing message value from
unary factors to be the bottleneck capacity of the augmenting path
will ensure that messages increments propagate through a chain unmodified--that
is, when one variable on the path receives an increment of $(0,f)^T$ as $\var_i$ does
in \figref{fig:msgs_flow2}, it will propagate the same increment to the next variable on
the path ($\var_j$), as in \figref{fig:msgs_flow3}.
This is proved in Lemma \ref{lem:sat_msgs}.

\begin{algorithm}[t]
\caption{Augmenting Paths Max-Product}
\label{alg:mp_apmp}
\begin{algorithmic}
\STATE $f(0) \gets \infty$ \\
\STATE $t \gets 0$ \\
\WHILE[Phase 1] {$f(t) > 0$}
	%\STATE $\potfunq_{ij}(k, l) = 0 \qquad$ for all $ij \in \edges, k,l \in \{0, 1\}$  \qquad// Reset decomposition\\
	\STATE $\mathcal{T}(t), f(t) \gets \text{SCHEDULE}(\graphfg(t))$\\
	\STATE $\lambda_{\augpath_1}(t), \lambda_{\augpath_N}(t)  \gets \text{DAMPING}(\graphfg(t), \augpath(t), f(t))$ \\
	\STATE $ \graphfg(t+1) \gets \text{MP}(\graphfg(t), \edgesfg(\augpath(t)), \lambda_{\augpath_1}(t), \lambda_{\augpath_N}(t) )$\\
	\STATE $t \gets t + 1$
\ENDWHILE\\
\WHILE[Phase 2] {$\text{not converged}$}
\STATE \text{Run Strict MP} \\
\ENDWHILE\\
\end{algorithmic}
\end{algorithm}

{\bf Damping:}
The key, simple idea to the damping scheme is that we want unary factors to increment their
messages by the bottleneck capacity of the current chain.
The necessary value of $f$ can be achieved by damping the outgoing message from the
first and last unary potential on each chain.  For the first unary factor, if we previously have message
$(0, b)^T$ on the edge, then to produce message $(0, b + f)^T$, we can apply
damping $\lambda_{\augpath_1}(t)$ where $\lambda_{\augpath_1}(t)$ is chosen by solving the equation:
\begin{align}
\lambda_{\augpath_1}(t) \cdot b + (1 - \lambda_{\augpath_1}(t)) \cdot \potval_{\augpath_1}^1 & =  f + b \hbox{,}
%\lambda \cdot g + \potval_{\augpath_1}^1 - \lambda \cdot \potval_{\augpath_1}^1 & = & f + g \\
%\lambda \cdot (g  - \potval_{\augpath_1}^1) & = & f + g - \potval_{\augpath_1}^1 \\
%\lambda  & = & \frac{f + g - \potval_{\augpath_1}^1}{g  - \potval_{\augpath_1}^1} \hbox{.}
%\lambda_{\augpath_1}(t)  & = & \frac{\potval_{\augpath_1}^1 - f -  g }{\potval_{\augpath_1}^1 - g } \hbox{,}
\end{align}
yielding $\lambda_{\augpath_1}(t)   =  \frac{\potval_{\augpath_1}^1 - f -  b }{\potval_{\augpath_1}^1 - b }$.
The algorithm never chooses an augmenting path with 0 capacity, so we will never get
a zero denominator.

\commentout{
Interestingly, $f$ and $g$
will be increased at each iteration where
an augmenting path starts at $\potfun_{\augpath_1}$, and
$0 \le f \le f + g \le \potval_{\augpath_1}^1$, so $\lambda_{\augpath_1}(t)$ will
be non-decreasing at each iteration and
will approach 1 in the limit.  A message damping of 1 corresponds to leaving
the message on the edge unchanged, so this can then be seen as a scheme for
progressively damping messages, which does so more aggressively as inference progresses.
This is Algorithm \ref{alg:mp_apmp}.

With the prescribed damping, all forward messages increments will
propagate through the chain.
}
Analogous damping is applied in the opposite
direction.  This dynamic damping will then
produce the same message increments in the forward and backward
direction, which will be a key property used in later analysis.

{\bf SCHEDULE Implementation:} \label{sec:schedule}
The combination of potentials and messages on the edges contain the same information as the residual
capacities in the graph cuts residual graph.  Using this equivalence, any algorithm for finding
augmenting paths in the graph cut setting can be used to find chains to
pass messages on for MP.   The terms being minimized over in Eq.~\eqref{eq:residual_capacity_in_msgs} 
are residual capacities, which are defined in terms of messages and potentials.
Specifically, at the end of any iteration of the MP algorithm described in the next section,
 the residual capacities of edges between non-terminal nodes 
 can be constructed from potentials and current messages $m$ as follows:
\begin{align}
r_{ij} & = \potval_{ij}^{01} - \msg{\var_i}{\potfun_{ij}}(1)+ \msg{\var_i}{\potfun_{ij}}(0) \hbox{.}
\end{align}
The difference in messages $\msg{\var_i}{\potfun_{ij}}(1)- \msg{\var_i}{\potfun_{ij}}(0)$
is then equivalent to the difference in flows $f_{ij} - f_{ji}$ in the graph cuts formulation.
The residual capacities for terminal edges
can be constructed from messages and potentials related to unary factors:
\begin{align}
r_{si} & = \potval_i^1 - \msg{\potfun_i}{\var_i}(1) \\
r_{it} & =  \potval_i^0 - \msg{\potfun_i}{\var_i}(0)\hbox{.}
\end{align}

%how messages and potentials are used to compute bottleneck residual capacities.
%We leave further details to the supplementary materials.
\commentout {
Assume inductively that up until iteration $t$, SCHEDULE has returned
the same augmenting paths $\augpath$ and bottleneck capacities
$f$ as some implementation of graph cuts would if it were running on the same
initial energy.
We show that we can obtain the same information from messages and
potentials as would be stored in the residual graph in a graph
cuts execution.  Since graph cuts finds a path
in the residual graph, we can run the same path-finding routine
using the messages and potentials in order to find the same next
augmenting path and bottleneck capacity.

In order to reconstruct the residual capacities, we begin by noting that $u_{ij}^{01}$ is incremented by $f$
if and only if $f$ amount of flow is pushed through the
directed edge $(i, j)$ in the graph cut
execution.  If we did not perform the REDUCE operation, we could
use this observation immediately to construct quantities equivalent to residual
capacities in the graph cuts formulation:
$r_{ij} = \potval_{ij}^{01} - f_{ij} + f_{ji} = \potval_{ij}^{01} - u_{ij}^{01} + u_{ij}^{10}$.

In fact, the REDUCE operation does not affect this conclusion.
After each step of REDUCE, we will have subtracted the same amount, say $c \ge 0$,
from both $u_{ij}^{01}$ and $u_{ij}^{10}$.  It can then be seen
that the contribution from $c$ will cancel:
$\potval_{ij}^{01} - f_{ij} - c + f_{ji} + c = \potval_{ij}^{01} - u_{ij}^{01} + u_{ij}^{10} = r_{ij}$.
Alternatively, we can compute $r_{ij}$ using messages on edges instead of $u$ values.
Lemma \ref{lem:reparam} will make clear the necessary
relationship between $u$ values and messages on edges.

From the residual capacities, any
path-finding procedure may be run on a graph constructed from
these quantities (which is equivalent to the residual graph) to
find an augmenting path and bottleneck capacity.
The base case trivially holds,
because no augmenting paths have been found, which completes
the argument.
}

\subsection{Phase 2: Strict MP}
When the scheduler cannot find a positive-capacity path on which to pass messages,
it switches to its second phase and passes all messages at all iterations,
with no damping i.e., Strict MP.  It continues until reaching a fixed point. (We will
prove in \secref{sec:phase_2_analysis} that if potentials are finite, it will always converge).
The choice of Strict MP is not essential.  We can prove the same results
for any reasonable scheduling of messages.

\section{\ouralg~Phase 1 Analysis}
Assume that at the beginning of iteration $t$, each
variable $\var_i \in \varvec_{\augpath(t)}$ has received an incoming
message from its left-neighboring factor $\potfun_\alpha$,
$\msg{\potfun_\alpha}{\var_i}(\var_i)  =  (
					a,
					b)^T$.
We want to show that when each variable receives an incremented
message, $(a, b + f)^T$, the increment $(0, f)^T$---up to a normalizing constant---will be propagated through the
variable and the next factor, $\potfun_{ij}$, to the next variable on
the path.%, $\var_j$.
%As in Lemma \ref{lem:sat_msgs}, assume inductively that each
%variable receives a message from its previous neighboring factor
%of $(a, b)^T$.  However, now

%We are no longer modifying potentials, so
The pairwise potential %$\potfun_{ij}$
at the next pairwise factor  along the chain  will be
$\potfun_{ij}$. %(\var_i, \var_j)$.
% = \left( \begin{array}{cc}
%					0 & \potval_{ij}^{01} \\
%					\potval_{ij}^{10} & 0
%				  \end{array} \right) \hbox{.}$
%where $\potval_{ij}^{10}$ and $\potval_{ij}^{01}$. % are no longer assumed to be related to $a$ and $b$.
The damping scheme ensures that $\potval_{ij}^{10} \ge a$ and $\potval_{ij}^{01} \ge b + f$.
Lemma \ref{lem:sat_msgs} shows that under these conditions, factors will propagate messages unchanged.

\begin{lem}[Message-Preserving Factors]\label{lem:sat_msgs}
When passing standard MP messages with the factors as above, $\potval_{ij}^{10} \ge a$, and $\potval_{ij}^{01} \ge b + f$,
the outgoing factor-to-variable message is equal to the incoming
variable-to-factor message i.e.\ $\msg{\potfun_{ij}}{\var_j} = \msg{\var_i}{\potfun_{ij}}$
and $\msg{\potfun_{ij}}{\var_i} = \msg{\var_j}{\potfun_{ij}}$.
\end{lem}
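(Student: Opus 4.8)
The plan is a direct evaluation of the min‑sum factor‑to‑variable update \eqref{eq:strict_mp} at $\potfun_{ij}$, using the canonical form of the pairwise potential together with the two inequality hypotheses and the elementary fact that every message is nonnegative. First I would collect the ingredients: on a chain (with all off‑chain messages held fixed) the variable‑to‑factor message $\msg{\var_i}{\potfun_{ij}}$ is a sum of normalized factor‑to‑variable messages, hence equals some $(a,b)^T$ with $a,b\ge 0$, and $f\ge 0$ since $f$ is a residual (bottleneck) capacity; the canonical form gives $\potfun_{ij}(0,0)=\potfun_{ij}(1,1)=0$, $\potfun_{ij}(0,1)=\potval_{ij}^{01}$, $\potfun_{ij}(1,0)=\potval_{ij}^{10}$.

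Next I would substitute into \eqref{eq:strict_mp} to obtain $\msg{\potfun_{ij}}{\var_j}(0)=\min(a,\ \potval_{ij}^{10}+b)$ and $\msg{\potfun_{ij}}{\var_j}(1)=\min(\potval_{ij}^{01}+a,\ b)$, and then argue that in each case the first/second argument attains the minimum: from $\potval_{ij}^{10}\ge a$ and $b\ge 0$ we get $\potval_{ij}^{10}+b\ge a$, so the first entry is $a$; from $\potval_{ij}^{01}\ge b+f\ge b$ and $a\ge 0$ we get $\potval_{ij}^{01}+a\ge b$, so the second entry is $b$. Hence $\msg{\potfun_{ij}}{\var_j}=(a,b)^T=\msg{\var_i}{\potfun_{ij}}$ with no additive constant introduced, which is the first assertion (and a fortiori equality up to any common renormalization).

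For the second assertion, $\msg{\potfun_{ij}}{\var_i}=\msg{\var_j}{\potfun_{ij}}$, I would repeat the identical computation with $i$ and $j$ interchanged --- which in the canonical form interchanges $\potval_{ij}^{01}$ and $\potval_{ij}^{10}$ --- so that, writing $\msg{\var_j}{\potfun_{ij}}=(c,d)^T$ with $c,d\ge 0$, the needed inequalities become $\potval_{ij}^{01}\ge c$ and $\potval_{ij}^{10}\ge d$. These are the mirror images of the stated hypotheses and hold because the damping of \secref{sec:schedule} is applied symmetrically at the right end $\potfun_{\augpath_N}$ of the chain (equivalently, the forward and backward increments are constrained to be equal); alternatively they are the backward half of the simultaneous induction over iterations whose base case is the trivial initialization. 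I expect the forward computation to be entirely routine; the only point requiring care is this bookkeeping in the backward direction --- verifying that at the moment the lemma is invoked the backward message $(c,d)^T$ genuinely satisfies its two mirrored inequalities.
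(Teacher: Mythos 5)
Your proposal is correct and follows essentially the same route as the paper's proof: plug the canonical pairwise potential and the incoming message into the min-sum update, and use the two inequalities together with nonnegativity of message entries to see that the ``pass-through'' terms win both minimizations, so the outgoing message equals the incoming one. The paper likewise verifies only the $i\to j$ direction explicitly and treats the backward direction as the mirror-image case guaranteed by the equal forward/backward damping increments, which matches the bookkeeping point you flag (your relabeling of the incoming message as $(a,b)^T$ rather than $(a,b+f)^T$ is immaterial to the argument).
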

\begin{proof} This follows from plugging in values to the message updates.  
See supplementary materials.
\end{proof}
\commentout{
\begin{proof} See the supplementary materials
\commentout{
\begin{eqnarray*}\vspace{-.1in}
\msg{\var_i}{\potfun_{\alpha}}(\var_i) & = & \left( \begin{array}{c}
					a \\
					b + f
				  \end{array} \right)\\
\msg{\potfun_{\beta}}{\var_j}(\var_j) & = & \min_{\var_i} \left[ \potfun_{\beta}(\var_i, \var_j) + \msg{\var_i}{\potfun_{\alpha}}(\var_i)  \right] \\
& = & \min_{\var_i} \left[ \potval_\beta^{10} \cdot \var_i(1 - \var_j) + \potval_\beta^{01} \cdot (1 - \var_i)\var_j \right. \\
&  & + \left. a \cdot (1 - \var_i) + (b + f) \cdot \var_i \right] \\
%\msg{\potfun_{\beta}}{\var_j}(0) & = & \min_{\var_i} \left[ \potval_\beta^{10} \cdot \var_i + a \cdot (1 - \var_i) + (b + f) \cdot \var_i \right] \\
\msg{\potfun_{\beta}}{\var_j}(0) & = & \min(\potval_\beta^{10} + b + f, a) = a \\
%\msg{\potfun_{\beta}}{\var_j}(1) & = & \min_{\var_i} \left[ \potval_\beta^{01} \cdot (1 - \var_i) + a \cdot (1 - \var_i) + (b + f) \cdot \var_i \right] \\
\msg{\potfun_{\beta}}{\var_j}(1) & = & \min(a + \potval_\beta^{01}, b + f) = b + f \\
\msg{\potfun_{\beta}}{\var_j}(\var_j) & = & \left( \begin{array}{c}
					a \\
					b + f
				  \end{array} \right)
\end{eqnarray*}}
\end{proof}}
%
%\begin{eqnarray*}
%\msg{\var_i}{\potfun_{\alpha}}(\var_i) & = & \left( \begin{array}{c}
%					a \\
%					b
%				  \end{array} \right)\\
%\msg{\potfun_{\beta}}{\var_j}(\var_j) & = & \min_{\var_i} \left[ \potfun_{\beta}(\var_i, \var_j) + \msg{\var_i}{\potfun_{\alpha}}(\var_i)  \right] \\
%& = & \min_{\var_i} \left[ \potval_\beta^{10} \cdot \var_i(1 - \var_j) + \potval_\beta^{01} \cdot (1 - \var_i)\var_j + a \cdot (1 - \var_i) + b \cdot \var_i \right] \\
%\msg{\potfun_{\beta}}{\var_j}(0) & = & \min_{\var_i} \left[ \potval_\beta^{10} \cdot \var_i  + a \cdot (1 - \var_i) + b  \cdot \var_i \right] \\
%& = & \min(\potval_\beta^{10} + b, a) = a \\
%\msg{\potfun_{\beta}}{\var_j}(1) & = & \min_{\var_i} \left[ \potval_\beta^{01} \cdot (1 - \var_i) + a \cdot (1 - \var_i) + b \cdot \var_i \right] \\
%& = & \min(a + \potval_\beta^{01}, b) = b \\
%\msg{\potfun_{\beta}}{\var_j}(\var_j) & = & \left( \begin{array}{c}
%					a \\
%					b
%				  \end{array} \right)
%\end{eqnarray*}
%where we used the fact that $\potval_\beta^{10} \ge a$ and $\potval_\beta^{01} \ge b + f$.

%When using our damping scheme, 
Lemma \ref{lem:sat_msgs} allows us to easily compute value of all messages
passed during the execution of Phase 1 of APMP and thus the change in beliefs
at each variable.

\begin{cor}[Structured Belief Changes] \label{cor:msg_values}
Before and after an iteration $t$ of Phase 1 APMP, the change in unary belief at each variable
in $\varvec_{\augpath(t)}$ will be $(0, 0)^T$, up to a constant normalization.
\end{cor}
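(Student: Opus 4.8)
The plan is to track, through the single forward--backward sweep that iteration $t$ of Phase~1 performs on the chain $\augpath(t)$, exactly which factor-to-variable messages change and by how much, and then to substitute those changes into the belief formula $\belief_i(\var_i)=\potfun_i(\var_i)+\sum_{j\in\neighbors(i)}\msg{\potfun_{ji}}{\var_i}(\var_i)$. The base case is trivial (before any iteration no messages have been sent, so all messages are in canonical form and the hypotheses below hold), and for $t>1$ the messages left on the chain by earlier iterations are, by Lemma~\ref{lem:sat_msgs}, already of the ``message-preserving'' form needed to apply that lemma again.

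First I would record the message increments. By the damping construction the unary factor $\potfun_{\augpath_1}$ increments its outgoing message by $\Delta\msg{\potfun_{\augpath_1}}{\var_{\augpath_1}}=(0,f)^T$, where $f$ is the bottleneck capacity; and because $f$ is chosen as in \eqref{eq:residual_capacity_in_msgs}, the hypotheses $\potval_{ij}^{10}\ge a$ and $\potval_{ij}^{01}\ge b+f$ of Lemma~\ref{lem:sat_msgs} hold at every pairwise factor $\potfun_{ij}$ on $\augpath(t)$. Applying Lemma~\ref{lem:sat_msgs} inductively from left to right along the chain (the variable-to-factor message out of each chain variable inherits the $(0,f)^T$ increment from its only changed incoming message, and the next pairwise factor passes it on unchanged), the forward pass leaves every $\var_i\in\varvec_{\augpath(t)}$ with an incoming message from the chain factor immediately to its left increased by exactly $(0,f)^T$, up to the usual additive normalization. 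The analogous damping at $\potfun_{\augpath_N}$ increments $\msg{\potfun_{\augpath_N}}{\var_{\augpath_N}}$ by the complementary vector $(f,0)^T$ (this is the sink-side analogue, decrementing the residual $r_{it}$ rather than $r_{si}$), and Lemma~\ref{lem:sat_msgs} propagates that $(f,0)^T$ increment leftward, so every chain variable's incoming message from the chain factor immediately to its right increases by $(f,0)^T$.

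Next I would assemble the belief change. Since $\belief_i$ is the sum of all factor-to-variable messages into $\var_i$ (with $\msg{\potfun_i}{\var_i}$ carrying the unary potential), and since iteration $t$ updates only messages on $\edgesfg(\augpath(t))$, the only messages into $\var_i\in\varvec_{\augpath(t)}$ that change are the one from the chain factor on its left (which is the unary $\potfun_{\augpath_1}$ when $i=\augpath_1$, a pairwise factor otherwise) and the one from the chain factor on its right (the unary $\potfun_{\augpath_N}$ when $i=\augpath_N$, pairwise otherwise). By the previous paragraph these change by $(0,f)^T$ and $(f,0)^T$ respectively, and every other message into $\var_i$ (off-chain messages, and the unary message for an internal variable) is untouched; the endpoints are therefore handled uniformly with the interior. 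Hence $\Delta\belief_i=(0,f)^T+(f,0)^T=(f,f)^T$, which equals $(0,0)^T$ up to the additive normalization constant $f$, as claimed.

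The main obstacle I anticipate is bookkeeping rather than mathematical depth: one must check that a single forward--backward sweep updates each of the two relevant factor-to-variable messages exactly once (the forward pass does not recompute the leftward messages, the backward pass does not recompute the rightward ones), and one must line up the two damping directions so that the forward and backward increments are precisely the complementary vectors $(0,f)^T$ and $(f,0)^T$ whose sum is constant --- this is where the constraint ``increment in the backward direction equals the increment in the forward direction'' (with the roles of the two labels swapped at the sink-side endpoint) does the work. Once Lemma~\ref{lem:sat_msgs} is invoked, the remainder is substitution into the definition of $\belief_i$ and discarding a normalizing constant.
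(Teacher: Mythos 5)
Your proof is correct and follows essentially the same route as the paper's: the damped endpoint increments $(0,f)^T$ and $(f,0)^T$ propagate unchanged along the chain by Lemma~\ref{lem:sat_msgs}, and their sum is a constant vector, so each unary belief on $\varvec_{\augpath(t)}$ changes only by a normalizing constant. The paper merely makes the normalization bookkeeping explicit, writing the left and right message changes as $(-c_L, f-c_L)^T$ and $(f-c_R, -c_R)^T$ and observing that normalization only shrinks message entries so the message-preserving hypotheses of Lemma~\ref{lem:sat_msgs} stay valid, a point you compress into ``up to the usual additive normalization.''
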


\begin{proof}
Under the APMP damping scheme, the change in message from the first unary factor
in $\augpath(t)$ will be $(0, f)^T$, and the change in message from the last unary
factor in $\augpath(t)$ will be $(f, 0)^T$ where $f$ is as defined in Eq.~\eqref{eq:residual_capacity_in_msgs}.
Without message normalization, these messages
will propagate unchanged through the pairwise factors in $\augpath(t)$ by Lemma \ref{lem:sat_msgs}.
Variable to factor messages will also propagate the change unaltered.

Message normalization subtracts a positive constant $c = \min(a, b+f)$ from both entries
in a message vector.  Existing message values will only get smaller, so the message-preserving
property of factors will be maintained.  Thus, each variable will receive a message change
of $(-c_L, f-c_L)^T$ from the left and a message change of $(f-c_R, -c_R)^T$ from the right.
The total change in belief is then $(f-c_L-c_R, f-c_L-c_R)^T$, which completes the proof.
\end{proof}

\figref{fig:msgs_flow} illustrates the structured message changes.

\subsection{Message Free View}

Here, using the reparametrization view of max-product from \shortciteA{Wainwright04},
we analyze the equivalent ``message-free'' version
of the first phase of \ouralg---one that directly modifies potentials rather than sending messages.
Corollary \ref{cor:msg_values} shows that all messages in APMP can be analytically computed.
%we simply increment the appropriate entries on the subchain corresponding
%to an augmenting path by the bottleneck capacity, $f$ from \eqref{eq:residual_capacity_in_msgs}.
We then use these message values to compute the change in parameterization due to the messages
at each iteration.  The main result in this section is that this change in parameterization
is exactly equivalent to that performed by graph cuts.

An important identity, which is a special case of the junction tree representation \shortcite{Wainwright04},
states that we can equivalently view MP on a tree as reparameterizing $\potfun$ according to beliefs $b$:
\begin{align}
&\sum_{i \in \nodes} \tilde \potfun_{i}(\var_i) + \sum_{ij \in \edges} \tilde \potfun_{ij}(\var_i, \var_j) \nonumber \\
 &\! = \sum_{i \in \nodes} b_{i}(\var_i)
	  + \sum_{ij \in \edges} \left[ b_{ij}(\var_i, \var_j) - b_{i}(\var_i) - b_{j}(\var_j) \right]  \label{eq:std_reparam}
\end{align}
where $\tilde \potfun$ is a reparametrization i.e.\ % of $\potfun$ because
$E(\varvec; \potfun) = E(\varvec; \tilde \potfun)$ $\forall \varvec$.
\commentout{
This is a special case of Corollary 13.3 in \shortcite{Koller+Friedman:09}, which
states that MP in an arbitrary loopy cluster graph can be viewed
as reparameterizing the initial energy function --- or in other words, can be implemented
by using messages to directly change potentials.
}
%in a ``message free'' way.
At any point, we can stop and calculate current beliefs and apply the reparameterization
(i.e., replace original potentials with reparameterized potentials and set all messages
 to 0). This holds for
damped factor graph max-product even if factor to variable
messages are damped.  
%This is known, but for completeness, we give a proof using
%our notation in the supplementary materials. \TODO{do we need this discussion/proof?}

{\bf ``Used'' and ``Remainder'' Energies: } 
To analyze reparameterizations, we begin by splitting $E$ into two components: a part
that has been used so far, and a remainder part.  The used part is defined as the energy
function that would have produced the current messages if no damping were used.  The
remainder is everything else.
Since damping is only applied at unary potentials, we assign all pairwise potentials
to the used component:  $\potfunu_{ij}(\var_i, \var_j) = \potfun_{ij}(\var_i, \var_j)$.
The used component of unary potentials can easily be defined as the current
message leaving the factor: $\potfunu_i(\var_i) = \msg{\potfun_i}{\var_i}(\var_i)$.
Consequently, the remainder pairwise potentials are zero, and the remainder
unary potentials are $\potfunr_i(\var_i) = \potfun_{i}(\var_i) - \potfunu_{i}(\var_i)$.
We apply the message-free interpretation to get a reparameterized version of
$E(\varvec; \potfunu)$ then add
in the remainder component of the energy unmodified.

{\bf Analyzing Beliefs:}
The parameterization in Eq.\ \eqref{eq:std_reparam} depends on unary and
pairwise beliefs.
We consider the change in beliefs from that defined by messages at the start of
an iteration of APMP to that defined by messages at the end of an iteration.
There are three cases to consider.

\textbf{Case 1}  Variables and potentials not in or neighboring $\varvec_{\augpath(t)}$
will not have any potentials or adjacent beliefs changed, so the
reparametrization will not change.

% Potentials $\potfunu_{\neighbors(\mathcal{T}(t))}$:
\textbf{Case 2} Potentials neighboring $\var \in \varvec_{\augpath(t)}$ but not in $\edgesfg(\augpath(t))$ could possibly
be affected by the belief at a variable in $\varvec_{\augpath(t)}$,
since the belief at an edge depends on the beliefs at
variables at each of its endpoints.  However, by Corollary \ref{cor:msg_values},
after applying standard normalization, this belief does
not change after a forward and backward pass of messages, so overall they are unaltered.

\textbf{Case 3}
%Variables $\varvec_{\augpath(t)}$  and potentials $\potfun_{\augpath(t)}$: Since Case 1 and Case 2 do not change the parametrization, the
%reparametrization of the full $\potfunu$ and messages component is the same as the reparametrization
%restricted to Case 3.
%We showed in Case 2 that up to a normalization, beliefs at individual variables on $\varvec_{\augpath(t)}$ will
%not change.  %This directly implies that the corresponding parametrization does not change.
We now consider the belief of potentials $\potfunu_{ij} \in \potfunu_{\augpath(t)}$.
This is the most involved case, where the parametrization
does change, but it does so in a very structured way.

\begin{lem}[] \label{lem:pairwise_reparam}
The change in pairwise belief on the current augmenting path $\augpath(t)$
from the beginning of an iteration $t$ to the end of an iteration is
\begin{align}
\Delta b_{ij}(\var_i, \var_j)& = \! \left[ \begin{array}{cc}
					\!\!0   & \!\!\! -f \!\! \\
					\!\! + f & \!\!\! 0 \!\!
				  \end{array} \right] + f\;\;\;\; \hbox{$ij \in \augpath(t)$.}
\end{align}
\end{lem}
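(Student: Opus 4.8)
The plan is to compute $\Delta b_{ij}$ directly from the definition of the pairwise belief together with the message values already established. Recall that $b_{ij}(\var_i,\var_j) = \potfun_{ij}(\var_i,\var_j) + \msg{\var_i}{\potfun_{ij}}(\var_i) + \msg{\var_j}{\potfun_{ij}}(\var_j)$, and that during Phase 1 the pairwise potential $\potfun_{ij}$ is never altered (damping is applied only to outgoing messages from unary factors, and the message-free reparametrization is read off only afterward). Hence the whole change over one iteration reduces to the change in the two incoming variable-to-factor messages: $\Delta b_{ij}(\var_i,\var_j) = \Delta\msg{\var_i}{\potfun_{ij}}(\var_i) + \Delta\msg{\var_j}{\potfun_{ij}}(\var_j)$, where each term is broadcast into the $2\times 2$ table indexed by $(\var_i,\var_j)$ (the first being constant in $\var_j$, the second constant in $\var_i$).

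Next I would pin down those two increments using Lemma~\ref{lem:sat_msgs} and the bookkeeping behind Corollary~\ref{cor:msg_values}. Orient the edge so that $\var_i$ precedes $\var_j$ on $\augpath(t)$. In the forward (left-to-right) half-iteration, the damping at $\potfun_{\augpath_1}$ is chosen so that its outgoing message increments by $(0,f)^T$; by Lemma~\ref{lem:sat_msgs} this increment passes unchanged through every intervening pairwise factor and variable, so $\var_i$'s incoming message from its left neighbour increments by $(0,f)^T$. Since that is the only message into $\var_i$ (other than from $\potfun_{ij}$ itself, which is excluded from $\msg{\var_i}{\potfun_{ij}}$) that is touched during the iteration, we get $\Delta\msg{\var_i}{\potfun_{ij}} = (0,f)^T$. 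Symmetrically, the backward pass is damped at $\potfun_{\augpath_N}$ to produce increment $(f,0)^T$, which propagates unchanged to $\var_j$, giving $\Delta\msg{\var_j}{\potfun_{ij}} = (f,0)^T$. Care is needed here to verify that $\msg{\var_i}{\potfun_{ij}}$ is recomputed only in the forward pass (in the backward pass $\var_i$ sends leftward, not toward $\potfun_{ij}$), that $\msg{\var_j}{\potfun_{ij}}$ is recomputed only in the backward pass, and to cover the boundary cases $\var_i = \var_{\augpath_1}$ or $\var_j = \var_{\augpath_N}$, where the relevant incoming message comes from a damped unary factor rather than a pairwise one --- but the damping was defined precisely so the increment is still $(0,f)^T$, respectively $(f,0)^T$.

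Finally I would assemble the two tables: $\Delta\msg{\var_i}{\potfun_{ij}}(\var_i)$ is $\left[\begin{smallmatrix}0&0\\f&f\end{smallmatrix}\right]$ and $\Delta\msg{\var_j}{\potfun_{ij}}(\var_j)$ is $\left[\begin{smallmatrix}f&0\\f&0\end{smallmatrix}\right]$, so their sum is $\left[\begin{smallmatrix}f&0\\2f&f\end{smallmatrix}\right]$, which is exactly $\left[\begin{smallmatrix}0&-f\\+f&0\end{smallmatrix}\right] + f$ as claimed. Standard message/belief normalization only ever subtracts a uniform constant from a vector, which appears as a uniform shift of the $2\times 2$ table and can be folded into the additive constant (equivalently, the identity holds up to a constant, exactly as in Corollary~\ref{cor:msg_values}); indeed the explicit "$+f$" is just the value of that uniform term when messages are tracked un-normalized. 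The main obstacle is the message-tracking in the middle step --- being fully rigorous about which incoming messages change in the forward versus backward half-iteration, and handling the two chain endpoints uniformly; the surrounding algebra is routine. It is worth remarking, as motivation, that the leading matrix $\left[\begin{smallmatrix}0&-f\\+f&0\end{smallmatrix}\right]$ is precisely the change graph cuts applies to the canonical-form pairwise potential $\potfun_{ij}$ when $f$ units of flow are pushed along the path (Eqs.~\eqref{eq:gc_reparametrization_start}--\eqref{eq:gc_reparametrization_end}), which is what makes this lemma the crux of the equivalence.
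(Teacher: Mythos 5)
Your proof is correct and follows essentially the same route as the paper's: both rest on the fact (from Lemma~\ref{lem:sat_msgs} and the damping scheme behind Corollary~\ref{cor:msg_values}) that the two incoming variable-to-factor messages at $\potfun_{ij}$ change by exactly $(0,f)^T$ and $(f,0)^T$ while $\potfun_{ij}$ itself is untouched, and then read off $\Delta b_{ij}$, with normalization absorbed into the additive constant. The only cosmetic difference is that the paper writes out the full before/after beliefs (using messages $(a,b)^T$ and $(b,a)^T$) and subtracts, whereas you exploit linearity and sum the two increments directly, which yields the same table $\left[\begin{smallmatrix}0&-f\\+f&0\end{smallmatrix}\right]+f$.
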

\begin{proof}
This follows from applying the standard reparameterization \eqref{eq:std_reparam}
to messages before and after an iteration of Phase 1 \ouralg.  See supplementary material
for details.
\end{proof}

{\bf Unary Reparameterizations:}
%The full unary reparameterizations are a sum of reparameterized
%``used'' potentials and unchanged ``remainder'' potentials:
  As discussed above, the used part of the energy
is grouped with messages and reparameterized as standard, while the remainder part is left
unchanged and is added in at the end:
\begin{align}
\tilde \potfun_i(\var_i) & = b_i(\var_i; \potfunu) + \potfunr_i(\var_i) \hbox{.}
\end{align}
Parameterizations defined in this way are proper reparameterizations of the original energy function.

\begin{lem} \label{lem:unary_reparam}
The changes in parameterization during iteration $t$ of Phase 1 APMP
at variables $\var_{\augpath_1}$ and $\var_{\augpath_{N}}$ respectively are
$(0, -f)^T$ and $(-f, 0)^T$.  The change in all other unary potentials is $(0,0)^T$.
\end{lem}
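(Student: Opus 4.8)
The plan is to obtain a closed form for each reparameterized unary potential in terms of the current messages, and then read off its change over one Phase-1 iteration from the structured message updates already established in Lemma~\ref{lem:sat_msgs} and Corollary~\ref{cor:msg_values}.

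First I would establish the identity $\tilde\potfun_i(\var_i) = \potfun_i(\var_i) + \sum_{j\in\neighbors(i)}\msg{\potfun_{ji}}{\var_i}(\var_i)$. In the used energy the unary factors are leaves with potential $\potfunu_i = \msg{\potfun_i}{\var_i}$, so each emits exactly that message, and since pairwise updates are never damped in APMP the pairwise factor-to-variable messages of the used energy coincide with the current ones; hence $b_i(\var_i;\potfunu) = \msg{\potfun_i}{\var_i}(\var_i) + \sum_j \msg{\potfun_{ji}}{\var_i}(\var_i)$. Adding the remainder $\potfunr_i = \potfun_i - \msg{\potfun_i}{\var_i}$ cancels the damped unary message and yields the claimed form, so that $\Delta\tilde\potfun_i = \sum_{j\in\neighbors(i)}\Delta\msg{\potfun_{ji}}{\var_i}$ and everything reduces to tracking how the incoming pairwise messages at $\var_i$ change.

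Next I would dispatch the three cases. During iteration $t$ only messages on $\edgesfg(\augpath(t))$ change, and a pairwise factor lies in $\potfun_{\augpath(t)}$ only when its two variables are consecutive on the chain, so a variable $\var_i \notin \varvec_{\augpath(t)}$ has no incoming message change and $\Delta\tilde\potfun_i = (0,0)^T$. For an interior chain variable $\var_{\augpath_k}$ ($2\le k\le N-1$) the two incoming chain messages change (after normalization) by $(-c_L,f-c_L)^T$ and $(f-c_R,-c_R)^T$, exactly as in the proof of Corollary~\ref{cor:msg_values}, and their sum is a constant vector, so $\Delta\tilde\potfun_{\augpath_k} = (0,0)^T$ up to normalization. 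For the endpoint $\var_{\augpath_1}$ the only changing incoming pairwise message is the backward message $\msg{\potfun_{\augpath_1,\augpath_2}}{\var_{\augpath_1}}$: by Lemma~\ref{lem:sat_msgs} the backward increment $(f,0)^T$ emitted by $\potfun_{\augpath_N}$ propagates left along the chain unchanged up to per-step normalization; equivalently, since the total belief change at $\var_{\augpath_1}$ equals $(0,f)^T$ (from its own unary factor) plus this message change and must be $(0,0)^T$ up to a constant by Corollary~\ref{cor:msg_values}, the message change is $(0,-f)^T$ up to a constant. Either way $\Delta\tilde\potfun_{\augpath_1} = (0,-f)^T$, and the symmetric argument at $\var_{\augpath_N}$ gives $(-f,0)^T$.

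The main obstacle is the additive-constant bookkeeping: Corollary~\ref{cor:msg_values} and the reparameterization identity \eqref{eq:std_reparam} pin messages, beliefs, and hence $\tilde\potfun$ down only up to an additive constant per vector, whereas the statement asks for the exact vectors $(0,-f)^T$ and $(-f,0)^T$. I would fix the normalization convention and verify that the common per-iteration constant peeled off here is exactly the one that resurfaces as the ``$+f$'' term in Lemma~\ref{lem:pairwise_reparam} and ultimately as the increment of $\potval_{const}$ in \eqref{eq:gc_energy_split}, so that the residual unary changes are precisely as stated and the combined unary-plus-pairwise reparameterization coincides term-for-term with the graph-cuts update \eqref{eq:gc_reparametrization_start}--\eqref{eq:gc_reparametrization_end}. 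Beyond this reconciliation, the argument is just substitution into the message updates.
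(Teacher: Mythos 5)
Your proposal is correct and follows essentially the same route as the paper: both rest on the used/remainder split $\tilde\potfun_i(\var_i) = b_i(\var_i;\potfunu) + \potfunr_i(\var_i)$ together with the structured increments of Lemma~\ref{lem:sat_msgs} and Corollary~\ref{cor:msg_values}, your closed form $\tilde\potfun_i = \potfun_i + \sum_{j}\msg{\potfun_{ji}}{\var_i}$ being an algebraic repackaging in which the endpoint change is read off the incoming backward message rather than the decrement of $\potfunr_{\augpath_1}$. Your explicit treatment of the additive-constant bookkeeping (absorbing the per-iteration constant into the $+f$ of Lemma~\ref{lem:pairwise_reparam} and ultimately into $\potval_{const}$) is, if anything, more careful than the paper's proof, which simply treats the unary beliefs as unchanged.
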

\begin{proof}
The Phase 1 damping scheme ensures that the message leaving the first factor on
$\augpath=\augpath(t)$ is incremented by $(0, f)^T$.  This means that $\potfunu_{\augpath_1}(\var_{\augpath_1})$
is incremented by $(0, f)^T$, so $\potfunr_{\augpath_1}(\var_{\augpath_1})$ is decremented by
$(0, f)^T$ to maintain the decomposition constraint.  Unary beliefs do not change, so
the new parameterization is then
$\Delta \potfun_{\augpath_1}(\var_{\augpath_1}) = \Delta b_{\var_{\augpath_1}}(\var_{\augpath_1}) +
\Delta \potfunr_{\var_{\augpath_1}}(\var_{\augpath_1}) = (0, -f)^T$.  A similar argument holds for
$\Delta \potfun_{\var_{\augpath_N}}$.

The only unary potentials involved in an iteration of APMP are endpoints of $\augpath(t)$, so no
other $\potfunr$ values will change.  The total change in parameterization at non-endpoint unary
potentials is then $(0, 0)^T$.
\end{proof}
{\bf Full Reparameterizations: }
Finally, we are ready to prove our first main result.
%the change in parametrization done by one round of
%message passing forward and backward on a chain in \ouralg~is \emph{exactly} equal to the change in parametrization performed by
%graph cuts when pushing flow through an augmenting path.
\begin{thm} \label{thm:final}
%At each step of execution,
The difference between two reparametrizations induced by the messages in Phase 1 \ouralg, before and after
passing messages  on the chain corresponding to augmenting path $\augpath(t)$, is equal to the difference between reparametrizations of graph cuts before and after pushing flow through the equivalent augmenting path.
\end{thm}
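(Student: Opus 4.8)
I would carry out the proof entirely in the message-free reparametrization picture set up above, and show that the change Phase~1 \ouralg~makes to the triple $(\{\potfun_i\},\{\potfun_{ij}\},\potval_{const})$ in one iteration on the chain $\augpath(t)$ with bottleneck $f$ agrees, component by component, with the graph cuts update \eqref{eq:gc_reparametrization_start}--\eqref{eq:gc_reparametrization_end}. Almost all the work is already packaged: Corollary~\ref{cor:msg_values} and Lemma~\ref{lem:sat_msgs} pin down the messages and belief changes produced by the iteration, and Lemmas~\ref{lem:unary_reparam} and \ref{lem:pairwise_reparam} turn those into changes of the reparametrized potentials through the junction-tree identity \eqref{eq:std_reparam}. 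So the proof is an assembly of these facts plus a short energy-conservation argument for the constant.

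\textbf{Localizing the change.} First I would use the three-case belief analysis above to show the two updates have the same support. Potentials neither on nor adjacent to $\varvec_{\augpath(t)}$ are untouched (Case~1); potentials adjacent to but not on $\augpath(t)$ are unchanged because, by Corollary~\ref{cor:msg_values}, the unary beliefs at the chain variables return (up to normalization) to their pre-iteration values after a forward and backward pass (Case~2). Graph cuts, too, modifies only the terminal edges at the two endpoints of the path, the pairwise edges along the path, and $\potval_{const}$, so it suffices to compare on-chain terms and the constant.

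\textbf{On-chain potentials.} For the endpoint unaries, Lemma~\ref{lem:unary_reparam} gives the reparametrization changes $(0,-f)^T$ at $\var_{\augpath_1}$ and $(-f,0)^T$ at $\var_{\augpath_N}$ and $(0,0)^T$ everywhere else; in the canonical coordinates of \secref{sec:bg_notation} this is exactly ``$\potval^1_{\augpath_1}\!:=\!\potval^1_{\augpath_1}-f$'' and ``$\potval^0_{\augpath_N}\!:=\!\potval^0_{\augpath_N}-f$,'' matching graph cuts. For the pairwise terms, $\tilde\potfun_{ij}=b_{ij}-b_i-b_j$ and the unary belief changes vanish up to a constant (Corollary~\ref{cor:msg_values}), so by Lemma~\ref{lem:pairwise_reparam} the change in $\tilde\potfun_{ij}$ for $ij\in\augpath(t)$ equals the $2\times2$ matrix with $-f$ in the $(0,1)$ slot, $+f$ in the $(1,0)$ slot, and zeros on the diagonal, up to an additive constant; re-expressing this in canonical form gives ``$\potval^{01}_{ij}\!:=\!\potval^{01}_{ij}-f$'' and ``$\potval^{10}_{ij}\!:=\!\potval^{10}_{ij}+f$,'' again matching graph cuts.

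\textbf{The constant --- and the main obstacle.} It remains to verify $\Delta\potval_{const}=+f$. This is the delicate step, because Lemma~\ref{lem:pairwise_reparam} carries a $+f$ shift on each of the $N-1$ on-chain pairwise beliefs and Corollary~\ref{cor:msg_values} introduces further per-variable normalization constants, so a naive tally might seem to shift $(N-1)f$ into $\potval_{const}$ rather than $f$. I would sidestep the bookkeeping: after the iteration the \ouralg~potentials are a genuine reparametrization of $E$, so $\potval_{const}=E(\varvec)-\sum_i\potfun_i(\var_i)-\sum_{ij}\potfun_{ij}(\var_i,\var_j)$ is independent of $\varvec$, and the same holds for graph cuts. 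Since all unary and pairwise changes already agree, $\Delta\potval_{const}$ must agree too; concretely, evaluating this assignment-independent expression at the configuration setting every variable of $\varvec_{\augpath(t)}$ to $1$, the only changed term is $\tilde\potfun_{\augpath_1}(1)$, which drops by $f$, forcing $\Delta\potval_{const}=+f$. Collecting the endpoint-unary, on-chain-pairwise, and constant changes, together with the vanishing of everything else, yields equality of the two reparametrization differences.
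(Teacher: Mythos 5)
Your proposal is correct and follows essentially the same route as the paper's proof: localize the change via the Case 1--2 belief analysis, read off the endpoint unary changes from Lemma~\ref{lem:unary_reparam} and the on-chain pairwise changes from Lemma~\ref{lem:pairwise_reparam} (using that unary beliefs are unchanged, so $\Delta \potfun_{ij} = \Delta b_{ij}$), and match the result entry by entry to \eqref{eq:gc_reparametrization_start}--\eqref{eq:gc_reparametrization_end}. The only real addition is your explicit verification that $\Delta \potval_{const} = +f$ by evaluating the assignment-independent reparametrized energy at the all-ones configuration on the chain; the paper leaves this constant-term bookkeeping implicit, and your short energy-conservation argument cleanly settles a point the published proof glosses over.
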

\begin{proof}
The change in unary parameterization is given by Lemma \ref{lem:unary_reparam}.
The change in pairwise parameterization is
$\Delta \potfun_{ij}(\var_i, \var_j) = \Delta b_{ij}(\var_i, \var_j) - \Delta b_i(\var_i) - \Delta b_j(\var_j) = \Delta b_{ij}(\var_i, \var_j)$, where $\Delta b_{ij}(\var_i, \var_j)$ is given by Lemma \ref{lem:pairwise_reparam}.

Putting the two together, we see that the changes in potential entries
are exactly the same as those performed by graph cuts in
\eqref{eq:gc_reparametrization_start} - \eqref{eq:gc_reparametrization_end}:
\begin{align}
\Delta \potfun_{\augpath_1}(\var_{\augpath_1}) & =  \! \left[ \begin{array}{c}
					0 \\
					-f
				  \end{array} \right] \qquad\qquad \\
\Delta \potfun_{\augpath_N}(\var_{\augpath_N}) & =  \! \left[ \begin{array}{c}
					-f \\
					0
				  \end{array} \right] \qquad\qquad\\
\Delta \potfun_{\augpath_i}(\var_{\augpath_i}) & = \! \left[ \begin{array}{c}
					0\\
					0
				  \end{array} \right] \qquad \;\;\; \hbox{ $i \not = 1, N$}\\
\Delta \potfun_{{\augpath_i}, {\augpath_j}}(\var_{\augpath_i}, \var_{\augpath_j})& = \! \left[ \begin{array}{cc}
					\!\!0   & \!\!\! -f \!\! \\
					\!\! + f & \!\!\! 0 \!\!
				  \end{array} \right] \;\; \hbox{$\potfun_{{\augpath_i}, {\augpath_j}} \in \potfun_{\augpath(t)}$.}
\end{align}
This completes the proof of equivalence between Phase 1 \ouralg~and Phase 1 of graph cuts. 
\end{proof}

\figref{fig:kulesza} shows $\potfunu$ and Phase 1 \ouralg~messages  from running
two iterations on the example from \figref{fig:fixed_pts}.
\begin{figure}[tb]
\centering
%\subfigure[]{\label{fig:kulesza1}\includegraphics[width=.46\columnwidth]{kulesza_e.pdf}}
\subfigure[]{\label{fig:kulesza2}\includegraphics[width=.49\columnwidth]{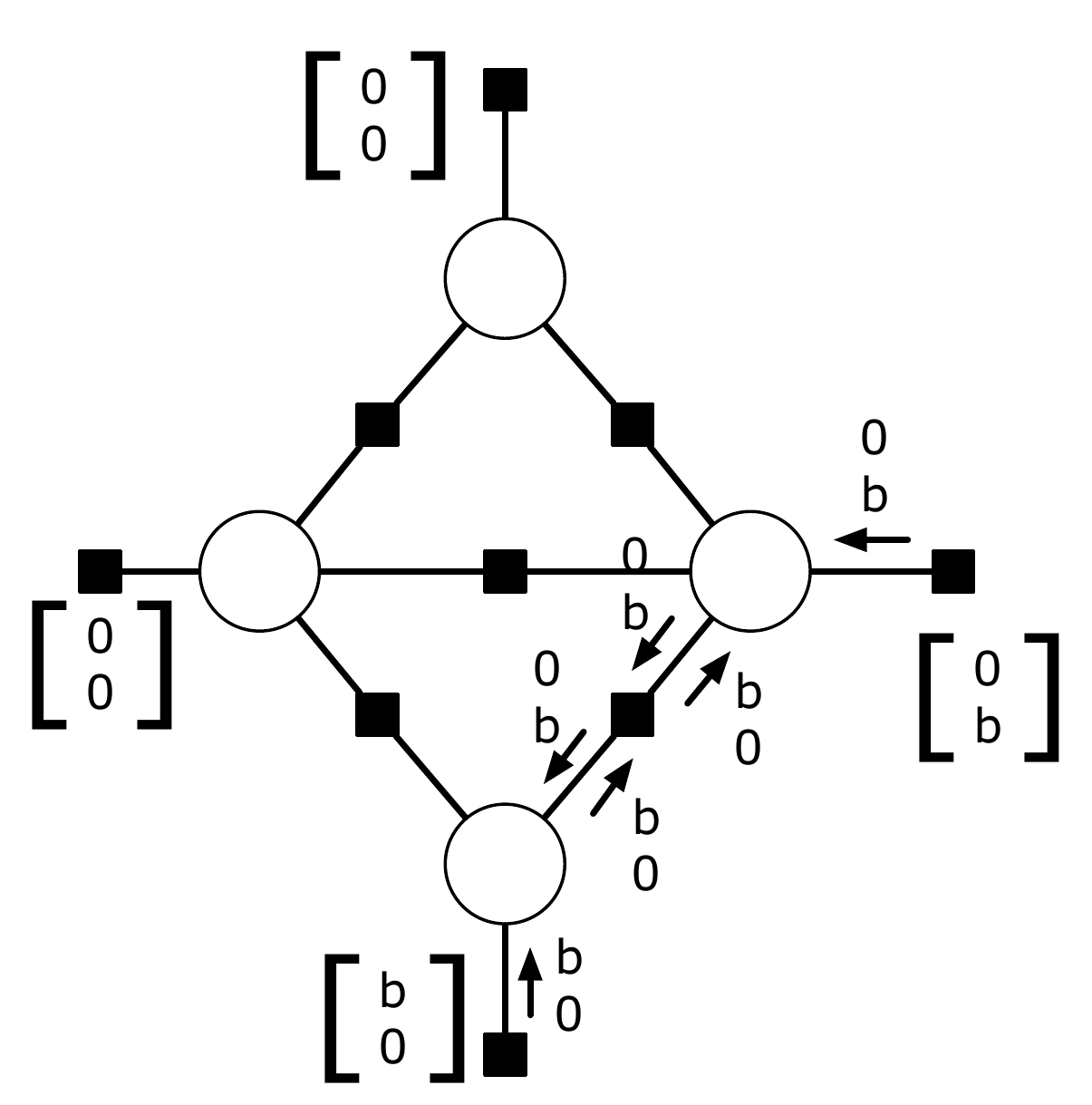}}
\hspace{2mm}
\subfigure[]{\label{fig:kulesza3}\includegraphics[width=.46\columnwidth]{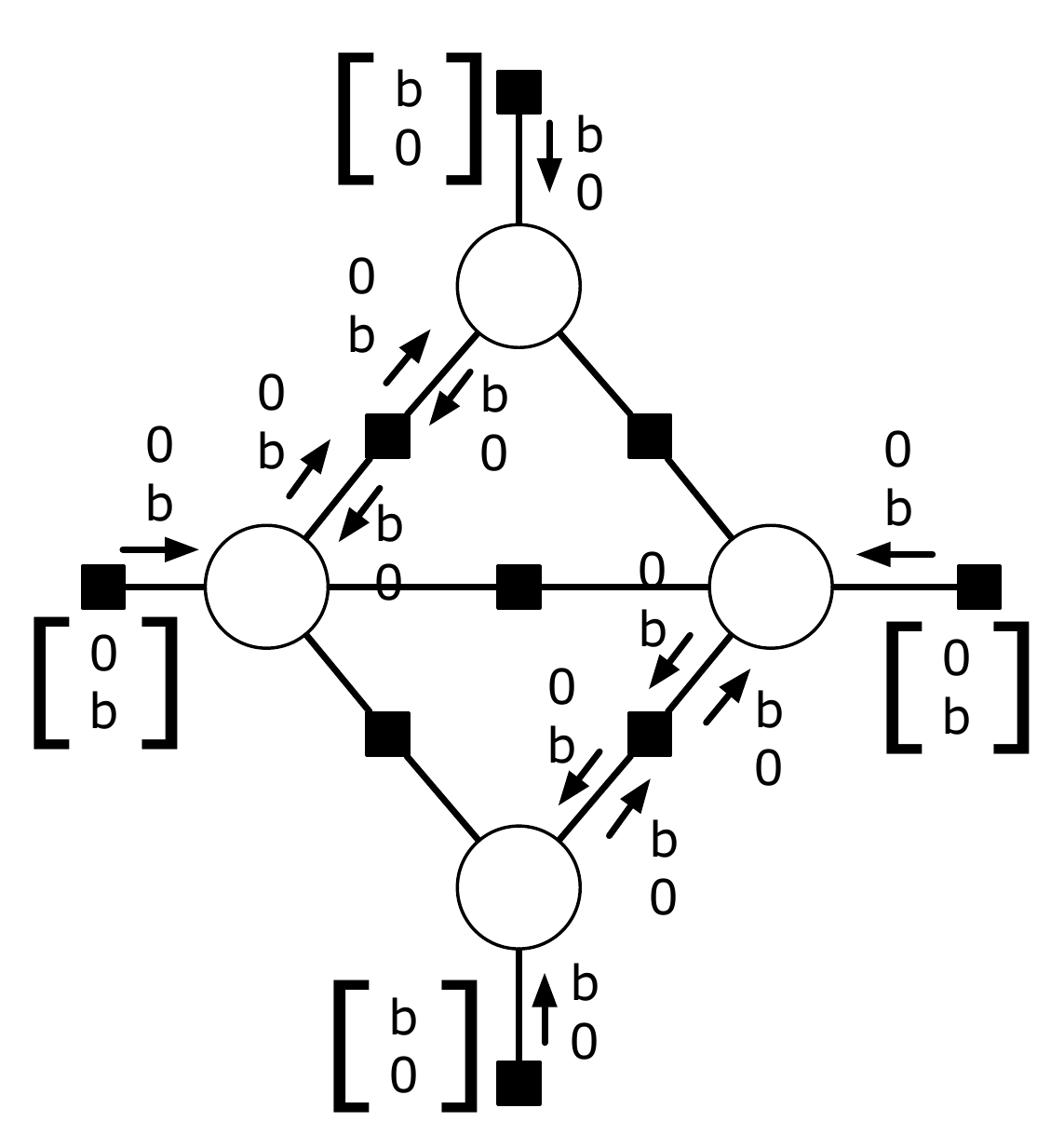}}
%\hspace{2mm}
%\subfigure[]{\label{fig:kulesza4}\includegraphics[width=.46\columnwidth]{kulesza_tilde_e.pdf}}
\caption{
Illustration of first two ``used'' energies and associated fixed points 
constructed by APMP on the problem from \figref{fig:fixed_pts}.
Potentials $\potfunu$ are given in square brackets.  Messages have no parentheses.  Edges
with messages equal to $(0, 0)$, and pairwise potentials, which are assumed strong, 
are not drawn to reduce clutter.
%As before, pairwise strength $\lambda$ is large and $b < a$.
\subref{fig:kulesza2} First energy. % $E(\varvec; \potfunu)$.  %Note the messages give a max-product fixed point.
\subref{fig:kulesza3}  Second energy. % $E(\varvec; \potfunu)$.  
Note that both sets of messages give a max-product fixed point
for the respective energy.
%\subref{fig:kulesza4} $E_2$ after using the messages to reparameterize.  Since edge strengths
%are assumed large, the flood-fill decoding
%(shown in red) will assign all variables to 1 so long as $a-b>0$, which is the optimal
%decision.
}
\label{fig:kulesza}
\end{figure}

\commentout{
\section{Incremental Energy Construction} \label{sec:apmp}
Here, we give a second, equivalent view of \ouralg, which is that it is
constructing a sequence of energy functions ($U(\varvec)$ below) that lower bound the original
energy function.  This view enables our further analysis, which will show the equivalence to graph cuts.
For each energy function in the sequence, it constructs a max-product fixed point.

At each iteration, the energy function is decomposed into two terms
defined over the same $\graph$:
\begin{align*}
\enU(\varvec; \potfunq) & =  \sum_{i \in \nodes} \potfunq_i(\var_i) +  \sum_{ij \in \edges} \potfunq_{ij}(\var_i, \var_j)\\
\enR(\varvec; \potfunr)   & =  \sum_{i \in \nodes} \potfunr_i(\var_i) +  \sum_{ij \in \edges} \potfunr_{ij}(\var_i, \var_j) \hbox{,}
\end{align*}
where $E(\varvec; \potfun) = U(\varvec; \potfunq) + R(\varvec; \potfunr)$ (partial decomposition constraint).
We will pass messages on a factor graph with potentials $\potfunu$.  $\potfunu$ will be modified at each iteration.

In order to discuss modifying potentials without changing anything else about the
state of a MP execution, we
 use the notation $\graphfg = (\nodes, \edges; \potfun, \messages)$ to mean that
potentials $\potfun$ and messages $\messages$ are associated with $\graphfg$.
A standard round of message passing would change only the messages on the
graph e.g., from $\graphfg(t) = (\nodes, \edges; \potfun, \messages(t))$ to
$\graphfg(t+1) = (\nodes, \edges; \potfun, \messages(t+1))$.  Modifying
potentials while leaving messages fixed would change from
$\graphfg = (\nodes, \edges; \potfun, \messages(t))$ to
$\tilde {\graphfg} = (\nodes, \edges; \tilde \potfun, \messages(t))$.

%This resolves the set of nodes $\varvec_\mathcal{T}$ associated with non-zero potentials in $q$, and paves the road for the potential decomposition.
At the outset of each iteration, %we must first obtain the particular energy function decomposition to use. W
we call the subroutine SCHEDULE, which returns an augmenting path $\augpath$
and a bottleneck capacity $f$.
$\edges^{FG}(\augpath)$ gives the current chain on which to pass messages.
$\augpath$ and $f$ are both used to modify the potentials in a way that maintains
the constraint that $U(\varvec) + R(\varvec) = E(\varvec)$.  See the supplementary
materials for an illustration.
This amounts to increasing some entries in
 $\potfunu_\augpath$ by $f$  and decreasing corresponding entries in $\potfunr_\augpath$
by the same amount or vice versa.  To complete an iteration, we pass standard MP messages forward and
backward on the chain $\edges^{FG}(\augpath; \potfunu)$.
We iterate %this procedure of message passing given different decompositions obtained from $SCHEDULE$
until SCHEDULE indicates no additional iterations can be performed (when $f = 0$), at which point
%we can decode assignments as described in \secref{sec:decoding}, and
\ouralg~terminates.

\subsection{Incremental Decomposition of Potentials}\label{sec:pot_decompose}

To modify potentials, we maintain a running total of how much of the potential entries have been \emph{utilized} in previous iterations.
%, and are therefore already accounted for by existing messages along edges of the graph $\graphfg$ on which MP will eventually be executed.
 We represent this quantity with a set of variables $u = \{u_i^{0}, u_i^{1}, u_{ij}^{01}, u_{ij}^{10}\}$ corresponding to every potential value $\{\potval_i^{0},\potval_i^{1},\potval_{ij}^{01},\potval_{ij}^{10}\}$ respectively, where each $u$ is always greater than or equal to zero, and less than or equal to its corresponding $\potval$.  These quantities are maintaining the same information
 as flow in the graph cut formulation. % the residual capacity of edges in the graph cut formulation.

Initially, all unary and pairwise utilization variables $u$ are set to 0. At the beginning of each iteration, given the capacity $f$ provided by SCHEDULE, and $\potval_\mathcal{T}$ obtained from \eqref{eq:augpath_quants},
%\secref{sec:map_aug_path_to_chain},
we increment every $u$ corresponding to a potential value in $\potval_\mathcal{T}$ by $f$ (INCREMENT):
\begin{align}\label{eq:increment_u}
&u_{\mathcal{T}_1}^1 := u_{\mathcal{T}_1}^1 + f \nonumber & &
u_{\mathcal{T}_n}^0 := u_{\mathcal{T}_n}^0 + f \nonumber \\
&u_{ij}^{01}  :=  u_{ij}^{01} + f  \;\;\; \forall u_{ij}^{01} \in u_\mathcal{T} & &
\end{align}
%
%Then, we use the $u$ variables (whether incremented or not) to decompose each potential value between $q$ and $r$ \TODO{fancier arrangement to make it take less space}:
%
We follow the above step with %a REDUCE step, which can be viewed
 an analogue to message normalization for the pairwise potentials $\potfun_\augpath$.
%in that the minimum of $u_{ij}^{10}$ and $u_{ij}^{01}$ will be set to 0.
For each pairwise $u_{ij}$ corresponding to a $\potfun_{ij} \in \potfun_\augpath$, (REDUCE)
\begin{eqnarray}\label{eq:reduce_step}
u_{ij}^{10} & := & u_{ij}^{10} - \min(u_{ij}^{01}, u_{ij}^{10}) \nonumber \\
u_{ij}^{01} & := & u_{ij}^{01} - \min(u_{ij}^{01}, u_{ij}^{10}) \nonumber  \hbox{.}
\end{eqnarray}
Note that REDUCE will never produce a negative $u$.
Also note that it is not necessary (nor would it change anything)
to reduce $u$ variables corresponding to unary
potentials, because we have assumed that $\min(\potval_i^{0}, \potval_i^1) = 0$
and we maintain $0 \le u_{i}^k \le \potval_i^k$ for all $i$.

We then set each potential entry in $\potfunu$ to its corresponding $u$ entry, which
is equivalent to incrementing the potentials  $\potvalu_\augpath$
and decrementing potentials $\potvalr_\augpath$ (or vice versa) along the current
chain $\edgesfg(\augpath)$.

\commentout{Formally, for the unary potentials in $\potfun$,
%
%\begin{eqnarray*}\label{eq:singleton_update_start}
%\potval_{\mathcal{T}_1}^{0\supu} & = & u_{\mathcal{T}_1}^0 \nonumber \\
%\potval_{\mathcal{T}_1}^{1\supu} & = & u_{\mathcal{T}_1}^1 \nonumber  \\
%\potval_{\mathcal{T}_1}^{0\supr} & = & \potval_{\mathcal{T}_1}^0 - u_{\mathcal{T}_1}^{0} \nonumber  \\
%\potval_{\mathcal{T}_1}^{1\supr} & = & \potval_{\mathcal{T}_1}^1 - u_{\mathcal{T}_1}^{1}
%\end{eqnarray*}
%\begin{eqnarray*}\label{eq:singleton_update_end}
%\potval_{\mathcal{T}_n}^{0\supu} & = & u_{\mathcal{T}_n}^0 \nonumber \\
%\potval_{\mathcal{T}_n}^{1\supu} & = & u_{\mathcal{T}_n}^1 \nonumber  \\
%\potval_{\mathcal{T}_n}^{0\supr} & = & \potval_{\mathcal{T}_n}^0 - u_{\mathcal{T}_n}^{0} \nonumber  \\
%\potval_{\mathcal{T}_n}^{1\supr} & = & \potval_{\mathcal{T}_n}^1 - u_{\mathcal{T}_n}^{1}
%\end{eqnarray*}
%
\begin{align*} %\label{eq:singleton_update}
\potval_{\mathcal{T}_1}^{0\supu} & =  u_{\mathcal{T}_1}^0 \nonumber & & &
\potval_{\mathcal{T}_n}^{0\supu} & =  u_{\mathcal{T}_n}^0 \nonumber \\
\potval_{\mathcal{T}_1}^{1\supu} & =  u_{\mathcal{T}_1}^1 \nonumber  & & &
\potval_{\mathcal{T}_n}^{1\supu} & =  u_{\mathcal{T}_n}^1 \nonumber  \\
\potval_{\mathcal{T}_1}^{0\supr} & =  \potval_{\mathcal{T}_1}^0 - u_{\mathcal{T}_1}^{0} \nonumber & & &
\potval_{\mathcal{T}_n}^{0\supr} & =  \potval_{\mathcal{T}_n}^0 - u_{\mathcal{T}_n}^{0} \nonumber  \\
\potval_{\mathcal{T}_1}^{1\supr} & =  \potval_{\mathcal{T}_1}^1 - u_{\mathcal{T}_1}^{1} & & &
\potval_{\mathcal{T}_n}^{1\supr} & =  \potval_{\mathcal{T}_n}^1 - u_{\mathcal{T}_n}^{1} \hbox{,}
\end{align*}
and for pairwise potentials $\potfun_{ij} \in \potfun$,
\begin{eqnarray*} %\label{eq:pairwise_update}
\potval_{ij}^{10\supu} & = & u_{ij}^{10}   \nonumber \\
\potval_{ij}^{01\supu} & = & u_{ij}^{01}  \nonumber \\
\potval_{ij}^{10\supr} & = & \potval_{ij}^{10} - u_{ij}^{10} \nonumber \\
\potval_{ij}^{01\supr} & = & \potval_{ij}^{01} - u_{ij}^{01} \hbox{.}
\end{eqnarray*}
%The starred (*) equations indicate potentials that were modified by INCREMENT.
%
%For all other pairwise and singleton potentials $\potfun_{i},\potfun_{ij}\notin \potfun_\mathcal{T}$
%\begin{eqnarray}\label{eq:leftover_update}
%\potval_{ij}^{01\supu} & = & 0  \nonumber \\
%\potval_{ij}^{01\supr} & = & \potval_{ij}^{01}  \nonumber \\
%\potval_{ij}^{10\supu} & = & 0  \nonumber \\
%\potval_{ij}^{10\supr} & = & \potval_{ij}^{10}  \nonumber \\
%\potval_{i}^{0\supu}   & = & 0  \nonumber \\
%\potval_{i}^{1\supr}   & = & \potval_{i}^{x}  \nonumber\\
%\potval_{i}^{0\supu}   & = & 0  \nonumber \\
%\potval_{i}^{1\supr}   & = & \potval_{i}^{x}
%\end{eqnarray}
%
}

{\bf Summary of \ouralg~2:}
\commentout{
\begin{algorithm}[tb]
\caption{\ouralg~(Modifying Potentials)}
\label{alg:main}
\begin{algorithmic}
\STATE $f \gets \infty$ \\
\STATE $t \gets 0$ \\
\WHILE {$f > 0$}
	%\STATE $\potfunq_{ij}(k, l) = 0 \qquad$ for all $ij \in \edges, k,l \in \{0, 1\}$  \qquad// Reset decomposition\\
	\STATE $\mathcal{T}(t), f(t) \gets \text{SCHEDULE}(\graphfg(t), \potfunu(t), \potfunr(t))$\\
	\STATE $\tilde \graphfg(t) \gets \text{INCREMENT-REDUCE}(\graphfg(t), \augpath(t), f(t))$ \\
	\STATE $ \graphfg(t+1) \gets \text{MP}(\tilde \graphfg(t), \edgesfg(\augpath(t)) )$\\
	\STATE $t \gets t + 1$
\ENDWHILE\\
\end{algorithmic}
\end{algorithm}
}

\commentout{
The previous section fully defines how to decompose the energy function at each iteration of the algorithm so that
\begin{equation*}
E(\varvec;\potfun) = \enU(\varvec; \potfunq(t)) + \enR(\varvec; \potfunr(t)) \hbox{.}
\end{equation*}
It also gives us the factor graph that we will use when running MP.
}
This is a second, equivalent interpretation of the execution of \ouralg~(see Lemma \ref{lem:apmp1apmp2}), which we refer to
as \ouralg~2.  At the beginning of an iteration $t$, we have initial potentials and messages
on the edges,
\begin{eqnarray*}
\graphfg(t) & = & (\nodes, \edges; \potfunu(t), \messages(t)) \hbox{.}
\end{eqnarray*}
After incrementing and reducing potentials as described in the previous section, we get
\begin{eqnarray*}
\tilde {\graphfg}(t) & = & (\nodes, \edges; \tilde \potfun^{\supu}(t), \messages(t)) \hbox{.}
\end{eqnarray*}
Finally, we pass undamped MP messages forward and backward on the chain $\edges^{{FG}}(\augpath; \potfunu)$
corresponding to augmenting path $\augpath$ to give
\begin{eqnarray*}
{\tilde \graphfg}(t+1) & = & (\nodes, \edges; \tilde \potfun^{\supu}(t+1), \messages(t+1)) \hbox{.}
\end{eqnarray*}
We then use the resulting factor graph at the end of the current iteration as the initial factor graph at the start of the next iteration.

\begin{lem} \label{lem:apmp1apmp2}
\ouralg~2 sends the same messages in the same order as \ouralg, and is thus equivalent.
\end{lem}
\begin{proof}
Assume inductively that the two algorithms have behaved
equivalently up until iteration $t$.  At this point, \ouralg~2's call to SCHEDULE returns the same $\augpath$ and $f$
as \ouralg.
The messages sent at the beginning and end of $\augpath$ will be equal to $\potval^{1(U)}_{\mathcal{T}_1}$
and $\potval^{0(U)}_{\mathcal{T}_N}$, respectively, which are equal to $f$ plus the potential value
in the previous iteration, which we referred to as $g$ in \secref{sec:mp_apmp}. The base case
trivially holds.
\end{proof}

{\bf Analysis of \ouralg~2}\label{sec:alg_analysis}
%At each iteration $t$ of the algorithm, we obtain some augmenting path $\mathcal{T}^{(t)}$ and bottleneck capacity $f^{(t)}$, construct a subgraph $\graphfgqiter$  of $\graphfg$ as described in the previous section, and run MP on $\graphfgqiter$. This is a simple run since $\graphfgqiter$ is chain-structured. It starts with the variable corresponding to $x_{\mathcal{T}_1}$, messages are passed along all pairwise potential edges in the chain to (0 more more) to $x_{\mathcal{T}_n}$ (we refer to this as the forward pass on the chain), and than back again (the backward pass on the chain). \TODO{We then calculate all outgoing messages to neigbours? otherwise fishy, need to say something about that - either resolve messages as not part of the fg description, or make sure there is a notation for all messages at some given time step for the full FG}. A full forward and backward pass is refered to as passing messages on the chain. We update the messages on $\graphfg$ to correspond to the updated messages on $\graphfgqiter$. Alternatively, this procedure can simply be thought of as running message updates on a particular chain in $\graphfg$ with modified potentials along that chain.
%
\commentout{
At each iteration $t$ of the algorithm, we obtain some augmenting path $\mathcal{T}{(t)}$ and bottleneck capacity $f{(t)}$,
modify the potentials of  $\graphfg$ as described in the previous section, and run MP forward and backward on $\edgesfg({\mathcal{T}{(t)}})$ holding all other messages fixed. The message passing begins with the unary
potential corresponding to $\potfun_{\mathcal{T}_1}$. In the forward pass, messages are passed along all edges in the chain to $\potfun_{\mathcal{T}_n}$
then back again in the backward pass on the chain. A full forward and backward pass is referred to as running MP
 on the chain $\edgesfg(\mathcal{T}{(t)})$.
}
Having shown it equivalent to \ouralg, we perform the bulk of our analysis on \ouralg~2, because it is
easier to analyze.
Here, we analyze the dynamics of each iteration in terms of the effect on the messages on edges
$\edgesfg(\augpath(t))$.
\commentout{We show that given the messages on edges from previous iterations, and given the modification to the chain potentials, running MP will result in changes to the chain edge messages that can be computed in closed form. In addition, we define several important properties of these messages. Those properties are used in \secref{sec:msg-free-APMP-section} to prove the relationship to graph cut reparametrizations. }
We begin with some definitions.

%As we shall prove in the sequel, due to the global properties of the scheduling we employ, effectively (ignoring constant terms) only messages from nodes in $\varvec^q$ to nodes in $\varvec^q$, and messages from the 2 chain edges potentials to the chain edge variables are changed. \TODO{notation for chaing edges? figure? lemma on this next section?}

\commentout{
\begin{figure}[tb]
\centering
\subfigure[]{\label{fig:msgs1}\includegraphics[width=.38\columnwidth]{ab_pw_msg.pdf}}
\subfigure[]{\label{fig:msgs2}\includegraphics[width=.38\columnwidth]{ab_pw_msg2.pdf}}
\subfigure[]{\label{fig:msgs3}\includegraphics[width=.18\columnwidth]{saturated_edge.pdf}}
\caption{
An illustration of an iteration during Phase 1 of APMP.
Potentials are given in square brackets, and messages are shown in parentheses.
\subref{fig:msgs1} Example edge between $\var_i$ and $\var_j$ with arbitrary messages
fixed on edges other than $(i,j)$.
\subref{fig:msgs2} If $a = a_1 + a_2 + a_3$ and $b = b_1 + b_2 + b_3$, then
we consider this edge to be \emph{\inbalance}. Note that after normalizing, the beliefs in this case at both $\var_i$ and $\var_j$ are $\mathbf{0}$, because the incoming messages from $a_k$'s and $b_k$'s cancel with $a$ and $b$.
\subref{fig:msgs3} Zoomed in view of edge $(i,j)$, which is also \emph{saturated} because equality holds between message and potential entries connected by dotted lines.
}
\label{fig:msgs}
\end{figure}
}

\begin{deff} The pairwise potential $\potfun_{ij}$ is \emph{saturated} if and only if
%
%\begin{eqnarray*}
$\potval_{ij}^{10}= \msg{\var_i}{\potfun_{ij}}(0)$,
$\potval_{ij}^{01} = \msg{\var_i}{\potfun_{ij}}(1)$,
%\end{eqnarray*}
%\begin{eqnarray*}
$\potval_{ij}^{10}  =  \msg{\var_j}{\potfun_{ij}}(1)$, and
$\potval_{ij}^{01}  =  \msg{\var_j}{\potfun_{ij}}(0)$.
%\end{eqnarray*}
\end{deff}

%\noindent $\potfun_{ij}$ in \figref{fig:msgs2} is saturated, because equality holds
%between terms connected by dotted lines in \figref{fig:msgs3}.
We can also
refer to a potential as being \emph{left-saturated} if the first two equalities hold
 (those involving $\var_i$), and \emph{right-saturated} if the latter two equalities hold
  (those involving $\var_j$).

%%%%%%%%%%%
% LEMMA 1 %
%%%%%%%%%%%
\begin{lem} \label{lem:sat_msgs2}
When running MP with a left- (right-) saturated potential $\potfun_{ij}$, the outgoing factor-to-variable message is equal to the incoming
variable-to-factor message $\msg{\potfun_{ij}}{\var_j} = \msg{\var_i}{\potfun_{ij}}$ ($\msg{\potfun_{ij}}{\var_i} = \msg{\var_j}{\potfun_{ij}}$).
\end{lem}
\begin{proof}
This is a special case of Lemma \ref{lem:sat_msgs}.
%For some left-saturated potential $\potfun_{ij}$ and incoming message $\msg{\var_i}{\potfun_{ij}}$ as in \figref{fig:msgs3}, the standard MP updates from $\potfun_{ij}$ to $\var_j$ are easily computed:
%%
%\begin{eqnarray*}
%\msg{\var_i}{\potfun_{ij}}(\var_i) & = & \left( \begin{array}{c}
%					a \\
%					b
%				  \end{array} \right) \\
%\msg{\potfun_{ij}}{\var_j}(\var_j) & = & \min_{\var_i} \left[ \potfun_{ij}(\var_i, \var_j) + \msg{\var_i}{\potfun_{ij}}(\var_i)  \right] \\
%& = & \min_{\var_i} \left[ a \cdot \var_i(1 - \var_j) + b \cdot (1 - \var_i)\var_j \right. \\
%& + & \left. a \cdot (1 - \var_i) + b \cdot \var_i \right] \\
%\msg{\potfun_{ij}}{\var_j}(0) & = & \min_{\var_i} \left[ a \cdot \var_i  + a \cdot (1 - \var_i) + b \cdot \var_i \right] \\
%& = & \min(a + b, a) = a \\
%\msg{\potfun_{ij}}{\var_j}(1) & = & \min_{\var_i} \left[ b \cdot (1 - \var_i) + a \cdot (1 - \var_i) + b \cdot \var_i \right] \\
%& = & \min(a + b, b) = b \\
%\msg{\potfun_{ij}}{\var_j}(\var_j) & = & \left( \begin{array}{c}
%					a \\
%					b
%				  \end{array} \right)
%\end{eqnarray*}
%%
%where we have used the assumption that all potential entries are positive.
%% (recall that we have restricted our attention to submodular energies).
%An analogous argument can be used to show that this holds in the opposite direction (from $\var_j$ to $\potfun_{ij}$ and $\potfun_{ij}$ to $\var_i$).
\end{proof}
%\noindent Therefore, Lemma \ref{lem:sat_msgs}
This explains how the outgoing messages from factor $\potfun_{ij}$ were computed in \figref{fig:msgs_flow}.

%%%%%%%%%%%%%%%%%%%%%%%
% DEF EDGE CALIBRATED %
%%%%%%%%%%%%%%%%%%%%%%%
\begin{deff} We say that messages are \emph{\inbalance}~with respect to a set of edges if
the messages on the edges are at a fixed point of MP when all other messages are held fixed.
\end{deff}

%(a fixed point of MP restricted to the subgraph composed of only the edge and variables. with all other messages being fixed)\TODO{fix}

%\noindent For example, the messages in \figref{fig:msgs2} are \inbalance~ with respect to edges
%incident to $\potfun_{ij}$ if and only if $a = a_1 + a_2 + a_3$ and $b = b_1 + b_2 + b_3$.
%The significance of edges that are both saturated and \inbalance~ will be made clear shortly.

%Our decomposition using the $u$ quantities was chosen to add to $q$ the
%largest amount of potential that guarantees that all edges remain saturated after a forward and backward pass of updating messages on the chain (which is necessary for our later proof that the reparametrization of message passing is equivalent to the reparametrization of graph cuts). Alternatively, we could have achieved the same results with an adaptive message damping scheme. \TODO{we don't know that it's the largest amount that guarntees that ahead of time or afterwards. In fact, any amount f that we added to all would saturate. I disagree. }

%\TODO{in order to precisly specify what it means for ALL edges to be saturated we need the concept of using DECOMPOSE on the maintained u regardless of any call to SCHEDULE.}
%\begin{deff}
%We define a \emph{decomposed graph} as a graph that has all edges, all messages on the edges, and all %potentials made only of the u part.
%\end{deff}

%\begin{deff}
%We say that a decomposed graph is \emph{saturated} if every edge in the graph is saturated.
%\end{deff}

\begin{deff} A chain $\edgesfg(\augpath(t))$ is said to be $\potfun$-saturated and \inbalance~if all potentials
$\potfun_{\augpath(t)} \subseteq \potfun$ on the chain are saturated
and all edges are \inbalance.
\end{deff}

%%%%%%%%%%%
% LEMMA 2 %
%%%%%%%%%%%
\begin{lem} \label{lem:forward_increment}
%At iteration $t$, after a forward pass on the chain, $\forall m \in \msgst$ \TODO{going in the forward pass direction... }:
%\begin{align*}
%\msg{\var_j}{\potfun_{ij}}^{(t)}(0) &= \msg{\var_j}{\potfun_{ij}}^{(t-1)}(0) \\
%\msg{\var_j}{\potfun_{ij}}^{(t)}(1) &= \msg{\var_j}{\potfun_{ij}}^{(t-1)}(1)+f^{(t)} \\
%\msg{\potfun_{ij}}{\var_j}^{(t)}(0) &= \msg{\potfun_{ij}}{\var_j}^{(t-1)}(0) \\
%\msg{\potfun_{ij}}{\var_j}^{(t)}(1) &= \msg{\potfun_{ij}}{\var_j}^{(t-1)}(1)+f^{(t)}
%\end{align*}
%A similar update applies for the only two messages involving a singleton potential in the forward pass along the chain,  $\msg{\potfun_{i}}{\var_i}^{(t)}$ the beginning of the chain, and $\msg{\var_i}{\potfun_{i}}^{(t)}$ at the end of the chain. In words,
%At iteration $t$
Start with a chain $\edgesfg(\augpath(t))$ that is $\potfunu$-saturated and \inbalance, then
increment and reduce potentials as described in \secref{sec:pot_decompose}.
In the forward pass, each variable node along the chain
 will receive a message from its left-neighboring factor in the chain that is equal---up to a message
 normalization---to the previous message on the edge plus $(0, f{(t)})^T$.
\end{lem}

\begin{proof}
%The next message, $\msg{\var_{\mathcal{T}_1}}{\potfun_{\mathcal{T}_1,\mathcal{T}_2}}$ is computed by summing all incoming messages to $\var_{\mathcal{T}_1}$. With the exception of $\msg{\potfun_{\mathcal{T}_1}}{\var_{\mathcal{T}_1}}$, all are arriving from unmodified potentials in $\graphfg$
%(\emph{i.e} from the set $\potfunqnt$),%
%and so have not been changed. Thus due to the contribution from $\var_{\mathcal{T}_1}$, $\potfun_{\mathcal{T}_1,\mathcal{T}_2}$ it now incremented by $f^{(t)}$ in the 0 position. Now, since every pairwise potential value $\potval^{01}$ along the chain has been incremented by $f^{(t)}$ by construction, by Lemma \ref{lem:sat_msgs}, and the saturation assumption previous to the $u$ update, the message will propagate unchanged through $\potfun_{\mathcal{T}_1,\mathcal{T}_2}$ to $x_{\mathcal{T}_2}$. This process continues along the chain edges, as each variable node receives an incremented message as described above and passes it through the pairwise potential to the next edge on the chain unchanged due to the previous saturation assumption and the identical increment in $f^{(t)}$ the potential $\potval^{01}$  to that of the incoming message for the $0$ setting. This is illustrated in \figref{fig:msgs_flow3}-\figref{fig:msgs_flow4}. \TODO{better correpondence to the figure. Explain the first, then put all refs }.
See supplementary material.
\commentout{
Assume inductively that $\var_i$ will receive a message from its left-neighboring factor in the chain, say $\potfun_\alpha$,
that increments $\msg{\potfun_\alpha}{\var_{i}}(1)$ by $f{(t)}$, as shown in \figref{fig:msgs_flow2}.

\textbf{Without Normalization and Reduction: } If we did not normalize messages, the change would propagate directly to the
variable-to-factor message $\msg{\var_i}{\potfun_\beta}$ because (before normalization) outgoing
messages from a variable are equal to the sum of incoming messages from all neighboring
factors other than the factor receiving the message.
Next, if we did not reduce potentials, by Lemma \ref{lem:sat_msgs}, we would know that the message would
 also propagate
unchanged through $\potfun_{\beta}$ to $\msg{\potfun_{\beta}}{\var_j}$, because $\potfun_\beta$
would be left-saturated.  This is illustrated in \figref{fig:msgs_flow3}.
This would complete the inductive step, because the next variable on the path, $\var_j$
has received the incremented message.

\textbf{With Normalization and Reduction: }
Since we do normalize messages, the actual message sent from $\var_i$ to $\potfun_\beta$ will
be $(a - c, b + f(t) - c)^T$ where $c = \min(a, b + f(t))$.
The REDUCE step is defined to apply the exact same normalization to the \emph{potentials}, so
$\potval^{01\supu}_\beta$ will be $b + f(t) - c$, and $\potval^{10\supu}_\beta$ will be $a - c$.
$\potfunu_\beta$ is then a left-saturated potential, so it will propagate the message
$(a - c, b + f(t) - c)^T$, which---up to a message normalization---is a change of $(0, f(t))^T$.
%Assume that all edges were both \inbalance~ and saturated before incrementing potentials as in \eqref{eq:increment_u}. Given $\mathcal{T}^{(t)}$, recall that the resulting decomposition had the effect of increasing every variable $u$ corresponding to a potential along the forward direction of the chain (the direction of $\mathcal{T}^{(t)}$) by $f^{(t)}$ (specifically, all $u$ corresponding to elements in $\potval_{\mathcal{T}}$) before setting the all potentials in $\graphfg$ to their current values. We show this for some pairwise potential edge $ij$ corresponding to $x_i \rightarrow x_j$ in $\mathcal{T}^{(t)}$ in figure \figref{fig:msgs_flow1}.

For the base case, at the beginning of the chain, a unary potential will get incremented by $(0, f(t))^T$.
The message that it sends to the first variable on the chain will clearly be the previous message plus this increment.
%$\msg{\potfun_{\mathcal{T}_1}}{\var_{\mathcal{T}_1}}$ is exactly equal to $\potfun_{\mathcal{T}_1}$, which had just been incremented by $f^{(t)}$ in the 0 position due to the construction, and the claim trivially holds.
%The base case is trivially true, because all augmenting paths start with a
%singleton potential $(0, f)^T$.
}
\end{proof}

\begin{cor} \label{cor:backwards_inc}
%At iteration $t$ after a backwards pass on the chain, each factor or variable node along the chain will receive a message from its previous neighbor in the chain that is equal to the previous message on the edge
Start with a chain $\edgesfg(\augpath(t))$ that is $\potfunu$-saturated and \inbalance, then
increment and reduce potentials (\secref{sec:pot_decompose}).
In the backward pass, each variable node along the chain
 will receive a message from its right-neighboring factor in the chain that is equal---up to a message
 normalization---to the previous message on the edge
plus $(f{(t)}, 0)^T$.

%At iteration $t$, after a backward pass on the chain, $\forall m \in \msgst$:
%\begin{align*}
%\msg{\var_j}{\potfun_{ij}}^{(t)}(0) &= \msg{\var_j}{\potfun_{ij}}^{(t-1)}(0) \\
%\msg{\var_j}{\potfun_{ij}}^{(t)}(1) &= \msg{\var_j}{\potfun_{ij}}^{(t-1)}(1)+f \\
%\msg{\potfun_{ij}}{\var_j}^{(t)}(0) &= \msg{\potfun_{ij}}{\var_j}^{(t-1)}(0) \\
%\msg{\potfun_{ij}}{\var_j}^{(t)}(1) &= \msg{\potfun_{ij}}{\var_j}^{(t-1)}(1)+f
%\end{align*}
%A similar update applies for the only two messages involving a singleton potential in the forward pass along the chain,  $\msg{\potfun_{i}}{\var_i}^{(t)}$ the beginning of the chain, and $\msg{\var_i}{\potfun_{i}}^{(t)}$ at the end of the chain. In words, each factor or variable node along the chain will receive a message from its previous neighbor in the chain that is equal to the last message ``on the edge'' from the previous iteration involving that edge, plus $(f, 0)^T$.

%After incrementing $u$ and making a backward pass of messages using max-product,
%each node along the augmenting path will receive a message from its neighbor,
%which is equal to the message on the edge from the previous step
%plus $(f, 0)^T$.
\end{cor}

\begin{proof}
\figref{fig:msgs_flow4} illustrates the result.
The proof is essentially identical to the proof of Lemma \ref{lem:forward_increment}.
 Note that the messages sent backward on a chain are
independent of messages sent forward, so there is no interaction
between the forward and backward pass.
%\TODO{mention that the f position is flipped, mention that edge saturation now only works in the backward direction! so need a finer def of the edge saturation, right? like left and right saturation...}.
\end{proof}

\begin{lem} \label{lem:saturated_in_balance}
Given a chain $\edgesfg(\augpath(t))$ that is $\potfunu$-saturated and \inbalance,
incrementing and reducing potentials (\secref{sec:pot_decompose}),
and passing messages forward and backward
on the chain $\edgesfg(\augpath(t))$ leaves all edges \inbalance~and all potentials along
the chain $\potfunu$-saturated.
\end{lem}

\begin{proof}
Directly from Lemma \ref{lem:forward_increment} and
Corollary \ref{cor:backwards_inc}.
\end{proof}

\begin{lem} \label{lem:reparam}
After each iteration of \ouralg, %full round of passing messages on the chain $\edgesfg(\augpath(t))$
all edges $\edgesfg(\augpath(t))$ are \inbalance~and all potentials along the chain are $\potfunu$-saturated.
\end{lem}

\begin{proof}
By induction.  In the base case, all messages and pairwise
potentials on the chain are set to $0$. This produces saturated and \inbalance~ edges.
Lemma \ref{lem:saturated_in_balance} proves the inductive step.
\end{proof}

\begin{lem} \label{lem:const_beliefs}
Before and after each iteration of \ouralg, the change in unary belief at each variable in $\varvec_{\augpath(t)}$
will be $(0, 0)^T$, up to a constant normalization.
\end{lem}

\begin{proof}
This is a direct consequence of Lemma \ref{lem:forward_increment}
and Corollary \ref{cor:backwards_inc}; when computing beliefs for each
node in $\varvec_\augpath$, the message increment
$(0, f(t))^T$ from the previous neighbor in $\edgesfg(\augpath(t))$ will exactly match the message increment
of $(f(t), 0)^T$ from its later neighbor in $\edgesfg(\augpath(t))$, yielding a total change in belief of
$(f(t), f(t))^T$.
\end{proof}

\begin{lem} \label{lem:fixed_point}
At the end of any iteration $t$ of \ouralg,
%define $\potfun_{i}^{\supu}(0) = u_{i}^{0}$ and $\potfun_{i}^{\supu}(1) = u_{i}^{1}$ for all $i \in \nodes$ and $\potfun_{ij}^{\supu}(1, 0) = u_{ij}^{10}$ and $\potfun_{ij}^{\supu}(0, 1) = u_{ij}^{01}$ for all $ij \in \edges$.
%Then
$\graphfg(t+1) = (\nodes, \edges; \tilde \potfun^{\supu}(t),
\messages(t+1))$ is a fixed point of MP.
\end{lem}
\begin{proof}
By Lemma \ref{lem:reparam}, all edges $\edgesfg(\augpath(t))$ from the most recent augmenting path, $\augpath{(t)}$, are
$\potfunu$-saturated and \inbalance. It follows from Lemma \ref{lem:const_beliefs} that
messages from variables in $\varvec_{\augpath{(t)}}$ to edges not in $\edgesfg(\augpath(t))$ were
unchanged up to a constant normalization, which would not have unsaturated or
uncalibrated any edges not in $\edgesfg(\augpath(t))$, because we are assuming we apply
standard MP message normalization.  Further, no other potentials in the
graph have changed.  Thus, since messages and potentials outside $\edgesfg(\augpath(t))$
have both not changed,
All edges are $\potfunu$-saturated and \inbalance, which implies that we are at a
fixed point.  At iteration 0, all $u$ are 0, so initializing
messages to 0 gives a trivial fixed point.
\end{proof}

This gives us an additional, powerful interpretation of \ouralg.  At each step, we
both incorporate more of the original potentials to our ``working set'' of potentials, $\potfunu$,
and specify a message schedule to incorporate
these new potentials. We do so in a careful manner, ensuring that
we only add potentials such that one forward-backward pass
on the corresponding augmenting path will lead us to a fixed point of MP on a graph
representing the new energy.
%Thus, our algorithm can be seen as generating a sequence of fixed points
%of MP for a sequence of energy functions that are lower bounds on the original energy function at each step.

\section{Message Free \ouralg}\label{sec:msg-free-APMP-section}

Here, we present an equivalent ``message-free'' version
of \ouralg---one that directly modifies potentials rather than sending messages.
We compare the reparametrization performed by message-free \ouralg~2 and the reparametrization performed by graph cuts, and show them to be equivalent at each step. We use the reparametrization view of max-product from \shortciteA{Wainwright04}.

An important identity, which is a special case of the junction tree representation \shortcite{Wainwright04},
states that we can equivalently view MP on a tree as reparameterizing $\potfun$ according to beliefs $b$:
\begin{align*} \label{eq:trmp_reparam_unused}
&\sum_{i \in \nodes} \tilde \potfun_{i}(\var_i) + \sum_{ij \in \edges} \tilde \potfun_{ij}(\var_i, \var_j) \\
 &\! = \sum_{i \in \nodes} b_{i}(\var_i)
	  + \sum_{ij \in \edges} \left[ b_{ij}(\var_i, \var_j) - b_{i}(\var_i) - b_{j}(\var_j) \right]  \hbox{.}
\end{align*}
$\tilde \potfun$ is a reparametrization: % of $\potfun$ because
$E(\varvec; \potfun) = E(\varvec; \tilde \potfun)$ $\forall \varvec$.
\commentout{
This is a special case of Corollary 13.3 in \shortcite{Koller+Friedman:09}, which
states that MP in an arbitrary loopy cluster graph can be viewed
as reparameterizing the initial energy function --- or in other words, can be implemented
by using messages to directly change potentials.
}
%in a ``message free'' way.
Beliefs can be computed before inference finishes, and this is true for
damped factor graph max-product even if factor to variable
messages are damped.  This is known, but for completeness, we give a proof in the
supplementary materials. \TODO{find a citation?}

\textbf{Decompositions and Reparametrization: } We can
use this reparametrization in conjunction with a decomposition.
\commentout{
As usual, decompose the energy function at iteration $t$ relative to augmenting path $\mathcal{T}$ as
\begin{eqnarray*}
E^t(\varvec) & = & q^t(\varvec_{\mathcal{T}}) + r^t(\varvec)
\end{eqnarray*}
where $\varvec_{\mathcal{T}} \subseteq \varvec$ are variables corresponding to nodes in $\mathcal{T}$.
If there are no messages initially on the edges, it is clear that
passing messages on $q^t$ then computing beliefs gives us min-marginals relative to $q^t$.
%$q^t$ is defined on a subset of variables $\varvec_\mathcal{T}$, and it includes potentials $\potfun^{(q)}$.
Equivalently, $q$ could initially be defined on a factor graph with all messages set to 0,
$\graph_{\mathcal{T}} = (\nodes_{\mathcal{T}}, \edges_{\mathcal{T}}; \potfun^{(q)}, \mathbf{0})$
where $\edges_{\mathcal{T}}$ are the factor graph edges in $\edges$ that correspond
to edges in the augmenting path $\mathcal{T}$, and $\potfun^{(q)}$ is defined using $u$ variables
as described in Algorithm 1.
After passing messages, we could apply the reparametrization identity to get an alternative
parametrization of $q^t$, say $\tilde q^t$,  that assigns equal energy to each assignment $\varvec$.
Adding the reparameterized $\tilde q^t$ back to $r^t$, we get
\begin{eqnarray*}
\tilde E^t(\varvec) & = & \tilde q^t(\varvec_\mathcal{T}) + r^t(\varvec) \hbox{,}
\end{eqnarray*}
which is clearly equivalent to and thus a reparametrization of $E^t$.
Indeed, this is one interpretation of the ``message free'' variant of TRMP,
where messages are used only to compute the min-marginal reparametrization $\tilde q^t$
then are discarded and the potentials defining $\tilde q^t$ are added back to the potentials
defining $r^t$ to form a new, equivalent parametrization of $E$.\footnote{Note that this is a
justification that the message free TRMP algorithm is reasonable, but it does not prove
that the message free variant is equivalent to a particular tree-based schedule
for standard MP.  Wainwright proves the equivalence of parallel sum-product
and a message free version of sum-product TRP where trees are taken to be individual edges.
It is not clear that this proof holds for our
modified variant of TRMP, so we also have to prove this equivalence for \ouralg.}

In general when there are messages on edges, it is slightly more complicated.
We know, however, that at each step of a MP execution, the messages on
edges define a reparametrization of the form in \eqref{eq:trmp_reparam}.
Thus, even though it may be difficult to express in closed form,
the parametrization implied by the potentials and messages on the edges is equivalent to $E$.

Recall the operations of \ouralg~at iteration $t$:
\begin{itemize}
\item Choose potentials $\potval_{\augpath(t)}$ corresponding to augmenting path $\mathcal{T}(t)$ and increment
each potential by $f(t)$.
\item Pass messages forward and backward on $\edgesfg(\augpath(t))$.
\end{itemize}
In our analysis, we consider how the parametrization at each variable and edge \emph{change}
from the start of an iteration to the end of an iteration.
}
To analyze the reparametrization, we split the energy function into $\potfunu$ terms and $\potfunr$ terms,
grouping messages with $\potfunu$ terms and treating $\potfunr$ potentials independently.
%$u$ terms are equal to $q$ terms on the current augmenting path, so are equivalent,
%but they also include the potentials in other parts of the graph that were
%used to compute the current messages on the edges not in $\augpath$.
We can justify this division in the following way.  Suppose we have no messages on the
edges, and we decompose the energy function into
%\begin{eqnarray*}
$E(\varvec; \potfun) = \enU(\varvec; \potfunu) + \enR(\varvec; \potfunr) \hbox{.}$
%\end{eqnarray*}
Suppose now we run MP to convergence (assume it will converge) on
$\graphfg(t) =  (\nodes, \edges; \potfunu(t), \messages(t))$,
then use the beliefs to
reparameterize $U$, forming $\tilde U$.  We can then add the reparameterized $\tilde U$
to $R$ to get a reparameterized full energy function, which is clearly equivalent to the original
%\begin{eqnarray*}
($E(\varvec; \tilde \potfun) = \tilde \enU(\varvec; \tilde \potfun^{\supu}) + \enR(\varvec; \potfunr)$).
%\end{eqnarray*}
Lemma \ref{lem:fixed_point} shows that this is equivalent to \ouralg~2, except
messages at each step are not initialized to 0.  Instead, messages are initialized with the messages
from the previous iteration.  From this initialization, Lemma \ref{lem:fixed_point} shows
that we only need to pass messages on the chain  $\edgesfg(\augpath(t))$ in order
to reach a new fixed point, and at the end of each iteration, the graph $\graphfg = (\nodes, \edges; \tilde \potfun^{\supu}, \messages(t+1))$
is at a MP fixed point.  For purposes
of analyzing the reparametrization, it is unimportant how we arrived at this fixed point.
It is sufficient to know that it is guaranteed to exist, and we are guaranteed to have
arrived at it by running \ouralg~2.

%reused and unchanged from
%one iteration to the next, except for the updates on $\augpath$.
%we do not need to run max-product on $u$, because the messages currently on the
%edges will be a fixed point of the graph defined on $u$.

%We will consider the reparametrization
%implied by $u$ potentials in conjunction with the messages on the chain.  Thus,
%We have grouped the $\potfunu$ component of the potentials with the messages, which we will
%analyze jointly below.  As explained above, the $\potfunr$ component can be treated independently, with
%terms in the $\potfunr$ potentials simply added to the reparameterized terms of the $\potfunu$ and
%message component derived below.

\textbf{Analyzing the Reparametrization: }
The parametrization relative to $R$ is as follows.  Before decomposing
potentials into $\potfunu$ and $\potfunr$, the $u$ variables corresponding to
each potential in $\potval_\augpath$ are incremented by $f$, then $\potval_{ij}^{10}$
and $\potval_{ij}^{01}$ are decremented by a constant $c \ge 0$.
$\potfun^{\supr}$ is then modified as %along $\augpath$ are decremented by $f$
%while the component of a potential in $q$ is incremented:
\begin{align*}
\Delta \potfun_{\augpath_1}^{\supr}(\var_{\augpath_1}) = \left[ \begin{array}{c}
					0 \\
					-f
				  \end{array} \right]
& &
\Delta \potfun_{\augpath_N}^{\supr}(\var_{\augpath_N}) = \left[ \begin{array}{c}
					-f \\
					0
				  \end{array} \right]
\end{align*}
\begin{eqnarray} \label{eq:delta_potfun_r}
\!\!\!\!\!\!\!\!\Delta \potfun_{ij}^{\supr}\!(\var_i, \var_j) \!\!&\!\!\!\!=\!\!&\!\!\!\!\left[ \begin{array}{cc}
					\!\!0 &\!\! c-f \!\!  \\
					\!\!c &\!\!\! 0\!\!
				  \end{array} \right] \;ij \in \edgesfg_{nonterm}(\augpath(t))
\end{eqnarray}
%where the pairwise delta is for all edges $(i,j)$ in $\edgesfg_{nonterm}(\augpath(t))$.
% \qquad \hbox{for the first node, $i$, in $\augpath$}
% \qquad \hbox{for $ij \in \edges_\augpath$}\\
%
As explained above, $\Delta \potfun^{\supr}$ terms can be treated separately from the $U$
and message terms, so we will leave these differences in $\potfunr$ potentials as is, adding
them to the updated $\tilde \potfun^{\supu}$  at the end.
%
%These will be the only different potentials as compared to the previous iteration.  The
%reparametrization will be defined by the potentials and the messages, which will also
%change, albeit in a very structured way.

%\begin{lem} \label{lem:increasing_u}
%If $t_j > t_i$, then for corresponding $u$, $u^{t_j} \ge u^{t_i}$ and $u^{t_j} \ge u^{t_i}$.
%\end{lem}
%\begin{proof}
%Each $u$ is only modified when it is involved in an augmenting path, in which case it
%is incremented by $f$.  $f$ is always positive, which completes the proof.
%\end{proof}
%

%At iteration $t + \frac{1}{2}$, when the energy has
%been decomposed but messages have not been updated, $E^t = q^t + f - f + r^t$.

%we used an edge in $\augpath$, $r^t$

%\begin{eqnarray*}
%\Delta \potfun_{ij}^{(q)}(\var_i, \var_j) & = & \left[ \begin{array}{cc}
%					0 & +f \\
%					0 & 0
%				  \end{array} \right] \qquad \hbox{for $ij \in \edges_\augpath$}
%\end{eqnarray*}

We now proceed to analyze the reparametrization being performed on the $U$
component of the energy function by the messages $\messages(t)$ and $\messages(t+1)$.
We divide our analysis of the reparameterized $U$ into three cases:
%in increasing order of difficulty:
%\begin{enumerate}
%\item The parametrization at variables $\varvec_{\neg \augpath(t)}$ (not in $\varvec_{\augpath(t)}$),
%and potentials $\potfunu_{\neg \augpath(t)}$ (not in or neighboring $\potfunu_{\augpath(t)}$).
%\item The parametrization  at potentials $\potfunu_{\neighbors(\mathcal{T}(t))}$ (neighboring variables in $\varvec_{\augpath(t)}$).
%\item The parametrization  at variables $\varvec_{\augpath(t)}$  and potentials $\potfun_{\augpath(t)}$.
%\end{enumerate}

%Variables not in $\varvec_{\augpath(t)}$,and potentials $\potfunu_{\neg \augpath(t)}$:
\textbf{Case 1}  Variables and potentials not in or neighboring $\varvec_{\augpath(t)}$
will not have any potentials or adjacent beliefs changed, so the
reparametrization will not change.

% Potentials $\potfunu_{\neighbors(\mathcal{T}(t))}$:
\textbf{Case 2} Potentials neighboring $\var \in \varvec_{\augpath(t)}$ but not in $\edgesfg(\augpath(t))$ could possibly
be affected by the belief at a variable in $\varvec_{\augpath(t)}$,
since the reparametrization of an edge depends on the beliefs of
variables at each of its endpoints.  However, by Lemma \ref{lem:const_beliefs},
after applying standard normalization, this belief does
not change after a forward and backward pass of messages, so overall they are unaltered.

\textbf{Case 3} Variables $\varvec_{\augpath(t)}$  and potentials $\potfun_{\augpath(t)}$: Since Case 1 and Case 2 do not change the parametrization, the
reparametrization of the full $\potfunu$ and messages component is the same as the reparametrization
restricted to Case 3.  The full reparametrization of $E$ will also take into account
$\potfunr$ terms, which will be added to the final reparameterized $\tilde \potfun^{\supu}$ and message potentials.
We showed in Case 2 that up to a normalization, beliefs at individual variables on $\varvec_{\augpath(t)}$ will
not change.  This directly implies that the corresponding parametrization does not change.
We now consider the parametrization of potentials $\potfunu_{ij} \in \potfunu_{\augpath(t)}$.
This is the most involved case, where the parametrization
does change, but it does so in a very structured way.

\begin{lem} \label{lem:reparam_u}
After incrementing potentials then passing messages
forward and backward on $\edgesfg(\augpath(t))$ while holding all other
messages fixed, the change in parametrization from $\potfunu$ and $\messages(t)$ to
$\tilde \potfun^{\supu}$ and $\messages(t + 1)$ will be
\begin{eqnarray} \label{eq:reparam_delta_u}
\!\!\Delta \tilde \potfun^{\supu} &  = & \left[ \begin{array}{cc}
					   0 & - c  \\
					 + f -c & 0
				  \end{array} \right] \;\; \forall \potfun \in \potfun_{\augpath(t)}
\end{eqnarray}
\end{lem}
\begin{proof}
See supplementary material.
\commentout{
We consider the initial parametrization of $\varvec_{\augpath(t)}$ and $\potfun_{\augpath(t)}$ at
the start of iteration $t$, before passing
messages.  We use the messages $\messages(t)$, which by Lemma \ref{lem:fixed_point} define
 a fixed point of $\graphfg(t)$, to compute the reparametrization done by the messages.
%
%Before passing messages over the subgraph corresponding to $\augpath$, but after
%adding $f$ to the $q$ component,
At the end of the previous iteration (and thus the start of the current iteration),
the unary and pairwise beliefs (illustrated in \figref{fig:msgs2}) take the following form:

\begin{eqnarray*}
b_{i}^{\supu}(\var_i) 	  & = & \left[ \begin{array}{c}
					0\\
					0
				  \end{array} \right] + (a + b)\\
b_{ij}^{\supu}(\var_i, \var_j) & = & \left[ \begin{array}{cc}
					 0  & a  \\
					 b  & 0
				  \end{array} \right]	+ (a + b)  		
\end{eqnarray*}

By Lemma \ref{lem:const_beliefs}, we know the unary beliefs will be constant.
Computing the reparametrization of pairwise potentials using \eqref{eq:trmp_reparam}
is then trivial:
\begin{eqnarray} \label{eq:initial_u_reparam}
\tilde \potfun_{ij}^{\supu}(\var_i, \var_j) & = &  \left[ \begin{array}{cc}
					0 & a \\
					b & 0
				  \end{array} \right] + const \hbox{.}
\end{eqnarray}
%
%The singleton potentials for the flanking variables of any pairwise potential that was modified by the additional $f$ are all $\mathbf{0}$, since they are not modified, and their messages are in-balance, except for at the first
%and last node in the chain, since those are modified as shown in
%\TODO{add explicit equation for the singletons at the very edges}:
%\begin{eqnarray*}
%\potfun^{\supu}(\var_i) & = & \left[ \begin{array}{c}
%					0 \\
%					f
%				  \end{array} \right] \qquad\qquad \hbox{ for $i = 1$}\\
%\potfun^{\supu}(\var_i) & = & \left[ \begin{array}{c}
%					f \\
%					0
%				  \end{array} \right] \qquad\qquad \hbox{ for $i = N$}\\
% \potfun^{\supu}(\var_i) & = & \left[ \begin{array}{c}
%					0\\
%					0
%				  \end{array} \right] \qquad \qquad \;\;\; \hbox{ for $i \not = 1, N$}
%\end{eqnarray*}
%
%\TODO{we can't justify this in the following and still calc reparam when not fixed point}
After incrementing and reducing potentials and sending messages forward and backward on the chain,
Lemma \ref{lem:reparam} shows that we get the
\inbalance~edges and saturated potentials shown in \figref{fig:msgs_flow4}, minus
a reducing term $c$.
%This is a fixed point of the full augmenting path chain, so we can compute
%min-marginals with respect to $q$, holding all other messages fixed.
%This is
The standard computation of beliefs at variables and edges gives
\begin{eqnarray*}
b_{i}^{\supu}(\var_i) & = & \left[ \begin{array}{c}
					0\\
					0
				  \end{array} \right] + (a + b + f)\\
b_{ij}^{\supu}(\var_i, \var_j)  & = & \left[ \begin{array}{cc}
					0       & a - c \\
					 b + f - c & 0
				  \end{array} \right]	+ (a + b + f)  		
\end{eqnarray*}
%
%Singleton beliefs are all 0 (up to a constant), so after removing constant terms,
The reparametrization done by the updated messages is then
\begin{eqnarray} \label{eq:final_u_reparam}
\tilde \potfun^{\supu}(\var_i, \var_j) & = & \left[ \begin{array}{cc}
					0       & a -c \\
					 b + f -c & 0
				  \end{array} \right] + const \hbox{.}
\end{eqnarray}	
Finally, the difference between the final parametrization in  \eqref{eq:final_u_reparam} and the initial
parametrization in \eqref{eq:initial_u_reparam} is
\begin{eqnarray*}
\Delta \tilde \potfun_{ij}^{\supu}(\var_i, \var_j) & = & \left[ \begin{array}{cc}
					0       & -c  \\
					 f - c & 0
				  \end{array} \right] + const \hbox{,}
\end{eqnarray*}		
which completes the proof.
}
\end{proof}
\vspace{-.1in}		
We can now assemble the full change in parametrization,
%\begin{eqnarray*}
$\Delta \tilde \potfun = \Delta \tilde \potfun^{\supu} + \Delta \potfun^{\supr} \hbox{.}$
%\end{eqnarray*}
%, where
%$\tilde u$ is the reparameterized energy function defined by $\graph_{\tilde u} = (\nodes, \edges; \tilde \potfun^{u})$.
This gives our main result: the change in parametrization done by one round of
message passing forward and backward on a chain in \ouralg~is \emph{exactly} equal to the change in parametrization performed by
graph cuts when pushing flow through an augmenting path.
\begin{thm} \label{lem:final}
%At each step of execution,
The difference between two reparametrizations induced by the messages in \ouralg, before and after
passing messages forward and backward on the chain corresponding to augmenting path $\augpath(t)$, is equal to the difference between reparametrizations of graph cuts before and after pushing flow through the equivalent augmenting path.
\end{thm}
\begin{proof}

%another missing bit - why can we just start from the length 2 path in figure 4a, show the difference and be done? how is that proving the general case?

%yet another point of confusion is the handwaving around the real reparametrization we supposedly need to consider (calc singleton beliefs=reparamed singletons, then pairwise reparam by subtracting singletons from pairwise beliefs ending up with a total of negative addition vs saying we normalize the singletons first

We simply sum the change in parametrization due to $\potfun^{\supr}$,
given in Eq.\ \eqref{eq:delta_potfun_r}
%\begin{align*}
%\Delta \potfun_{\augpath_1}^{\supr}(\var_{\augpath_1}) = \left[ \begin{array}{c}
%					0 \\
%					-f
%				  \end{array} \right]
%\qquad &
%\Delta \potfun_{\augpath_N}^{\supr}(\var_{\augpath_N}) = \left[ \begin{array}{c}
%					-f \\
%					0
%				  \end{array} \right] \\	
%\Delta \potfun_{ij}^{\supr}(\var_i, \var_j) = &\left[ \begin{array}{cc}
%					0 & c -f \\
%					c & 0
%				  \end{array} \right]
%\end{align*}
%
with the change in parametrization due to $\tilde \potfun^{\supu}$ from Lemma \ref{lem:reparam_u},
given in Eq.\ \eqref{eq:reparam_delta_u}.
%\begin{eqnarray*}
%\Delta \tilde \potfun^{\supu}_{ij}(\var_i, \var_j) & = & \left[ \begin{array}{cc}
%					0       & -c  \\
%					 f - c & 0
%				  \end{array} \right] + const \hbox{.}
%\end{eqnarray*}
%
Adding $\Delta \potfun^{\supr}$ to $\Delta \tilde \potfun^{\supu}$, we see that the changes in potential entries
are exactly the same as those performed by graph cuts in
\eqref{eq:gc_reparametrization_start} - \eqref{eq:gc_reparametrization_end}:
\begin{align}
\Delta \potfun_{\augpath_1}(\var_{\augpath_1}) & =  \! \left[ \begin{array}{c}
					0 \\
					-f
				  \end{array} \right] \qquad\qquad \\
\Delta \potfun_{\augpath_N}(\var_{\augpath_N}) & =  \! \left[ \begin{array}{c}
					-f \\
					0
				  \end{array} \right] \qquad\qquad\\
\Delta \potfun_{\augpath_i}(\var_{\augpath_i}) & = \! \left[ \begin{array}{c}
					0\\
					0
				  \end{array} \right] \qquad \;\;\; \hbox{ $i \not = 1, N$}\\
\Delta \potfun_{{\augpath_i}, {\augpath_j}}(\var_{\augpath_i}, \var_{\augpath_j})& = \! \left[ \begin{array}{cc}
					\!\!0   & \!\!\! -f \!\! \\
					\!\! + f & \!\!\! 0 \!\!
				  \end{array} \right] \;\; \hbox{$\potfun_{{\augpath_i}, {\augpath_j}} \in \potfun_{\augpath(t)}$.}
\end{align}
\end{proof}
}

\section{\ouralg~Phase 2 Analysis} \label{sec:phase_2_analysis}
We now consider the second phase of \ouralg.  Throughout this section, we will
work with the reparameterized energy that results from applying the equivalent reparameterization view
of MP at the end of \ouralg~Phase 1---that is, we have applied the reparameterization to potentials, and
reset messages to 0.  All results could equivalently be shown by working with
original potentials and messages at the end of Phase 1, but the chosen presentation is simpler.

At this point, there are no
paths between a unary potential of the form $(0, a)^T, a>0$ and a unary
potential of the form $(b,0)^T, b>0$ with nonzero capacity.  
Practically, as in graph cuts, breadth first search could be used at this point to find an optimal
assignment.  However, we will show that running Strict MP leads to convergence to
an optimal fixed point.  This proves the existence of an optimal MP fixed point for any binary
submodular energy and gives a constructive algorithm (\ouralg) for finding it.

Our analysis relies upon the reparameterization at the end of Phase 1 defining what we
term \emph{homogeneous islands} of variables.
\begin{deff} A \emph{homogeneous island} is a set of variables $\varvec_{H}$ connected 
by positive capacity edges
 such that each variable $\var_i \in \varvec_H$ has normalized beliefs $(\alpha_i, \beta_i)^T$ where 
either $\forall i. \alpha_i=0$ or $\forall i. \beta_i=0$.  Further, after any number
of rounds of message passing amongst variables within the island, any message
$\msg{\potfun_{ij}}{\var_{j}}(\var_j)$
from a variable \emph{inside} the island $\var_i$ to a variable \emph{outside} the island $\var_j$ is identically 0,
and vice versa.
\end{deff}

Call the variables inside a homogeneous island with nonzero unary potentials \emph{seeds}
of the island.
\figref{fig:hom_islands} shows an illustration of homogeneous islands.
Homogeneous islands allow us to analyze messages independently within each island, 
without considering cross-island messages.

\begin{figure}[t]
\centering
\includegraphics[width=.5\columnwidth]{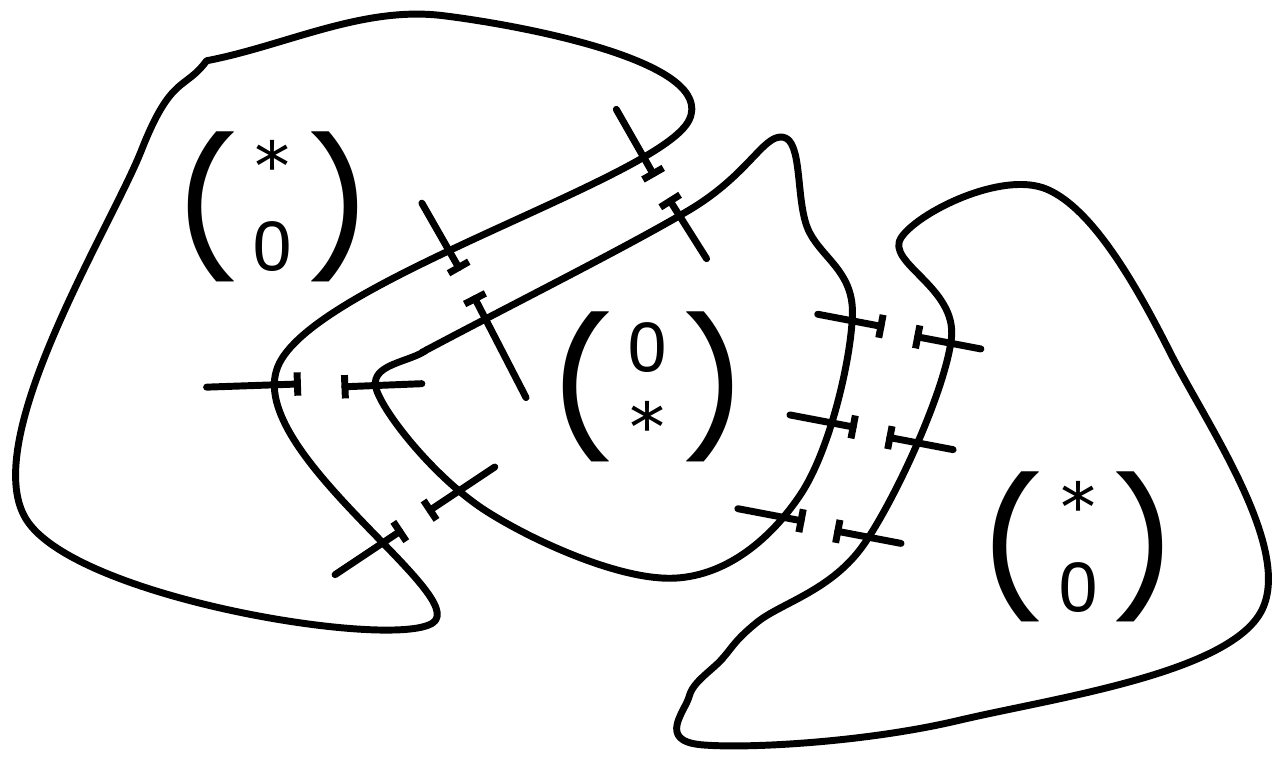}
\caption{
A decomposition into three homogeneous islands.  The left-most and right-most
islands have beliefs of the form $(\alpha,0)^T$, while the middle has beliefs
of the form $(0, \beta)^T$.  Non-touching cross-island lines indicate that messages passed from one
island to another will be identically 0 after any number of internal iterations of
message passing within an island.
}
\label{fig:hom_islands}
\end{figure}

\begin{lem} \label{lem:hom_islands}
At the end of Phase 1, the messages of \ouralg~define a collection of homogeneous islands.
\end{lem}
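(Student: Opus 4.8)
The plan is to push the termination condition of graph-cut max-flow through the Phase~1 equivalence of Theorem~\ref{thm:final}. I will work in the representation fixed at the start of this section --- the reparameterized energy obtained at the end of Phase~1, with all messages reset to $0$. With messages zero, each unary belief equals the corresponding (normalized, canonical-form) unary potential, so every $b_i$ is automatically of the form $(\alpha_i,0)^T$ or $(0,\beta_i)^T$; the \emph{seeds} are exactly the variables whose unary potential is nonzero, and by the residual-capacity identities used by the SCHEDULE routine their nonzero coordinate corresponds to a positive $s$-to-node or node-to-$t$ residual capacity. The real content of the lemma then splits into two claims: (i) the ``type'' of $b_i$ is constant along each island, and (ii) messages leaving an island are identically $0$ under arbitrarily many rounds of within-island passing.

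For (i), I would take the islands to be the connected components of the residual graph restricted to non-terminal nodes and positive-capacity pairwise edges, and argue by contradiction. If one island contained seeds of both types, then prepending the positive-capacity terminal edge into the ``$0$-type'' seed, following a positive-capacity path inside the island to a ``$1$-type'' seed, and appending the positive-capacity terminal edge out to $t$ would exhibit an $s$--$t$ augmenting path of positive bottleneck capacity, contradicting that Phase~1 has halted. Hence each island has a well-defined common type; non-seed variables are compatible with either type, so I assign them the island's type, and then every variable in an island satisfies the required condition that either all $\alpha_i=0$ or all $\beta_i=0$.

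For (ii), the key observation is that a pairwise factor joining two distinct islands must be saturated in the direction leaving each island --- that is, the relevant canonical entry $\potval_{ij}^{01}$ or $\potval_{ij}^{10}$ is $0$ --- since otherwise the two components would not be distinct or an augmenting path through the min cut would exist. I would then verify, by substituting directly into the min-sum factor-to-variable update exactly as in the proof of Lemma~\ref{lem:sat_msgs}, that a factor with such a zero entry emits an identically-$(0,0)^T$ message (after normalization) toward the outside variable whenever the message incoming from the inside variable carries the sign matching that island's type, and that this sign condition is preserved by within-island updates. An induction on the number of Phase~2 updates then finishes: at the start of Phase~2 all messages are $0$, so all cross-island messages vanish, and the preceding observations show the property is maintained; applying the same argument to messages entering an island shows its internal dynamics are insensitive to the rest of the graph.

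The step I expect to be hardest is (ii), and inside it the orientation bookkeeping: pinning down which canonical entry of a crossing factor is the one that vanishes, given the submodular canonical form and the $s/t$ labeling, and confirming that the required sign of the island-side incoming message is a genuine invariant of \emph{all} internal Phase~2 updates and not merely of the initialization. Step~(i) is essentially immediate once Theorem~\ref{thm:final} is available.
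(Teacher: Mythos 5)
Your step (ii) is essentially the paper's argument (its supplementary Lemmas \ref{lem:binary_mask} and \ref{lem:iterated_homogeneity}), but step (i) has a genuine gap: your island construction is purely graph-theoretic on the residual graph and ignores the \emph{orientation} of residual capacities, and the contradiction you invoke does not go through. Concretely, take two variables with reparameterized unaries $\potfun_1(\var_1)=(0,\beta)^T$ and $\potfun_2(\var_2)=(\alpha,0)^T$, $\alpha,\beta>0$, and a canonical pairwise potential with $\potval_{12}^{01}=0$, $\potval_{12}^{10}=c>0$. This is submodular and is already a valid end-of-Phase-1 state (the only candidate path $s\rightarrow \gcvar_1\rightarrow \gcvar_2\rightarrow t$ requires $r_{12}=\potval_{12}^{01}>0$, so SCHEDULE returns $f=0$ immediately). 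The edge $(1,2)$ has positive capacity in the direction $2\rightarrow 1$, so $\var_1$ and $\var_2$ lie in a single undirected positive-capacity component containing seeds of both types, yet there is no augmenting path: ``following a positive-capacity path inside the island'' from the source-connected seed to the sink-connected seed may traverse edges whose positive capacity points the wrong way. So no single type can be assigned to your component (indeed the optimum labels $\var_1$ and $\var_2$ differently here), and the lemma fails for that decomposition. If you instead read ``connected components'' as strongly connected components, (i) is rescued but (ii) breaks: two SCCs can be separated by an edge with positive capacity in exactly one direction, and such a factor \emph{does} transmit a nonzero message of one type across it (e.g.\ $\potval_{ij}^{01}>0$ passes an incoming $(0,\beta)^T$ from $\var_i$ on to $\var_j$ as $(0,\min(\potval_{ij}^{01},\beta))^T$), so cross-boundary messages are not identically zero. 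The issue you filed under ``orientation bookkeeping'' inside (ii) is in fact the crux, and it already undermines the step you call essentially immediate.

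The paper avoids this by making the island construction type-aware and directed: starting from a seed of, say, type $(\alpha,0)^T$, it traverses an edge only if the factor would emit a nonzero message given incoming $(\alpha,0)^T$, i.e.\ only along the appropriately oriented positive residual capacity; this is exactly the directed $S$/$T$ reachability of max-flow min-cut, and in my example it correctly yields the two singleton islands $\{\var_1\}$ and $\{\var_2\}$. With that construction a \emph{single} zero canonical entry on a boundary edge---the min-cut-saturated direction from the source side to the sink side---simultaneously kills both cross-boundary message directions between opposite-type islands (given incoming $(0,\beta)^T$ from the source side the outgoing message is $(0,\min(\potval_{ij}^{01},\beta))^T$, and given incoming $(\alpha,0)^T$ from the sink side it is $(\min(\potval_{ij}^{01},\alpha),0)^T$, both zero when $\potval_{ij}^{01}=0$), and the Binary Mask and Iterated Homogeneity arguments you sketch then show this boundary stays impermeable under arbitrary internal Strict MP updates. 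So the repair is to replace your component definition with this seed-grown, direction-and-type-aware reachability; the rest of your outline then matches the paper's proof.
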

\begin{proof}
This is essentially equivalent to how the max-flow min-cut theorem proves that the
Ford-Fulkerson algorithm has found a minimum cut when no more augmenting paths
can be found.  The boundaries between islands are the locations of the cuts.
See supplementary material.
%(Sketch) This is essentially equivalent to the max-flow min-cut theorem, which proves
%the optimality of the Ford-Fulkerson algorithm when no more augmenting paths can
%be found.  In our formulation, at the end of Phase 1, there are by definition no 
%augmenting paths with nonzero capacity, which implies that along any path between a variable $i$
%with belief $(\alpha,0)^T, \alpha>0$ and a variable $k$ with belief $(0,\beta)^T, \beta>0$,
%there must be a factor-to-variable message that given incoming message $(\alpha, 0)^T, \alpha >0$
%would produce outgoing message $(0,0)^T$. (This is similarly true of opposite direction messages.)
%Thus, to define the islands, start at each variable will nonzero belief, say of the form $(\alpha, 0)^T$,
%and search outwards
%by traversing each edge iff it would pass a nonzero message given incoming message $(\alpha, 0)^T$.
%Merge all variables encountered along the search into a single homogeneous island.
%We can then apply Lemma \ref{lem:iterated_homogeneity} to ensure that 
%after passing any number of messages within an island, the form of beliefs will not change.
%This satisfies all conditions of the definition of a homogeneous island.
\end{proof}

Lemma \ref{lem:hom_islands} lets us analyze Strict MP independently within each
homogeneous island, because it shows that no non-zero messages will cross
island boundaries.  Thus, we can prove that internally, each island will reach a MP fixed point:
\begin{lem}[Internal Convergence] \label{lem:internal_convergence}
Iterating Strict MP inside a homogeneous island of the form $(\alpha, 0)^T$  (or $(0, \beta)^T)$) 
will lead to a fixed point where beliefs are of the form $(\alpha_i', 0)^T,\alpha_i' \ge 0$ (or $(0, \beta_i)^T, \beta_i'\ge 0$)
at each variable in the island.
\end{lem}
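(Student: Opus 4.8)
The plan is to reduce to a single island, decouple it from the rest of the graph using Lemma~\ref{lem:hom_islands}, prove a sign-structure invariant that every Strict~MP round preserves, and then obtain convergence from a monotone-from-zero argument inside a bounded box. By the label symmetry of the canonical form it suffices to treat an island $\varvec_H$ all of whose normalized beliefs have the form $(\alpha_i,0)^T$ with $\alpha_i\ge 0$; the $(0,\beta_i)^T$ case is obtained by interchanging the labels $0$ and $1$ together with the entries $\potval_{ij}^{10}\leftrightarrow\potval_{ij}^{01}$. Working with the reparameterized energy at the end of Phase~1 (potentials reparameterized, all messages $0$), beliefs equal the reparameterized unary potentials, so $\potfun_i=(\alpha_i,0)^T$ with $\alpha_i\ge 0$ for each $\var_i\in\varvec_H$, while every pairwise potential is still in canonical form with nonnegative entries $\potval_{ij}^{01},\potval_{ij}^{10}\ge 0$ (this is preserved throughout Phase~1; cf.\ Theorem~\ref{thm:final}). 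Lemma~\ref{lem:hom_islands} also tells us that every message across the island boundary stays identically $0$ under any number of Strict~MP rounds, so running Strict~MP on the whole graph, observed on $\varvec_H$, is exactly Strict~MP on the subgraph induced by $\varvec_H$ with the above potentials; this finite system is what I would analyze.

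First I would establish by induction on the number of Strict~MP rounds, started from the all-zero messages, that every internal factor-to-variable message $\msg{\potfun_{ij}}{\var_j}$ and every internal variable-to-factor message $\msg{\var_i}{\potfun_{ij}}$ satisfies $m(1)=0$ and $m(0)\ge 0$. For a variable-to-factor message this is immediate, since it equals $\potfun_i+\sum_{i'\neq j}\msg{\potfun_{i'i}}{\var_i}$, a sum of the unary $(\alpha_i,0)^T$, internal messages (of the claimed form by the inductive hypothesis), and boundary messages (identically $0$). For a factor-to-variable message, writing $g:=\msg{\var_i}{\potfun_{ij}}$ with $g(1)=0$ and $g(0)\ge 0$ and substituting into the min-sum update~\eqref{eq:strict_mp} gives $\msg{\potfun_{ij}}{\var_j}(1)=\min(\potval_{ij}^{01}+g(0),\,0)=0$ and $\msg{\potfun_{ij}}{\var_j}(0)=\min(g(0),\,\potval_{ij}^{10})\in[0,\potval_{ij}^{10}]$, so the invariant is preserved with no normalization required. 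Hence along the whole run, and at any fixed point, the belief at each $\var_i\in\varvec_H$ is $(\alpha_i',0)^T$ with $\alpha_i'=\alpha_i+\sum_j\msg{\potfun_{ji}}{\var_i}(0)\ge 0$, which is exactly the claimed belief form.

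It remains to prove convergence. Since the $(1)$-entries are pinned at $0$, the Strict~MP update on the island is governed by the map $F$ acting on the vector $\mathbf m$ of $(0)$-entries of the internal factor-to-variable messages, $F(\mathbf m)_{ij}=\min\!\big(\alpha_i+\sum_{i'}\mathbf m_{i'i},\,\potval_{ij}^{10}\big)$ with the sum over internal neighbors $i'\neq j$; this formula shows $F$ is order-preserving, continuous, and maps the box $B=\prod_{(i,j)}[0,\potval_{ij}^{10}]$ (bounded, because the potentials are finite) into itself with $F(\mathbf 0)\ge\mathbf 0$. Starting from the all-zero messages of the end of Phase~1, monotonicity makes $\{F^{t}(\mathbf 0)\}_t$ a non-decreasing sequence in $B$, which converges coordinatewise; by continuity of $F$ its limit is a fixed point, i.e.\ a fixed point of Strict~MP on the island, and by the invariant above its beliefs are $(\alpha_i',0)^T$ with $\alpha_i'\ge 0$. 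This proves the lemma, the $(0,\beta_i)^T$ case being identical under the interchange noted above. I expect this last step to be the main obstacle: one must notice that the submodular/canonical form of the pairwise potentials is what simultaneously forces the sign invariant \emph{and} the monotonicity of $F$, and that finiteness of the potentials is precisely what bounds the box and thereby upgrades ``monotone iteration'' to ``convergence to a fixed point''.
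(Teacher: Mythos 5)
Your proof is correct and follows essentially the same route as the paper's: your inductive sign invariant is exactly the paper's Binary Mask / Iterated Homogeneity computation (incoming $(\alpha,0)^T$ yields outgoing $(\min(\alpha,\potval_{ij}^{10}),0)^T$, with boundary messages staying zero by Lemma~\ref{lem:hom_islands}), and your monotone bounded iteration of the order-preserving map $F$ is a formalization of the paper's observation that messages only reinforce one another and grow until capped (saturated) by the pairwise potentials. The only divergence is minor: the paper's sketch additionally lower-bounds the per-round increments to get finite-time saturation, whereas your argument gives convergence in the limit, which suffices for the lemma as stated (the finite-time refinement only matters for the later runtime discussion).
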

\begin{proof} (Sketch)
We prove the case where the unary potentials inside the island have form $(\alpha_i, 0)^T$.  The
case where they have form $(0, \beta_i)^T$ is entirely analogous.

At the beginning of Phase 2, all unary potentials will be of the form $(\alpha, 0)^T, \alpha \ge 0$.
By the positive-capacity edge connectivity of homogeneous islands property, 
messages of the form $(\alpha, 0)^T, \alpha > 0$ will eventually be propagated to all
variables in the island by Strict MP.
In addition, messages can only reinforce (and not cancel)
each other.  
For example, in a single loop homogeneous island, messages
will cycle around the loop, getting larger as unary potentials are added to incoming messages
and passed around the loop.  Messages will only stop growing when the 
the variable-to-factor messages become stronger than the pairwise potential.

%there is a path from a variable with nonzero potential to each variable in the island such that
%nonzero messages can be passed via the path.
On acyclic island structures, Strict MP will obviously converge.
On loopy graphs, messages will be monotonically increasing until they are
capped by the pairwise potentials (i.e., the pairwise potential is \emph{saturated}).  
The rate of message increase is lower bounded by
some constant (that depends on the strength of unary potentials and 
size of loops in the island graph, which are fixed), 
so the sequence will converge when all pairwise potentials are saturated.
\end{proof}

We can now prove our second main result:
\begin{thm}[Guaranteed Convergence and Optimality of \ouralg~Fixed Point]
\ouralg~converges to an optimal fixed point on binary submodular energy functions.
\end{thm}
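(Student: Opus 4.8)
The plan is to prove the two halves of the statement in turn --- that \ouralg\ reaches a fixed point of MP, and that the configuration decoded there minimizes $E$ --- both built on the Phase~1/graph-cuts equivalence (Theorem~\ref{thm:final}) and the homogeneous-island machinery (Lemmas~\ref{lem:hom_islands} and \ref{lem:internal_convergence}). First I would dispose of Phase~1: by Theorem~\ref{thm:final} each Phase~1 step reparameterizes $\potfun$ exactly as pushing flow along the corresponding augmenting path does, so SCHEDULE is running an augmenting-paths maximum-flow computation; as long as it uses a terminating path-selection rule (as every standard max-flow algorithm does) this halts after finitely many iterations, and at that point $\potval_{const}$ has been driven up to $\min_\varvec E$~\shortcite{Ford56}. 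Applying the reparameterization view, fold the Phase~1 messages into $\potfun$ and reset all messages to $0$; Lemma~\ref{lem:hom_islands} then exhibits the resulting energy as a disjoint union of homogeneous islands.

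For Phase~2 convergence the key observation is the defining property of a homogeneous island: no message between an in-island and an out-of-island variable is ever nonzero, no matter how long Strict MP runs. Hence Phase~2 decomposes into independent Strict MP executions, one per island, and Lemma~\ref{lem:internal_convergence} guarantees each converges to an internal fixed point (one-sided beliefs, monotone and capped by the pairwise potentials). With no cross-island interaction the global message state converges, and a fixed point of Strict MP is, by our definition of MP, a fixed point of MP. This is the ``converges to a fixed point'' half.

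For optimality, let $\tilde\potfun$ be the reparameterization at the instant Phase~2 has converged. Reparameterization preserves $E(\varvec;\cdot)$ for every $\varvec$, and $\potval_{const}$ in $\tilde\potfun$ remains a lower bound on $\min_\varvec E$, so it suffices to show the decoded $\hat\varvec$ kills every nonnegative residual unary and pairwise term of $\tilde\potfun$; then $E(\hat\varvec;\potfun)=E(\hat\varvec;\tilde\potfun)=\potval_{const}$, which, being $\le\min_\varvec E$ yet also an attained energy, equals $\min_\varvec E$. By the propagation step in the proof of Lemma~\ref{lem:internal_convergence}, every variable of an island ends Phase~2 with a strictly one-sided belief matching the island's polarity (seedless islands having all-zero beliefs, with ties broken consistently inside the island), so $\arg\min$-decoding assigns every variable of an island that island's common label. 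Consequently a seed's unique nonzero unary entry is the one \emph{not} chosen, so its residual is $0$; an intra-island edge has equally-labeled endpoints and contributes $0$ via the canonical form $\tilde\potval_{ij}(0,0)=\tilde\potval_{ij}(1,1)=0$; and an edge between oppositely-labeled islands is, by the cut-location characterization in Lemma~\ref{lem:hom_islands}, saturated in the crossing direction, so the entry selected by $(\hat\var_i,\hat\var_j)$ is again $0$. Hence $\hat\varvec$ is a MAP configuration, read off a genuine MP fixed point.

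I expect the last clause of the optimality argument to be the main obstacle: converting the message-level statement ``no nonzero message ever crosses an island boundary'' into the potential-level fact that the specific pairwise entry picked out by the two islands' labels equals $0$. This is the complementary-slackness / min-cut-tightness core of the result, and exactly the place where the homogeneous-island structure --- not any fresh computation --- has to carry the argument; once it is established, combining it with the lower-bound property of $\potval_{const}$ and the per-island convergence of Lemma~\ref{lem:internal_convergence} is routine.
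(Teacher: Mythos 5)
Your proposal is correct and follows the same architecture as the paper's own argument: Phase~1 is identified with augmenting-path max-flow via Theorem~\ref{thm:final}, the terminal state is organized into homogeneous islands by Lemma~\ref{lem:hom_islands}, and Phase~2 convergence is obtained island-by-island from Lemma~\ref{lem:internal_convergence}, with the absence of cross-island messages yielding a global fixed point. Where you genuinely diverge is in how optimality is closed. The paper simply observes that the decoded labels coincide with the graph-cuts connected-components decoding of the equivalent residual state and inherits optimality from the established equivalence to max-flow/min-cut; you instead inline that argument as an explicit certificate: $\potval_{const}$ is a lower bound, and the decoded assignment selects only zero entries of the reparameterized potentials (seed unaries unselected, zero diagonal entries for equally labeled endpoints, zero crossing entries at island boundaries), so its energy equals $\potval_{const}$. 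The crossing-entry fact you correctly flag as the crux is not in the statement of Lemma~\ref{lem:hom_islands} but is supplied by the Binary Mask computation (Lemma~\ref{lem:binary_mask} in the supplementary material): a boundary factor that maps a strictly one-sided incoming message to $(0,0)^T$ must have the corresponding crossing potential entry equal to zero, which is exactly the min-cut tightness you need. Two smaller points: you make Phase~1 termination explicit, which the paper defers to its convergence-guarantees discussion via Edmonds--Karp; and your insistence that ties be broken consistently within seedless components is in fact required for strict optimality---the paper's ``can be assigned arbitrarily'' is only safe under the same consistency proviso it states in its graph-cuts background. So your route buys a self-contained primal--dual proof at the cost of importing the supplementary binary-mask lemma, whereas the paper's buys brevity by delegating to graph-cuts correctness.
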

\begin{proof}
After running Phase 2 of \ouralg, Lemma \ref{lem:internal_convergence}
shows that each homogeneous island will converge
to a fixed point where beliefs at all variables in the island can be decoded to 
give the same assignment as the initial seed of the island.
This is the same assignment as the optimal graph cuts-style
connected components decoding would yield.
Cross-island messages are all zero, and if a variable is not in an island, it has zero potential, sends and receives
all zero messages, and can be assigned arbitrarily.
Thus, we are globally at a MP fixed point, and beliefs can be decoded at
each variable to give the optimal assignment.
\end{proof}

\begin{figure}[t]
\hspace{-.18in}
\subfigure[Bad Fixed Point]{\label{fig:fp1}\includegraphics[width=.54\columnwidth]{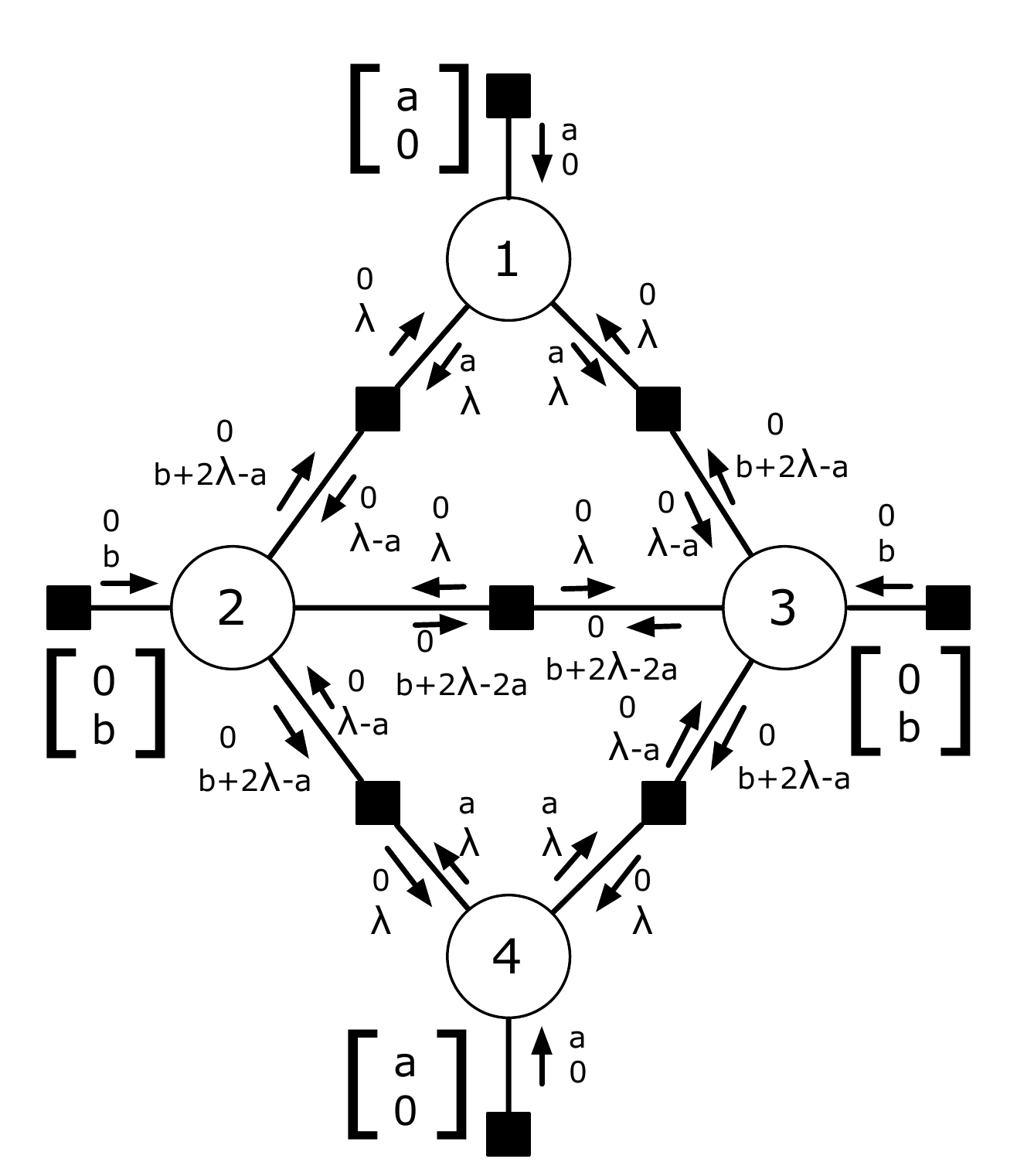}}
\hspace{-3mm}
\subfigure[Optimal Fixed Point]{\label{fig:fp2}\includegraphics[width=.52\columnwidth]{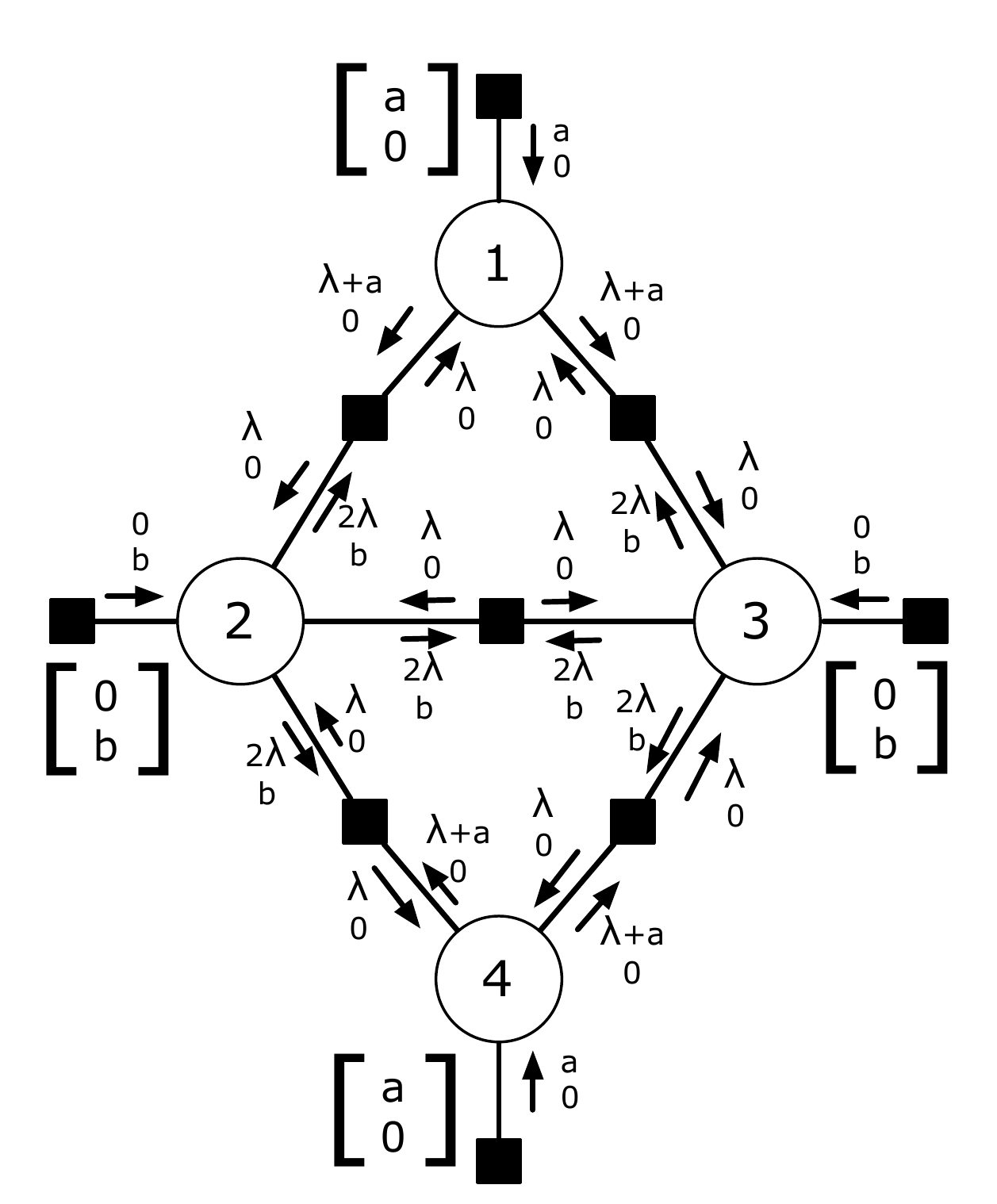}}
\caption{
The canonical counterexample used to show that MP is suboptimal on
binary submodular energy functions.
Potentials are given in square brackets.  Messages have no parentheses.  
%$\potfun_1(\var_1) = \potfun_4(\var_4) = (a, 0)^T$, $\potfun_2(\var_2) = \potfun_3(\var_3) = (0, b)^T$. 
Pairwise potentials are symmetric with strength $\lambda$, and 
$\lambda > 2 a > 2b$, making the optimal assignment $(1, 1, 1, 1)$.
\subref{fig:fp1} The previously analyzed fixed point.  Beliefs at 1 and 4 are  $(a, 2\lambda)^T$, and at 2 and 3 
are $(0,  b+3\lambda - 2a)^T$, which gives a suboptimal assignment.
\subref{fig:fp2} We introduce a second fixed point.
Beliefs at 1 and 4 are $(2\lambda+a, 0)^T$, and at 2 and 3 
are $(3\lambda, b)^T$, which gives the optimal assignment.
Our new scheduling and damping scheme 
 guarantees MP will find an optimal fixed point like this
  for any binary submodular 
energy function.
}
\label{fig:fixed_pts}
\end{figure}

Finally, we return to the canonical example used to show the suboptimality of MP 
on binary submodular energies.
The potentials and messages defining a suboptimal fixed point,
which is reached by certain suboptimal scheduling and damping schemes, 
are illustrated in \figref{fig:fixed_pts} \subref{fig:fp1}.
If, however, we run \ouralg, Phase 1 ends with the messages shown in \figref{fig:kulesza3} and Phase 2
converges to the fixed point shown in \figref{fig:fixed_pts} \subref{fig:fp2}.  Decoding beliefs from
the messages in \figref{fig:fixed_pts} \subref{fig:fp2} indeed gives the optimal assignment of $(1,1,1,1)$.

\commentout{
\section{Graph Cuts = Block Coordinate Ascent}
Many modern MAP message passing algorithms can be seen as slightly modified
versions of MPBP.  Here, we show that graph cuts (and thus APMP) can be viewed similarly, but
that the updates are distinct from existing algorithms.
\shortciteA{Sontag09} show that many recent LP-based MAP algorithms can be viewed
as tree block coordinate ascent (TBCA) in the dual
\begin{align}
 \sum_{i \in \nodes} \min_{\var_i} \potfun_i(\var_i) + \sum_{i,j \in \edges} \min_{\var_{ij}} \potfun_{ij}(\var_i, \var_j) +\potfun_{const}  \hbox{.}\label{eq:sontag_dual}
\end{align}
Throughout this section, we make the constant term explicit and normalize potentials by moving mass to
and from the constant term so that the minimizations evaluate to zero.

Here, we show that graph cuts (and thus \ouralg) can also be seen as TBCA (where the tree is a chain); however, the update---while still optimal---is different from either of the algorithms proposed by \shortciteA{Sontag09}.

\begin{lem} \label{lem:gc_is_tbca}
Graph cuts is a chain-block coordinate ascent algorithm the dual (\ref{eq:sontag_dual}).
\end{lem}
\begin{proof}
We restrict our attention to a chain-structured subgraph corresponding to an augmenting path, and assume all potentials are in their standard canonical form as in \secref{sec:bg_notation}. 
%We recall that the constant function value is zero at first. 
The initial dual value of (\ref{eq:sontag_dual}) for the chain is $\potfun_{const}$, 
since in canonical form every unary and pairwise potential has at least one zero entry. 
Because we know that graph cuts can solve a chain-structured problem with one augmenting path,
the primal optimum value for the chain is the constant term plus bottleneck capacity of the path, $\potfun_{const} + f$. 
By the graph cuts reparameterization in Eq.\ \eqref{eq:gc_reparametrization_start}-\eqref{eq:gc_reparametrization_end}, after pushing flow through the augmenting path, the value of the constant term is incremented by $f$ while
unary and pairwise potentials maintain at least one zero value.  The dual objective is therefore $\potfun_{const} + f$.  We now have a primal-dual pair with same value, which gives a certificate of optimality.
%By virtue of the reparameterization property, the primal optimum is unchanged. Evaluating the dual objective for the reparameterized set of potentials gives the value of the constant term, since as before, each of the non-constant potentials still retains its zero entries. Therefore, the subgraph's optimum primal value equals the subgraph's dual objective value, which gives a certificate of optimality. The rest of the proof is by simple induction for consequent augmenting paths.
\end{proof}
Even though both updates take optimal block coordinate ascent steps, the updates in 
\shortciteA{Sontag09} are not identical to graph cuts.

{\bf Example: } Consider a chain-structured graph with three variables $\Var_1, \Var_2, \Var_3$ and
potentials $\potfun_1(\var_1), \potfun_{12}(\var_1, \var_2), \potfun_2(\var_2), \potfun_{23}(\var_2, \var_3), \potfun_3(\var_3),
\potfun_{const}$ respectively as follows: 
\tiny
\begin{equation}\label{eq:graph_cuts_reparameterized}
\begin{array}{cccccc}
\left(
    \begin{array}{c}
    0  \\
    6
    \end{array}
\right)
&
\left(
    \begin{array}{cc}
         0  &  12  \\
         12 &  0
    \end{array}
\right)
&
\left(
    \begin{array}{c}
    0  \\
    0
    \end{array}
\right)
&
\left(
    \begin{array}{cc}
         0  &  12  \\
         12 &  0
    \end{array}
\right)
&
\left(
    \begin{array}{c}
    12  \\
    0
    \end{array}
\right)
&
\left( 0 \right)
\end{array}
\end{equation}
\normalsize
If we apply a graph cuts update on the chain, we get the following reparameterization:
\tiny
\begin{equation}\label{eq:graph_cuts_reparameterized}
\begin{array}{cccccc}
\left(
    \begin{array}{c}
    0  \\
    0
    \end{array}
\right)
&
\left(
    \begin{array}{cc}
         0  &  6  \\
         18 &  0
    \end{array}
\right)
&
\left(
    \begin{array}{c}
    0  \\
    0
    \end{array}
\right)
&
\left(
    \begin{array}{cc}
         0  &  6  \\
         18 &  0
    \end{array}
\right)
&
\left(
    \begin{array}{c}
    6  \\
    0
    \end{array}
\right)
&
\left( 6 \right)\end{array}
\end{equation}
\normalsize
When we execute the sequential tree updates of \shortciteA{Sontag09} and normalize
potentials, we get the following reparameterization:
\tiny
\begin{equation}\label{eq:graph_cuts_reparameterized}
\begin{array}{cccccc}
\left(
    \begin{array}{c}
    2  \\
    0
    \end{array}
\right)
&
\left(
    \begin{array}{cc}
         0  &  4  \\
         20 &  0
    \end{array}
\right)
&
\left(
    \begin{array}{c}
    2  \\
    0
    \end{array}
\right)
&
\left(
    \begin{array}{cc}
         0  &  2  \\
         22 &  0
    \end{array}
\right)
&
\left(
    \begin{array}{c}
    2  \\
    0
    \end{array}
\right)
&
\left( 6 \right)
\end{array} \hbox{.}
\end{equation}
\normalsize
Surprisingly, both are primal-dual optimal reparameterizations, but the reparameterization is different.
Judging by the sum of absolute changes in unary potentials, the \shortciteA{Sontag09} algorithm 
moves more mass; graph cuts (and thus APMP) moves the smallest possible amount of mass that is
needed to achieve dual optimality, but it sacrifices local decodability at unary potentials.
%Importantly, graph cuts and \ouralg~ never shift mass to unary potentials along the path. We conjecture that MPTBU could be modified so that the distribution of mass is the same as that of graph cuts. 
%
%We now show that if the chain is of length 2, we can precisely define the graph cuts reparameterization as an instance of the parameteric form descrived in \shortciteA{SonGloJaa_optbook} and show it is different from that of MPLP or MSD. For the case of a longer chain (for which there is possibly no closed form characterization of all valid and optimal reparmeterization), we  demonstrate that graph cuts and therefore \ouralg~reparameterization results in a different update from the ones obtained by MPTBU.
}

\commentout{

Next, we build upon \shortciteA{SonGloJaa_optbook} to investigate this behavior.  We focus
on the case of augmenting paths of length 2, and we leave a generalization to arbitrary length
augmenting paths to future work.

\textbf{Length 2 Augmenting Path: }
In \shortciteA{SonGloJaa_optbook}, it is shown that for \emph{star} blocks, there are many possible updates 
 that achieve the dual optimum, and that the space of these updates can be written in a parametric form that indicates how much mass is moved between potentials.  %The choice of parameters is shown empirically  to lead to different times to convergence.
Here, we show that in the case of length 2 augmenting paths (involving two unary potentials and a pairwise
potential), graph cuts can be achieved by a particular setting of these parameters.

%Similarly, there is a space of optimal dual variable updates for tree-structured blocks. The algorithms presented by \shortciteA{Sontag09} (Max-Product Tree Block Update (MPTBU), and Sequential Tree Block Update (STBU)) are two of many possible choices. 
%This type of path can be analyzed in terms of a star update. Let the primal as restricted to the chain be denoted by
%\begin{equation}
%h   (x_1,x_2) = \potfun_1(x_1) + \potfun_2(x_2) + \potfun_{12}(x_1,x_2)
%\end{equation}
%
Letting $h   (x_1,x_2) = \potfun_1(x_1) + \potfun_2(x_2) + \potfun_{12}(x_1,x_2)$,
the space of reparameterizations that obtain an optimal dual (for the restricted set of dual variables defined by the star) is given in \shortciteA{SonGloJaa_optbook} by first defining a set of non-negative variables $\{\alpha_{12},\alpha_1,\alpha_2\}$ such that $\sum_i \alpha_i +\alpha_{12}=1$, and then setting
\begin{align}
\potfun_1(x_1)      &= \alpha_1 \cdot \min_{x_2} h(x_1,x_2)  \label{eq:reparam_alpha_min_unary} \\
\potfun_1(x_2)      &= \alpha_2 \cdot \min_{x_1} h(x_1,x_2) \\
\potfun_{12}(x_1,x_2) &= h(x_1,x_2) - \alpha_{1} \cdot  \min_{\hat{x}_2} h(x_1,\hat{x}_2)  \nonumber \\
&- \alpha_{2} \cdot  \min_{\hat{x}_1} h(\hat{x}_1,x_2) \hbox{.} \label{eq:reparam_alpha_min_pairwise}
\end{align}
Max-product linear programming (MPLP) \shortcite{Globerson08} can be recovered by the choice of $\alpha_{12}=0, \alpha_1 = \alpha_2 = 0.5$. Max-sum diffusion (MSD) \shortcite{Werner06} is recovered by $\alpha_{12}=\alpha_1=\alpha_2 = 0.33$.
Whereas \shortciteA{SonGloJaa_optbook} only suggest cases where $\alpha_1=\alpha_2$, we show in the supplementary
materials that a setting of $\alpha$'s that depends on the location of the bottleneck capacity
recovers graph cuts.  Notably, we do not set $\alpha_1=\alpha_2$ in general.  We use either 
$( \alpha_{12}, \alpha_1,\alpha_2) = (0, 1, 0), (1, -1, -1), $ or $(0, 0, 1)$.

Unlike MPLP and MSD, the alpha settings for graph cuts is based on the location of the bottleneck.
Also note that when the bottleneck capacity is in the pairwise potentials, the pairwise $\alpha_{12}$ parameter is negative, but the overall sum of $\alpha$ parameters remains 1. Since this is a provably valid reparameterization, this implies the space of valid and optimal $\alpha$ based reparameterizations can be expanded beyond that of \shortciteA{SonGloJaa_optbook}
}

\commentout{ 
\textbf{Length $>2$ Augmenting Path: }
While there is no known closed-form parametric $\alpha$ weights to describe the updates obtained by MPTBU, we can empirically compare the reparameterization obtained by running MPTBU code on a chain with known potentials to that obtained by graph cuts, and show that while both provably obtain the dual bound, they are, as expected, different. For example, we find the updated potentials for the following choice of potential values:
\tiny
\begin{equation}\label{eq:graph_cuts_reparameterized}
\begin{array}{cccccc}
\left(
    \begin{array}{c}
    0  \\
    6
    \end{array}
\right)
&
\left(
    \begin{array}{cc}
         0  &  12  \\
         12 &  0
    \end{array}
\right)
&
\left(
    \begin{array}{c}
    0  \\
    0
    \end{array}
\right)
&
\left(
    \begin{array}{cc}
         0  &  12  \\
         12 &  0
    \end{array}
\right)
&
\left(
    \begin{array}{c}
    12  \\
    0
    \end{array}
\right)
&
\left( 0 \right)
\end{array}
\end{equation}
\normalsize
Are, by graph cuts
\tiny
\begin{equation}\label{eq:graph_cuts_reparameterized}
\begin{array}{cccccc}
\left(
    \begin{array}{c}
    0  \\
    0
    \end{array}
\right)
&
\left(
    \begin{array}{cc}
         0  &  6  \\
         18 &  0
    \end{array}
\right)
&
\left(
    \begin{array}{c}
    0  \\
    0
    \end{array}
\right)
&
\left(
    \begin{array}{cc}
         0  &  6  \\
         18 &  0
    \end{array}
\right)
&
\left(
    \begin{array}{c}
    6  \\
    0
    \end{array}
\right)
&
\left( 6 \right)\end{array}
\end{equation}
\normalsize
And by MPTBU
\tiny
\begin{equation}\label{eq:graph_cuts_reparameterized}
\begin{array}{cccccc}
\left(
    \begin{array}{c}
    2  \\
    0
    \end{array}
\right)
&
\left(
    \begin{array}{cc}
         0  &  4  \\
         20 &  0
    \end{array}
\right)
&
\left(
    \begin{array}{c}
    2  \\
    0
    \end{array}
\right)
&
\left(
    \begin{array}{cc}
         -2  &  0  \\
         20 &  -2
    \end{array}
\right)
&
\left(
    \begin{array}{c}
    2  \\
    0
    \end{array}
\right)
&
\left( 6 \right)
\end{array}
\end{equation}
\normalsize
Importantly, graph cuts and \ouralg~ never shift mass to unary potentials along the path. We conjecture that MPTBU could be modified so that the distribution of mass is the same as that of graph cuts.

\TODO{technically, this isn't fair, since one is not even normalized. We ca push back the into the potential in GC or normalize the other, we can't just leave it like that. Danny - more rambling about leaving it as food for thought?}
}

\commentout{TBCA involves two steps.  First, min-marginals are computed at the root of a tree
by an inward pass of standard min-sum messages.  Second, MPLP updates are
applied recursively outwards from the root.
Here, we show that graph cuts (and APMP) can be viewed in the same way, but they
use an inner update that---while still dual optimal---distributes potential mass differently than MPLP.}
\commentout{
\shortciteA{Sontag09} show that many recent LP-based MAP algorithms can be viewed
as tree block coordinate ascent (TBCA) in the dual
\begin{align}
J(\potfun) & = \sum_{i \in \nodes} \min_{\var_i} \potfun_i(\var_i) + \sum_{i,j \in \edges} \min_{\var_{ij}} \potfun_{ij}(\var_i, \var_j) \hbox{.}
\end{align}
In general, there are many updates to a given block of dual variables that achieve the dual optimum---the algorithm presented by \shortciteA{Sontag09} is one of many possible choices.
Here, we show how graph cuts (and thus \ouralg) can also be seen as TBCA; however, the
update---while still optimal---is different from existing message passing algorithm updates.

\begin{lem} \label{lem:gc_is_tbca}
Graph cuts is a chain-block coordinate ascent algorithm in the same dual analyzed by \shortciteA{Sontag09}.
\end{lem}
\begin{proof}
On a chain-structured subgraph corresponding to an augmenting path, the primal optimum value
is the bottleneck capacity of the path.  After pushing flow through an augmenting path, the value of
the constant term increases by the value of the bottleneck capacity of the path.  At this point, the
subgraph's optimum primal value equals the subgraph's dual objective value, which gives a certificate of optimality.
\end{proof}

\TODO{}Different ways of distributing mass in MPLP: Example: two node augmenting path with different settings
of $\alpha$'s.

We conjecture that the algorithm presented in Figure 2 of \shortcite{Sontag09} could be modified
so that the distribution of mass is the same as that of graph cuts.

\TODO{Example showing reparameterization of TBCA and graph cuts side by side}

\TODO{cut this?}
TBCA involves two steps.  First, min-marginals are computed at the root of a tree
by an inward pass of standard min-sum messages.  Second, MPLP updates are
applied recursively outwards from the root.
Here, we show that graph cuts (and APMP) can be viewed in the same way, but they
use an inner update that---while still dual optimal---distributes potential mass differently than MPLP.
}
%%%%%%%%%%%%%%%%%%%%%%%%%%%%%%%%%%%%%%%%%%%%%%%%%%%%

\section{Convergence Guarantees}

There are several variants of message passing algorithms for MAP inference
that have been theoretically analyzed.  There are generally two classes of results:
(a) guarantees about the optimality or partial optimality of solutions, assuming that
the algorithm has converged to a fixed point; and (b) guarantees about the
monotonicity of the updates with respect to some bound and whether the algorithm will converge.

Notable optimality guarantees exist for TRW algorithms \shortcite{Kolmogorov05} and MPLP
\shortcite{Globerson08}.  \shortciteA{Kolmogorov05} prove that fixed points of TRW satisfying
a \emph{weak tree agreement (WTA)} condition yield optimal solutions to binary submodular problems.
\shortciteA{Globerson08} show that if MPLP converges to beliefs with unique optimizing values,
then the solution is optimal.

Convergence guarantees for message passing algorithms are generally significantly weaker.
MPLP is a coordinate ascent algorithm so is guaranteed to converge; however, in general it
can get stuck at suboptimal points where no improvement is possible via updating the blocks
used by the algorithm.  Somewhat similarly, TRW-S is guaranteed not to decrease a lower bound.
In the limit where the temperature goes to 0, 
convexified \emph{sum-product} is guaranteed to converge to a solution of the standard
linear program relaxation, but 
this is not numerically practical to implement \shortcite{Weiss07}.
However, even for binary submodular energies, we are
unaware of results that guarantee convergence for convexified belief propagation, 
MPLP, or TRW-S in polynomial time.

Our analysis reveals schedules and message passing updates that guarantee convergence
 in low order polynomial time 
to a state where an optimal assignment can be decoded
for binary submodular problems.  This follows directly
from analysis of max-flow algorithms. % in combinatorial optimization.
By using shortest augmenting paths, the Edmonds-Karp algorithm converges
 in $O(|\nodes| |\edges|^2)$ time \shortcite{Edmonds72}.
 Analysis of the convergence time of Phase 2 is slightly more involved.
Given an island with a large single loop of $M$ variables, with strong pairwise potentials (say strength $\lambda$) 
 and only one small nonzero unary potential, say $(\alpha, 0)^T$, convergence
 will take on the order of $\frac{M \cdot \lambda}{\alpha}$ time, which could be large.
 In practice, though, we can reach the same fixed point by modifying nonzero unary potentials
 to be $(\lambda, 0)^T$, in which case convergence will take just order $M$ time.
 Interestingly, this modification causes Strict MP to become equivalent to the connected components
 algorithm used by graph cuts to decode solutions.
 
%%%%%%%%%%%%%%%%%%%%%%%%%%%%%%%%%%%%%%%%%%%%%%%%%%%%

\section{Related Work}

There are close relationships between many MAP inference algorithms.  Here
we discuss the relationships between some of the more notable and similar algorithms.
\ouralg~is closely related to dual block coordinate ascent algorithms
discussed in \shortcite{Sontag09}---Phase 1 of \ouralg~can be seen as block coordinate
ascent in the same dual.  Interestingly, even though both are optimal ascent steps, \ouralg~reparameterizations 
are not identical to those of the sequential tree-block coordinate ascent algorithm in \shortcite{Sontag09}
when applied to the same chain.
%We showed in detail that graph cuts is closely related to dual block coordinate descent algorithms
%discussed in \shortciteA{Sontag09}.  Our analysis showing that the difference between graph cuts
%and TBCA algorithms can be explained as a different distribution of mass gives early suggestions about
%how to generalize the insights in this paper to multi-label and non-submodular energies.

Graph cuts is also highly related to the Augmenting DAG algorithm \shortcite{Werner06}.
Augmenting DAGs are more general constructs than augmenting paths, so with
a proper choice of schedule, the Augmenting DAG algorithm could also implement graph cuts.

Our work follows in the spirit of RBP \shortcite{Elidan06}, in that we
are considering dynamic schedules for belief propagation.  RBP is more general,
 but our analysis is much stronger.
%There are not
%even convergence guarantees for RBP, much less optimality guarantees.

Finally, our work is also related to the COMPOSE framework of \shortciteA{Duchi07}.
In COMPOSE, special purpose algorithms are used to compute MP messages for certain
combinatorial-structured subgraphs, including binary submodular ones.  We show here that
special purpose algorithms are not needed: the internal graph cut algorithm can be
implemented purely in terms of max-product.  Given a problem that contains a graph cut
subproblem but also has other high order or nonsubmodular potentials, our work shows how to
interleave solving the graph cuts problem and passing messages elsewhere in the
graph.

\section{Conclusions}
While the proof of equivalence to graph cuts was moderately involved, the
\ouralg~algorithm is a simple special case of MP.
The analysis technique is novel:
rather than
relying on the computation tree model for analysis, we directly
mapped the operations being performed by the algorithm to a known
combinatorial algorithm.
%This allows us to make stronger guarantees
%about \ouralg~than can be made for any other MP algorithm on
%pairwise binary submodular energies.
It would be interesting to consider whether there are other cases
where the MP execution might be mapped directly to a combinatorial
algorithm.

We have proven strong statements about MP fixed points on binary submodular
energies.  The analysis has a similar flavor to that of \shortciteA{Weiss00}, in that
we construct fixed points where optimal assignments can be decoded, but where
the magnitudes of the beliefs do not (generally) correspond to meaningful
quantities.  The strategy of isolating subgraphs might apply more broadly.  For
example, if we could isolate single loop structures as we isolate homogeneous
islands in Phase 1, a second phase might then be used to find optimal
solutions in non-homogeneous, loopy regions.

An alternate view of Phase 1 is that it is an intelligent initialization of messages
for Strict MP in Phase 2.  In this light, our results show that initialization can provably
determine whether MP is suboptimal or optimal, at least in the case of binary submodular 
energy functions.

The connection to graph cuts simplifies the space of MAP algorithms.
There are now precise mappings between ideas from graph cuts and ideas from belief propagation (e.g., augmenting
path strategies to scheduling).  It allows us, for example, to map the capacity scaling
method from graph cuts to schedules for message passing.

A broad, interesting direction of future work is to further investigate how insights related to 
graph cuts
%flow capacities, augmenting path
%selection strategies, and decoding algorithms 
can be used to improve inference in the more general
settings of multilabel, nonsubmodular, and high order energy functions.
At a high level, \ouralg~separates the concerns of improving the dual objective (Phase 1) from
concerns regarding decoding solutions (Phase 2).  In loopy MP, this delays overcounting of
messages until it is safe to do so.
We believe that this and other concepts presented here
%(balanced energies, dynamic scheduling and damping to  guarantee finding fixed points) 
%generalize to the more general problem settings.  W
will generalize.  We are currently exploring %an extension 
the non-binary, non-submodular case.

\subsubsection*{Acknowledgements}
We are indebted to Pushmeet Kohli for introducing us to the
reparameterization view of graph cuts and for many insightful
conversations throughout the course of the project.
We thank anonymous reviewers for valuable suggestions
that led to improvements in the paper.

\bibliographystyle{apacite}
\bibliography{dtarlow}

%%%%%%%%%%%%%%%%%%%%%%%%%%%%%%%%%%%%%%%%%%%%%%%%%%%%%%%%%%%%%%%%%%%%%%%%%%%%%%%%%%%%%%%%%%%%%%%%%%%%%%%%%%%%%%%%%%
\newpage
\appendix

\section{Supplementary Material Accompanying ``Graph Cuts is a Max-Product Algorithm''}
We provide additional details omitted from the main paper due to space limitations.

{\bf Lemma \ref{lem:sat_msgs} } (Message-Preserving Factors)
\emph{When passing standard MP messages with the factors as above, $\potval_{ij}^{10} \ge a$, and $\potval_{ij}^{01} \ge b + f$,
the outgoing factor-to-variable message is equal to the incoming
variable-to-factor message i.e.\ $\msg{\potfun_{ij}}{\var_j} = \msg{\var_i}{\potfun_{ij}}$
and $\msg{\potfun_{ij}}{\var_i} = \msg{\var_j}{\potfun_{ij}}$.}

\begin{proof} This follows simply from plugging in values to the message updates.  We show the $i$ to $j$ direction.
\begin{eqnarray*}\vspace{-.1in}
\msg{\var_i}{\potfun_{{ij}}}(\var_i) & = & \left( \begin{array}{c}
					a \\
					b + f
				  \end{array} \right)\\
\msg{\potfun_{{ij}}}{\var_j}(\var_j) & = & \min_{\var_i} \left[ \potfun_{{ij}}(\var_i, \var_j) + \msg{\var_i}{\potfun_{{ij}}}(\var_i)  \right] \\
& = & \min_{\var_i} \left[ \potval_{ij}^{10} \cdot \var_i(1 - \var_j) + \potval_{ij}^{01} \cdot (1 - \var_i)\var_j \right. \\
&  & + \left. a \cdot (1 - \var_i) + (b + f) \cdot \var_i \right] \\
%\msg{\potfun_{{ij}}}{\var_j}(0) & = & \min_{\var_i} \left[ \potval_{ij}^{10} \cdot \var_i + a \cdot (1 - \var_i) + (b + f) \cdot \var_i \right] \\
\msg{\potfun_{{ij}}}{\var_j}(0) & = & \min(\potval_{ij}^{10} + b + f, a) = a \\
%\msg{\potfun_{{ij}}}{\var_j}(1) & = & \min_{\var_i} \left[ \potval_{ij}^{01} \cdot (1 - \var_i) + a \cdot (1 - \var_i) + (b + f) \cdot \var_i \right] \\
\msg{\potfun_{{ij}}}{\var_j}(1) & = & \min(a + \potval_{ij}^{01}, b + f) = b + f \\
\msg{\potfun_{{ij}}}{\var_j}(\var_j) & = & \left( \begin{array}{c}
					a \\
					b + f
				  \end{array} \right)
\end{eqnarray*}
where the final evaluation of the $\min$ functions used the assumptions that $\potval_{ij}^{10} \ge a$ and $\potval_{ij}^{01} \ge b + f$.
\end{proof}

{\bf Lemma \ref{lem:pairwise_reparam} }
\emph{The change in pairwise belief on the current augmenting path $\augpath(t)$
from the beginning of an iteration $t$ to the end of an iteration is
\begin{align}
\Delta b_{ij}(\var_i, \var_j)& = \! \left[ \begin{array}{cc}
					\!\!0   & \!\!\! -f \!\! \\
					\!\! + f & \!\!\! 0 \!\!
				  \end{array} \right] + f\;\;\;\; \hbox{$ij \in \augpath(t)$.}
\end{align} }
\begin{proof}
At the start of the iteration, message $\msg{\var_i}{\potfun_{ij}}(\var_i) = (a, b)^T$ for some $a,b$.
As mentioned in the proof of Corollary \ref{cor:msg_values}, during APMP, $\msg{\var_j}{\potfun_{ij}}(\var_j)$
will be incremented by exactly the same values as $\msg{\var_i}{\potfun_{ij}}(\var_i)$, except in opposite
positions.  All messages are initialized to 0, so $\msg{\var_j}{\potfun_{ij}}(\var_j) = (b, a)^T$.
The initial belief is then
\begin{align}
b^{\hbox{\tiny{init}}}_{ij}(\var_i, \var_j)& = \! \left[ \begin{array}{cc}
					\!\!a + b   & \!\!\! \potval_{ij}^{01} + 2 a \!\! \\
					\!\! \potval_{ij}^{10} + 2 b & \!\!\! a + b \!\!
				  \end{array} \right] \\
& = \! \left[ \begin{array}{cc}
					\!\! 0   & \!\!\! \potval_{ij}^{01} + a - b \!\! \\
					\!\! \potval_{ij}^{10} + b - a & \!\!\! 0 \!\!
				  \end{array} \right] + \kappa_1 \hbox{.} 				
\end{align}
After passing messages on $\augpath(t)$, $\msg{\var_i}{\potfun_{ij}}(\var_i) = (a, b+f)^T$
and $\msg{\var_j}{\potfun_{ij}}(\var_j) = (b+f, a)^T$.  The new belief is
\begin{align}
b^{\hbox{\tiny{final}}}_{ij}(\var_i, \var_j)& = \! \left[ \begin{array}{cc}
					\!\!a + b + f   & \!\!\! \potval_{ij}^{01} + 2 a\!\! \\
					\!\! \potval_{ij}^{10} + 2 b +2 f & \!\!\! a + b + f\!\!
				  \end{array} \right] \\
& = \! \left[ \begin{array}{cc}
					\!\! 0  & \!\!\! \potval_{ij}^{01} + a - b - f\!\! \\
					\!\! \potval_{ij}^{10} + b - a + f & \!\!\! 0 \!\!
				  \end{array} \right] + \kappa_2 \hbox{.} 				
\end{align}
Here $\kappa_1 = a + b$ and $\kappa_2 = a + b + f$.
Subtracting the initial belief from the final belief finishes the proof:
\begin{align}
\Delta b_{ij}(\var_i, \var_j)& = \! \left[ \begin{array}{cc}
					\!\! 0  & \!\!\! -f\!\! \\
					\!\! f & \!\!\! 0\!\!
				  \end{array} \right] + f \hbox{.} 				
\end{align}
\end{proof}

{\bf Messages at the end of Phase 1 define homogeneous islands: }

We prove that messages
at the end of Phase 1 define homogeneous islands in two parts:

\begin{lem}[Binary Mask Property] \label{lem:binary_mask}
If a pairwise factor $\potfun_{ij}$ computes outgoing message $\msg{\potfun_{ij}}{j}(\var_j)=(0,0)^T$ given incoming
message $\msg{i}{\potfun_{ij}}(\var_i)=(\alpha,0)^T$ for some $\alpha > 0$, then it will compute the same
$(0,0)^T$ outgoing message given any incoming message of the form, $\msg{i}{\potfun_{ij}}(\var_i)=(\alpha',0)^T, \alpha' \ge 0$.
(The same is true of messages with a zero in the opposite position.)
\end{lem}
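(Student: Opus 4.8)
The plan is to observe that the hypothesis pins down one off-diagonal entry of the pairwise potential to be $0$, after which the conclusion is immediate. First I would substitute the canonical form $\potfun_{ij}(0,0)=\potfun_{ij}(1,1)=0$, $\potfun_{ij}(0,1)=\potval_{ij}^{01}$, $\potfun_{ij}(1,0)=\potval_{ij}^{10}$ into the min-sum update \eqref{eq:strict_mp}. With the incoming message $\msg{i}{\potfun_{ij}}(\var_i)=(\alpha,0)^T$ (zero in the ``$\var_i=1$'' slot), the update yields, before normalization,
\begin{align*}
\msg{\potfun_{ij}}{\var_j}(0) &= \min(\alpha,\; \potval_{ij}^{10}), &
\msg{\potfun_{ij}}{\var_j}(1) &= \min(\potval_{ij}^{01}+\alpha,\; 0) = 0,
\end{align*}
where the second equality uses $\potval_{ij}^{01},\alpha\ge 0$. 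Both entries are nonnegative, so standard normalization subtracts $0$ and the outgoing message is exactly $(\min(\alpha,\potval_{ij}^{10}),0)^T$.

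Next I would invoke the hypothesis: this outgoing message equals $(0,0)^T$, hence $\min(\alpha,\potval_{ij}^{10})=0$, and since $\alpha>0$ this forces $\potval_{ij}^{10}=0$. Now plug any other incoming message $\msg{i}{\potfun_{ij}}(\var_i)=(\alpha',0)^T$ with $\alpha'\ge 0$ into the same computation: the outgoing message is $(\min(\alpha',\potval_{ij}^{10}),0)^T=(\min(\alpha',0),0)^T=(0,0)^T$, which is the claim. The parenthetical ``opposite position'' case---incoming $(0,\alpha)^T$ producing outgoing $(0,0)^T$---runs identically: the analogous substitution gives outgoing $(0,\min(\potval_{ij}^{01},\alpha))^T$, so the hypothesis forces $\potval_{ij}^{01}=0$, and then any $(0,\alpha')^T$ maps to $(0,0)^T$. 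The other message direction, $\msg{j}{\potfun_{ij}}\to\msg{\potfun_{ij}}{\var_i}$, follows by swapping the roles of $i$ and $j$ together with $\potval_{ij}^{01}\leftrightarrow\potval_{ij}^{10}$.

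I do not expect any real obstacle here. The only points needing care are bookkeeping: checking that the pre-normalization vector already has minimum entry $0$ so normalization does not disturb which coordinates vanish, and keeping straight which off-diagonal potential entry is ``switched off'' for a given direction and sign convention. Everything reduces to a single substitution into the update already written out in \eqref{eq:strict_mp}, so the proof is only a few lines.
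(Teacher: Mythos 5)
Your proof is correct and follows essentially the same route as the paper's: plug the incoming message into the min-sum update in canonical form, observe the outgoing message is $(\min(\alpha,\potval_{ij}^{10}),0)^T$, conclude the relevant off-diagonal potential entry is zero, and then note the update is $(0,0)^T$ for any $\alpha'\ge 0$. In fact your bookkeeping is slightly cleaner than the paper's, which concludes ``$\potval_{ij}^{01}$ must be $0$'' where its own computation forces $\potval_{ij}^{10}=0$ (a superscript slip), exactly as you state.
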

\begin{proof}
This essentially follows from plugging in values to message update equations.
Suppose $\msg{i}{\potfun_{ij}}(\var_i)=(\alpha,0)^T$
and $\msg{\potfun_{ij}}{j}(\var_j)=(0,0)^T$.  Plugging into
the message update equation, we see that,
\begin{eqnarray*}\vspace{-.1in}
\msg{\potfun_{{ij}}}{\var_j}(\var_j) & = & \min_{\var_i} \left[ \potfun_{{ij}}(\var_i, \var_j) + \msg{\var_i}{\potfun_{{ij}}}(\var_i)  \right] \\
& = & \min_{\var_i} \left[ \potval_{ij}^{10} \cdot \var_i(1 - \var_j) + \potval_{ij}^{01} \cdot (1 - \var_i)\var_j \right. \\
&  & + \left. \alpha \cdot (1 - \var_i) \right] \\
%\msg{\potfun_{{ij}}}{\var_j}(0) & = & \min_{\var_i} \left[ \potval_{ij}^{10} \cdot \var_i + a \cdot (1 - \var_i) + (b + f) \cdot \var_i \right] \\
\msg{\potfun_{{ij}}}{\var_j}(0) & = & \min(\potval_{ij}^{10}, \alpha)  \\
%\msg{\potfun_{{ij}}}{\var_j}(1) & = & \min_{\var_i} \left[ \potval_{ij}^{01} \cdot (1 - \var_i) + a \cdot (1 - \var_i) + (b + f) \cdot \var_i \right] \\
\msg{\potfun_{{ij}}}{\var_j}(1) & = & \min(\alpha + \potval_{ij}^{01}, 0) = 0\\
\msg{\potfun_{{ij}}}{\var_j}(\var_j) & = & \left( \begin{array}{c}
					\min(\potval_{ij}^{10}, \alpha) \\
					0
				  \end{array} \right) \hbox{.}
\end{eqnarray*}
In order for this to evaluate to $(0,0)^T$ when $\alpha>0$, $\potval_{ij}^{01}$ must be 0.
Since $\potval_{ij}^{01} = 0$, no matter what value of $\alpha' \ge 0$ we are given, 
it is clear that $\min(\potval_{ij}^{10}, \alpha') = 0$.
\end{proof}

\begin{lem}[Iterated Homogeneity] \label{lem:iterated_homogeneity}
Homogeneous islands of type $(\alpha, 0)$ (or $(0, \beta)$) are closed under 
passing Strict MP messages between variables in the island.  That is, a variable that
starts with belief $(\alpha, 0)^T, \alpha \ge 0$ will have belief $(\alpha', 0)^T, \alpha' \ge 0$
after any number of rounds of message passing.
\end{lem}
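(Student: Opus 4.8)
The plan is to prove this by induction on the number of parallel Strict MP message-update rounds, maintaining the invariant that, inside an $(\alpha,0)$-type island, every factor-to-variable message, every variable-to-factor message, and every unary belief has the form $(\gamma,0)^T$ with $\gamma \ge 0$; the symmetric $(0,\beta)$ case is identical with the two coordinates swapped. The base case is immediate: at the start of Phase 2 all messages are reset to $0$, so beliefs equal the reparameterized unary potentials $\potfun_i$, which inside the island are of the form $(\alpha_i,0)^T$, $\alpha_i \ge 0$, by the definition of a homogeneous island (seeds have $\alpha_i > 0$, non-seeds have $\alpha_i = 0$), and the all-zero messages trivially satisfy the invariant.

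For the inductive step, assume every variable-to-factor message entering a pairwise factor $\potfun_{ij}$ inside the island has the form $(\gamma,0)^T$, $\gamma \ge 0$. Reparameterization preserves the canonical form with $\potval_{ij}^{01},\potval_{ij}^{10} \ge 0$ (the Phase 1 increment and REDUCE steps keep both entries nonnegative), so plugging $\msg{\var_i}{\potfun_{ij}} = (\gamma,0)^T$ into the factor-to-variable update \eqref{eq:strict_mp}, exactly as in the computation in the proof of Lemma \ref{lem:binary_mask}, gives
\begin{equation*}
\msg{\potfun_{ij}}{\var_j}(0) = \min(\gamma,\potval_{ij}^{10}) \ge 0, \qquad \msg{\potfun_{ij}}{\var_j}(1) = \min(\gamma + \potval_{ij}^{01}, 0) = 0,
\end{equation*}
so the outgoing message is again of the form $(\delta,0)^T$ with $\delta \ge 0$. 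Forming the new variable-to-factor messages and the new beliefs by summing such messages together with the unary potential $\potfun_i$ (itself $(\alpha_i,0)^T$, $\alpha_i \ge 0$) preserves the invariant, since a sum of vectors whose second coordinate is $0$ and first coordinate nonnegative has the same property; and standard message normalization subtracts the minimum entry, which is $0$ here, so it leaves the form unchanged. This closes the induction, and since it holds for every round, the belief at each variable in the island remains of the form $(\alpha_i',0)^T$ with $\alpha_i' \ge 0$ after any number of rounds.

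The step I would be most careful about is the bookkeeping around normalization and the unary factor: one must verify that the constant subtracted from a message of the form $(\delta,0)^T$ is \emph{exactly} $0$, so the invariant holds literally and not merely up to an additive constant, and that the unary potential $\potfun_i$ is correctly included in the variable-to-factor and belief sums. Both are straightforward, since $\min(\delta,0)=0$ for $\delta \ge 0$ and $\potfun_i$ is $(\alpha_i,0)^T$ inside the island, but they are worth stating explicitly. No assumption on the island's loop structure is needed: on loopy islands the first coordinates may keep growing from round to round, yet the two-coordinate form is preserved at every step, with convergence of those magnitudes handled separately in Lemma \ref{lem:internal_convergence}.
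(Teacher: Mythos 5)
Your proof is correct and follows essentially the same route as the paper's: both arguments show that the one-sided form $(\gamma,0)^T$ is preserved by the min-sum update at a submodular pairwise factor with nonnegative canonical entries (the same computation as in Lemma \ref{lem:binary_mask}) and by summation at variables, with the paper's proof being an implicit version of your explicit round-by-round induction. Your added care about normalization subtracting exactly $0$ and about nonnegativity of the reparameterized pairwise entries is sound and consistent with the paper's reasoning.
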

\begin{proof}
Initially, all beliefs have the form $(\alpha_i, 0)^T, \alpha_i \ge 0$ by definition.
Given an incoming message of the
form $(\alpha, 0)^T, \alpha \ge 0$, a submodular pairwise factor will compute outgoing message
$(\min(\alpha, \potval_{ij}^{10}), 0)^T$, where $\potval_{ij}^{10} \ge 0$.  The minimum of
two non-negative quantities is positive.  Variable to factor messages will sum messages of
this same form, and the sum of two non-negative quantities is non-negative.  Thus, all messages passed within the island
will be of the form $(\alpha, 0)^T, \alpha \ge 0$, which beliefs will be of the proper form.
Lemma \ref{lem:binary_mask}
shows that edges previously defining the boundary of the island will still define the boundary
of the island.  
The case of incoming message $(0, \beta)^T$ is analogous.
\end{proof}

{\bf Lemma 4. }
\emph{At the end of Phase 1, the messages of \ouralg~define a collection of homogeneous islands.}
\begin{proof}
(Sketch) This is essentially equivalent to the max-flow min-cut theorem, which proves
the optimality of the Ford-Fulkerson algorithm when no more augmenting paths can
be found.  In our formulation, at the end of Phase 1, there are by definition no 
paths with nonzero capacity, which implies that along any path between a variable $i$
with belief $(\alpha,0)^T, \alpha>0$ and a variable $k$ with belief $(0,\beta)^T, \beta>0$,
there must be a factor-to-variable message that given incoming message $(\alpha, 0)^T, \alpha >0$
would produce outgoing message $(0,0)^T$. (This is similarly true of opposite direction messages.)

Thus, to define the islands, start at each variable will nonzero belief, say of the form $(\alpha, 0)^T$,
and search outwards
by traversing each edge iff it would pass a nonzero message given incoming message $(\alpha, 0)^T$.
Merge all variables encountered along the search into a single homogeneous island.
%We can then apply Lemma \ref{lem:iterated_homogeneity} to ensure that 
%after passing any number of messages within an island, the form of beliefs will not change.
%This satisfies all conditions of the definition of a homogeneous island.
\end{proof}

\commentout{
{\bf Setting of \shortciteA{SonGloJaa_optbook}  Parameters to Achieve Graph Cuts on Length 2 Augmenting Path:}
Assuming the general form of potentials,
\begin{equation}\label{eq:graph_cuts}
\begin{array}{cccc}
\potfun_{1} & \potfun_{12} &  \potfun_{2}   & \potfun_{const}\\
\left(
    \begin{array}{c}
    0  \\
    f
    \end{array}
\right)
&
\left(
    \begin{array}{cc}
         0  &  c_1  \\
         c_2 &  0
    \end{array}
\right)
&
\left(
    \begin{array}{c}
    g  \\
    0
    \end{array}
\right)
& 0
\end{array}
\end{equation}
The updates in Eq.\ \eqref{eq:reparam_alpha_min_unary}-\eqref{eq:reparam_alpha_min_pairwise} result in the following reparameterization:
\begin{equation}
\potfun_{1} =
\alpha_1 \left(
    \begin{array}{c}
    \min(g,c_1) \\
    f
    \end{array}
\right)
\end{equation}
\begin{equation}
\potfun_{2} =\alpha_2\left(
    \begin{array}{c}
    g \\
    \min(f,c_1)
    \end{array}
\right)
\end{equation}
\tiny
\begin{equation}
\potfun_{12} =\left(
    \begin{array}{cc}
         (1-\alpha_1)f-\alpha_2\min(f,c_1)  & c_1-\alpha_1\min(g,c_1)-\alpha_2\min(f,c_1)   \\
         c_2+(1-\alpha_1)f+(1-\alpha_2)g  &  (1-\alpha_2)g-\alpha_1\min(g,c_1)
    \end{array}
\right)
\end{equation}
\normalsize
And since the graph cuts update is given by
\begin{equation}\label{eq:graph_cuts_reparameterized}
\begin{array}{cccc}
\potfun_{1} & \potfun_{12} &  \potfun_{2}& \potfun_{const} \\
\left(
    \begin{array}{c}
    0  \\
    f-b
    \end{array}
\right)
&
\left(
    \begin{array}{cc}
         0  &  c_1-b  \\
         c_2+b &  0
    \end{array}
\right)
&
\left(
    \begin{array}{c}
    g-b  \\
    0
    \end{array}
\right)
&
b
\end{array}
\end{equation}
where the bottleneck capacity is $b=\min(f,c_1,g)$.  It is easy to verify that the following set of $\alpha$'s---which
depend on which position the bottleneck capacity is in---obtains the graph cuts reparameterization (up to normalization of constants to canonical form, which is added to the constant function): $\alpha_1 = 1-\true{b=f},\alpha_2 = 1-\true{b=g},\alpha_{12} = 0-\true{b=c_1}$.
}

\end{document}